\colorlet{Changes@Color}{green!60!black}
\newtheorem{lem}{Lemma}
\newtheorem{rem}{Remark}
\newtheorem{cor}{Corollary}
\newtheorem{conj}{Conjecture}
\title[PAC-Bayesian Optimal Posteriors for Stochastic Classifiers]{Optimal PAC-Bayesian Posteriors for Stochastic Classifiers and their use for Choice of SVM Regularization Parameter}
   \author{\Name{Puja Sahu} \Email{puja.sahu@iitb.ac.in}\and
   \Name{Nandyala Hemachandra} \Email{nh@iitb.ac.in}\\
   \addr Indian Institute of Technology Bombay, Mumbai, India}
\begin{document}

\maketitle

\begin{abstract}
PAC-Bayesian set up involves a stochastic classifier characterized by a posterior distribution on a classifier set, offers a high probability bound on its averaged true risk and is robust to the training sample used. For a given posterior, this bound captures the trade off between averaged empirical risk and KL-divergence based model complexity term. Our goal is to identify an optimal posterior with the least PAC-Bayesian bound. We consider a finite classifier set and 5 distance functions: KL-divergence, its Pinsker’s and a sixth degree polynomial approximations; linear and squared distances. Linear distance based model results in a convex optimization problem and we obtain a closed form expression for its optimal posterior. For uniform prior, this posterior has full support with weights negative-exponentially proportional to number of misclassifications. Squared distance and Pinsker’s approximation bounds are possibly quasi-convex and are observed to have single local minimum. We derive fixed point equations (FPEs) using partial KKT system with strict positivity constraints. This obviates the combinatorial search for subset support of the optimal posterior. For uniform prior, exponential search on a full-dimensional simplex can be limited to an ordered subset of classifiers with increasing empirical risk values. These FPEs converge rapidly to a stationary point, even for a large classifier set when a solver fails. We apply these approaches to SVMs generated using a finite set of SVM regularization parameter values on 9 UCI datasets. The resulting optimal posteriors (on the set of regularization parameters) yield stochastic SVM classifiers with tight bounds. KL-divergence based bound is the tightest, but is computationally expensive due to its non-convex nature and multiple calls to a root finding algorithm. Optimal posteriors for all 5 distance functions have lowest 10\% test error values on most datasets, with that of linear distance being the easiest to obtain.
\end{abstract}

\begin{keywords}
{KL divergence, generalized Pinsker's inequality, convex optimization, constrained non-convex optimization, Fixed Point Equations, averaged true risk, Bayesian posterior, high probability bounds on true risk}
\end{keywords}

\section{Introduction and Motivation}
Often we are faced with the issue of choosing a parameter of the learning algorithm, since this parameter has a significant role in determining the performance of the resulting classifier. For example, consider the Support Vector Machine (SVM) algorithm for classification with the regularization parameter, $\lambda > 0$. This parameter is a user input which trades off between model complexity and training error. The optimal classifier that we get, depends heavily on the sample $S$ that is used for training and the value of the parameter, $\lambda$. We can control only this parameter value for obtaining a classifier with low (training) error, \textit{but not} the given data. For a given training sample, we can choose the best value of the parameter from a prefixed set of values, which yields a classifier with the lowest error. However, this is a long drawn process. Additionally, there is no guarantee that the chosen value will yield a classifier having low(est) error on another sample from the same distribution. This implies that the best parameter value is sample dependent and that there is no unique value which is best for almost all the samples. However, if we determine the set of $\lambda$ values with lowest $\rho\%$ error rates on each sample, we observe a recurring subset of $\lambda$ values across the training samples. 
(\textit{See Appendix A in the Suppl. file for an illustration}.) Thus, we have an ensemble of values to pick from. The PAC-Bayesian approach does such a stochastic selection.


\paragraph{PAC-Bayesian Bounds and Optimal Posteriors} PAC-Bayesian approach assumes an arbitrary but fixed prior distribution on the space of classifiers and outputs a posterior distribution on this space, corresponding to a stochastic classifier. This approach provides a probabilistic bound on the difference between the posterior averaged true  and empirical risk of a stochastic classifier as measured by a convex distance function. For a given posterior, these bounds offer a trade-off between averaged empirical risk and a term which encompasses model complexity of the stochastic classifier. The bound is computed based on a single sample but with a high probability guarantee over different samples (from the same distribution). We are interested in the `optimal PAC-Bayesian posterior'. For a chosen distance function, the optimal posterior is defined as the one which minimizes the corresponding PAC-Bayesian bound. By design, these bounds and the resulting optimal posterior are robust to the choice of training sample, addressing the above sample bias.  

\paragraph{Relevant Work} 
PAC-Bayesian bounds were proposed by  \cite{mcallester2003pac,Seeger02theproof} and refined further by \cite{maurer2004note,langford2005tut,mcallester2013pac} using Bayesian priors and posteriors on the classifier space to provide better performance guarantees. Several authors improvised the bounds for the choice of distance function they considered. While \cite{maurer2004note} provided a bound for the KL-divergence as the distance function, $\phi$, by tightening up the threshold with a factor of $\sqrt{m}$ instead of $m$, \cite{germain2009pac} generalized the framework of PAC-Bayesian bounds for a broader class of convex $\phi$ functions and relaxed the constraints on tail bounds of empirical risks of the classifiers. \added{\cite{catoni2007pac} made an important contribution by considering bounds which are independent of distance function $\phi$, and instead require a parameter $C > 0$. Choice of $C$ can influence the bound on the performance of stochastic classifier just as the choice of $\phi$.} \cite{AmbroladzeNIPS2006} specialized PAC-Bayesian bounds using spherical Gaussian distributions on the space of linear classifiers. \cite{begin2016pac} introduced bounds based on  R{\'e}nyi divergence between posterior and prior distributions. We limit ourselves to KL-divergence based bounds.

All of the above consider a continuous (SVM) classifier space ($n$-dimensional Euclidean space) and continuous prior as well as posterior distributions on it (spherical Gaussian distributions) whereas we consider a finite set of classifiers such as those generated by a finite set of regularization parameter values for the SVM. Our PAC-Bayesian bounds are derived for the set up with a discrete prior distribution, and five different distance functions
between posterior averaged empirical risk and posterior averaged true risk. 

\paragraph{Contributions} We consider \textit{optimal PAC-Bayesian posterior} which minimizes the PAC-Bayesian bound for a given distance function. We consider a finite classifier set and five distance functions: KL-divergence and its two approximations based on Pinsker's inequality and its improvised version (a sixth degree polynomial), linear distance and squared distance. The linear distance based optimal posterior is obtained via a convex program; is shown to have full support, with weights proportional to negative-exponential number of misclassifications when prior is uniform. Bounds based on KL-divergence as distance function and its sixth degree approximation are non-convex. Squared distance and Pinsker's approximation are possibly quasi-convex because they are observed to have single local minimum. We simplify the search for optimal posteriors via Fixed Point equations deduced from the partial KKT system with strict positivity constraints. We use these approaches on the set of SVMs generated by a finite set of regularization parameter values. This leads us to the notion of a \textit{stochastic SVM} characterized by an optimal posterior on the regularization parameter set. KL-distance yields the tightest bound, but is non-convex and has computational overhead of determining the root. All five distance functions have good generalization performance (lowest 10\% test error values) on most datasets considered, except for Bupa dataset and two almost linearly separable datasets, Banknote and Mushroom. Table \ref{tab:KL.resultoutline} describes theoretical and computational aspects of these optimal posteriors.

\paragraph{Outline} In Section \ref{secn:genericPACBbnd}, we consider PAC-Bayesian optimal posterior as the one minimizing the bound
, and propose a Fixed Point (FP) scheme based on the partial KKT system. 
We analyze optimal posteriors for five distance functions: KL-distance (Section \ref{secn:klPACB}), its approximations (Section \ref{secn:klapproxmnPACB}), linear and squared distances (Sections \ref{secn:linPACB} and \ref{secn:sqPACB}). These approaches are applied to a set of SVMs (Section \ref{secn:SVM_PACB}) with summary in Section \ref{secn:conclusion}.

\renewcommand{\arraystretch}{3}
{ 
\begin{sidewaystable}[htp]
\begin{center}
\bgroup
\small
\def\arraystretch{0.99}%
{\setlength{\tabcolsep}{0.1em}
\begin{tabular}{|c|c|c|c|c|}
\hline
 \makecell{{\textbf{Distance}} \\ {\textbf{fn $\phi$}}} & \multicolumn{4}{c|}{\normalsize {\textbf{Theoretical Aspects}}}  \\
\cline{2-5} 
  & {$\mathbf{\mathcal{I}^{K}_{\phi}} (m)$} &  {\textbf{Convexity}} & {\textbf{Global min}} &  {\textbf{Fixed Point (FP)}} \\
\hline 
$\phi_{\text{lin}}$ & Not required & Convex & $q^{\ast}_i \propto p_ie^{-m \hat{l}_i}$ & Not required \\
\hline
$\phi_{\text{sq}}$ & \makecell{{\tiny$\sum\limits_{k=0}^{m}{m \choose k} 0.5^m e^{m\left(\frac{k}{m}-0.5\right)^2}$}\\
 approximated by $2\sqrt{m}$} & \makecell{possibly quasi-convex}& \makecell{closed form \\ may not exist} & \makecell{ $ q^{FP}_{i, \text{sq, KL}} \propto p_ie^{ \left(-2\sqrt{m}\hat{l}_i \sqrt{\sum\limits_{i=1}^{H}q^{\text{FP}}_{i, \text{sq, KL}}\ln\frac{q^{\text{FP}}_{i, \text{sq, KL}}}{p_i} + \ln \frac{\mathcal{I}^{K}_{\text{sq}}(m)}{\delta}}\right)} $}\\
\hline
$kl$ & \makecell{$2\sqrt{m}$ \\(due to \\ \cite{maurer2004note})}& \makecell{Non-convex; \\ Difference of Convex  \\ (DC) functions}& \makecell{closed form \\ may not exist}& \makecell{$q^{FP}_{i, \text{kl, KL}}$ satisfies: \\ $q_{i} = p_i \exp \left \lbrace \sum\limits_{i=1}^{H}q_i\ln \frac{q_i}{p_i} - m\left(\sum\limits_{i=1}^{H}\hat{l}_iq_i - \hat{l}_i \right) \left[ \ln\left( \frac{(1 - r) \sum\limits_{i = 1}^H \hat{l}_i q_i}{r(1-\sum\limits_{i = 1}^H \hat{l}_i q_i)}\right) \right] \right \rbrace$ }\\
\hline
$\phi_{\text{P}}$ & \makecell{ approximated by $2\sqrt{m}$}  & possibly quasi-convex & \makecell{closed form \\ may not exist} & $q^{FP}_{i, \text{P, KL}} \propto p_ie^{ \left(-2\sqrt{2m}\hat{l}_i \sqrt{\sum_{i=1}^{H}q^{FP}_{i, \text{P, KL}}\ln\frac{q^{FP}_{i, \text{P, KL}}}{p_i} + \ln \frac{2\sqrt{m}}{\delta}}\right)}$
\\
\hline
$\phi_{\text{CH}}$ & \makecell{$0.9334m$ \\(due to \\ \cite{PACBIntervals})} & \makecell{shown to be \\ non-convex} & \makecell{closed form \\ may not exist}  &  \makecell{$q^{FP}_{i, \text{CH, KL}} \propto p_i\exp{\left\lbrace -(2m-1) \hat{l}_i \frac{ 2 \sqrt
{r_\text{CH}(R(Q^{FP}_{\text{CH, KL}}))}}{\frac{\partial
r_\text{CH}}{\partial R}}\right\rbrace }$ \\
($r_\text{CH}(R(Q))$ is the root of $\phi_{\text{CH}}$ for a given $\mathbb{E}_Q[\hat{l}]$ in \eqref{eqn:BCHKL})}\\
\hline
\end{tabular}}
\egroup
\end{center}

\begin{center}
\centering
\bgroup
\def\arraystretch{0.99}%
{\setlength{\tabcolsep}{0.2em}
\begin{tabular}{|c|c|c|c|}
\hline
 \makecell{{\textbf{Distance}} \\ {\textbf{fn $\phi$}}} &  \multicolumn{3}{c|}{\small {\textbf{Computations}}} \\
\cline{2-4} 
  &  {\textbf{Solver (\texttt{Ipopt}) output}}  & {\textbf{Global minima}} & {\textbf{Fixed Point (FP)}}\\
\hline 
$\phi_{\text{lin}}$ &  identifies global minima & \makecell{identified analytically} & Not required\\
\hline
\makecell{$\phi_{\text{sq}}$ \\ $\phi_{\text{P}}$ \\ $\phi_{\text{CH}}$} & \makecell{identifies a unique (local) minima \\
 even with different initializations} & \makecell{closed form \\ may not exist} & matches solver output \\
\hline
$kl$ & \makecell{identifies multiple local minima  \\
 with different  initializations; \\
 throws up error for large $H$;\\
 especially for almost separable data} &  \makecell{closed form \\ may not exist} & \makecell{identifies same stationary point \\
 even with different initializations} \\
\hline
\end{tabular}}
\egroup
\end{center}
\caption[An outline of theoretical aspects and computational results for the KL-divergence based PAC-Bayesian bound minimization problem]{\small An outline of theoretical aspects and computational results for optimal posteriors $Q^{\ast}_{\phi, \text{KL}} = \lbrace q^{\ast}_{i, \phi, \text{KL}} \rbrace_{i = 1}^H$ for minimization of the PAC-Bayesian bound, $B_{\phi, \text{KL}}(Q)$, based on KL-divergence $KL[Q||P] = \sum_{i = 1}^H q_i \ln \frac{q_i}{p_i}$ between a posterior $Q$ and a prior $P$ on the classifier space $\mathcal{H}$. We consider five different distance functions, $\phi$: KL-divergence $kl(\hat{l}, l) = \hat{l}\ln \frac{\hat{l}}{l} + (1 - \hat{l}) \ln \left( \frac{1 - \hat{l}}{1 - l} \right)$, its Pinsker's approximation $\phi_\text{P}(\hat{l}, l) = 2(l - \hat{l})^2$ and a tighter approximation (a sixth degree polynomial) $\phi_\text{CH} = (l - \hat{l})^2 + \tfrac{2}{9}(l - \hat{l})^4 + \tfrac{16}{135}(l - \hat{l})^6$; linear $\phi_{\text{lin}} (\hat{l}, l) = l - \hat{l}$ and squared distances $\phi_{\text{sq}}(\hat{l}, l) = (l - \hat{l})^2$ for $l, \hat{l} \in (0,1)$.  $H$ denotes the classifier set size and $H^{\ast}$ denotes the size of the support set of the optimal posterior $Q^{\ast}_{\phi, \text{KL}}$. $\hat{l}_i$ denotes empirical risk value of classifier $ h_i \in \mathcal{H}$ computed on a sample of size $m$. ${\mathcal{I}^{K}_{\phi}} (m)$ is a sample size based constant for a distance function $\phi$. It is a component of the bound function $B_{\phi, \text{KL}}(Q)$. \label{tab:KL.resultoutline}}
\end{sidewaystable}
}
\renewcommand{\arraystretch}{1}

\section{PAC-Bayesian Bound Minimization, Optimal Posteriors and the Fixed Point Approach} \label{secn:genericPACBbnd}
We recall the general version of the PAC-Bayesian theorem \cite{germain2009pac, begin2016pac} for a given distance function and describe the notion of a PAC-Bayesian optimal posterior which minimizes the bound derived from the PAC-Bayesian theorem.
\begin{theorem}
[PAC-Bayesian Theorem \cite{germain2009pac, begin2016pac}] For any data distribution $\mathcal{D}$ over input space $\mathcal{X} \times \mathcal{Y}$, the following bound holds for any prior $P$ over the set of classifiers $\mathcal{H}$ and any $\delta \in (0, 1)$, where the probability is over random i.i.d. samples $S_m = \{(x_i, y_i) | i = 1, \ldots , m\}$ of size $m$ drawn from $\mathcal{D}$, for any convex function $\phi : [0,1] \times [0, 1] \rightarrow \mathbb{R}$:
\begin{equation}
\mathbb{P}_{S_m} \left \lbrace \forall Q \text{ on } \mathcal{H}: \; \phi\left(\mathbb{E}_Q [\hat{l}], \mathbb{E}_Q[l] \right) \leq \frac{KL[Q||P] + \ln\left( \frac{\mathbb{E}_{S \sim \mathcal{D}^m} \mathbb{E}_{h \sim P} e^{m \phi(\hat{l}, l)}}{\delta} \right)}{m} \right \rbrace \geq 1 - \delta. \label{eqn:genPAC_KLRHS}
\end{equation}
Here, $Q$ is an arbitrary posterior distribution on $\mathcal{H}$, which may depend on the sample $S_m$ and on the prior $P$. $\mathbb{E}_Q [\hat{l}] := \mathbb{E}_{h \sim Q} \sum_{i = 1}^m \frac{1}{m}[l(h, \mathbf{x}_i, y_i)]$ denotes the averaged empirical risk  and $\mathbb{E}_Q [l] := \mathbb{E}_{h \sim Q} \mathbb{E}_{(\mathbf{x}, y) \sim \mathcal{D}} [l]$ denotes averaged true risk of a classifier $ h \in \mathcal{H}$ computed using a loss function, $l(h, \mathbf{x}, y): \mathcal{H} \times \mathcal{X} \times \mathcal{Y} \rightarrow [a, b)$ (here, $0 \leq a < b$).
\end{theorem}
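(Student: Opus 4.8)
The plan is to establish the bound through three classical ingredients: Markov's inequality applied to a carefully chosen \emph{prior}-averaged exponential moment, the change-of-measure (Donsker--Varadhan) inequality, and Jensen's inequality exploiting convexity of $\phi$. The crucial structural observation driving the argument is that the quantity to which we apply Markov must not depend on the posterior $Q$, because the claim is a \emph{uniform} statement over all $Q$ on $\mathcal{H}$; this dictates the order in which the steps are carried out.

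First I would define the sample-dependent random variable $X_{S_m} := \mathbb{E}_{h\sim P}\, e^{m\phi(\hat{l}, l)}$, where $\hat{l} = \hat{l}(h, S_m)$ and $l = l(h)$. Since $X_{S_m} \geq 0$, Markov's inequality gives, for the stated $\delta$,
$$\mathbb{P}_{S_m}\left\{ X_{S_m} \leq \tfrac{1}{\delta}\, \mathbb{E}_{S\sim\mathcal{D}^m} X_S \right\} \geq 1 - \delta.$$
Here $\mathbb{E}_{S} X_S = \mathbb{E}_{S\sim\mathcal{D}^m}\mathbb{E}_{h\sim P}\, e^{m\phi(\hat{l}, l)}$ is exactly the numerator inside the logarithm of the claimed bound, and, crucially, this event depends only on the draw of $S_m$ and on the fixed prior $P$, not on any $Q$.

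Next I work on the above high-probability event. For an arbitrary posterior $Q$ I invoke the change-of-measure inequality: for any function $f$ and any distributions $Q, P$ on $\mathcal{H}$,
$$\mathbb{E}_{h\sim Q}[f(h)] \leq KL[Q\|P] + \ln \mathbb{E}_{h\sim P}[e^{f(h)}],$$
which follows from the nonnegativity of $KL[Q\|P_f]$ for the tilted measure $P_f \propto e^f P$ (equivalently, from Jensen applied to $-\ln$). Taking $f(h) = m\phi(\hat{l}, l)$ and substituting the Markov bound on $\mathbb{E}_{h\sim P}\, e^{m\phi(\hat{l},l)}$ yields
$$m\,\mathbb{E}_Q[\phi(\hat{l}, l)] \leq KL[Q\|P] + \ln\!\left( \frac{\mathbb{E}_{S}\mathbb{E}_{h\sim P}\, e^{m\phi(\hat{l}, l)}}{\delta}\right).$$
Finally, convexity of $\phi$ gives, by Jensen, $\phi(\mathbb{E}_Q[\hat{l}], \mathbb{E}_Q[l]) \leq \mathbb{E}_Q[\phi(\hat{l}, l)]$; chaining this with the previous display and dividing by $m$ produces the claimed inequality.

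The main obstacle --- really the subtle point that makes the ``$\forall Q$'' valid --- is the ordering of the steps. I must bound the $Q$-free prior moment $X_{S_m}$ by Markov \emph{before} introducing any posterior, so that one favorable sample event simultaneously supports the bound for every $Q$, with both the change-of-measure and Jensen steps holding deterministically on that event. Were one instead to apply Markov to a $Q$-dependent quantity, the favorable event would vary with $Q$, forcing a union bound over the (generally uncountable) family of posteriors; the present argument is precisely designed to sidestep that difficulty.
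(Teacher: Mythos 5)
Your proof is correct and is essentially the standard argument from the cited sources (\cite{germain2009pac,begin2016pac}): Markov's inequality on the $Q$-free prior exponential moment, the Donsker--Varadhan change of measure, and Jensen's inequality via joint convexity of $\phi$, in exactly that order so that one favorable sample event covers all posteriors simultaneously. The paper itself states this theorem as a recalled result without reproducing a proof, so there is no divergence to report; your observation about why Markov must precede the introduction of $Q$ is the right one and is the same structural point that makes the uniform-over-$Q$ claim valid in the original references.
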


For a choice of distance function, $\phi$, the upper bound on $\mathbb{E}_{S \sim \mathcal{D}^m} \mathbb{E}_{h \sim P} e^{m \phi(\hat{l}(h), l(h))}$ determines the tightness of PAC-Bayesian bound. \cite{begin2016pac} give \linebreak$\mathcal{I}^{K}_{\phi}(m) :=\sup\limits_{l \in [0, 1]} \left[ \sum_{k = 0}^m \binom{m}{k} l^k (1- l)^{m - k} e^{m \phi(\frac{k}{m}, l)}\right]$ as an upper bound on $\mathbb{E}_{S_m \sim \mathcal{D}^m} \mathbb{E}_{h \sim P} e^{m \phi(\hat{l}(h), l(h))}$.

Thus, with the above upper bound on the right hand side threshold, \eqref{eqn:genPAC_KLRHS} becomes:

\begin{equation}
\mathbb{P}_{S_m} \left \lbrace  \forall Q \text{ on } \mathcal{H}: \;\phi\left(\mathbb{E}_Q [\hat{l}], \mathbb{E}_Q[l] \right) \leq \frac{KL[Q||P] + \ln\left( \frac{\mathcal{I}^{K}_{\phi}(m)}{\delta} \right)}{m} \right \rbrace \geq 1 - \delta. \label{eqn:BeginPACB}
\end{equation}
\added{For illustrating the role of this upper bound, $Q^{\ast}_{\text{sq, KL}}$ is computed with two values: $\mathcal{I}^K_{sq}(m)$ defined by \cite{begin2016pac} and $2\sqrt{m}$ by \cite{maurer2004note}. Bounds with $\mathcal{I}^K_{sq}(m)$ are tighter than those with $2\sqrt{m}$, and test error rates increase only marginally} (Please see Table \ref{tab:phiKLCompare}).

\subsection{Optimal posteriors via PAC-Bayesian bound minimization}
The PAC-Bayesian theorem \eqref{eqn:BeginPACB} gives the following high probability upper bound on averaged true risk, $\mathbb{E}_Q[l]$, assuming distance function $\phi(\mathbb{E}_Q[\hat{l}], \cdot)$ is invertible for given $\mathbb{E}_Q[\hat{l}]$:
\begin{equation}
\hspace{-5mm} B_{\phi, \text{KL}} (Q) \equiv B_{\phi, \text{KL}} (\mathbb{E}_Q[\hat{l}], S_m, \delta, P) = f_{\phi}\left(\mathbb{E}_Q[\hat{l}], \phi^{-1}_{\mathbb{E}_Q[\hat{l}]} \left( \frac{KL[Q||P] + \ln\left( \frac{\mathcal{I}^{K}_{\phi}(m)}{\delta} \right)}{m} \right) \right),
\end{equation}
where $\phi^{-1}_{\mathbb{E}_Q[\hat{l}]}(K) = b$ implies $\phi(\mathbb{E}_Q[\hat{l}], b) = K$ for some $b \in (0, 1)$ and a given $K > 0$. Generally $f_{\phi}(\cdot, \cdot)$ is the sum of its arguments except when $\phi$ is KL-distance function. That is, bound function $B_{\phi, \text{KL}} (Q)$ is the sum of averaged empirical risk, $\mathbb{E}_Q[\hat{l}]$, and a model complexity term which depends on system parameters, $S_m, \delta, P$. 
We are interested in determining an optimal posterior distribution $Q^{\ast}_{\phi, \text{KL}}$ which minimizes $B_{\phi, \text{KL}}(Q)$ for a given $\phi$. 

\subsection{The fixed point approach to determine PAC-Bayesian optimal posterior}
To characterize the minimum of $B_{\phi, \text{KL}}(Q)$, we make use of the first order KKT conditions which are necessary for a stationary point of a non-convex problem. These KKT conditions require the objective function and the active constraints to be differentiable at the local minimum. We derive fixed point (FP) equations for the optimal posterior for various distance functions in \eqref{eqn:qFP_klKL}, \eqref{eqn:qFP_PinKL}, \eqref{eqn:qFP_CHKL} and \eqref{eqn:qFP_sqKL} (with derivations in supplemenatry file). These FP equations use KKT system with strict positivity constraints due to which complementary slackness conditions are automatically satisfied; hence called `\textit{partial}' KKT system. We consider strict positivity constraints on posterior weights to avoid the combinatorial problem of choosing the subset of classifiers which form the support set of the optimal posterior. 
Computations illustrate that these FP equations always converge to a stationary point at a very fast rate, even for a large classifier set when a non-convex solver fails to identify a local solution. (Please see Table \ref{tab:Bnd.klKL} for an illustration of such cases.)

We work with a finite set  of classifiers: $\mathcal{H} = \lbrace h_i \rbrace_{i = 1}^H$ of size $H$. The prior, $P = \lbrace p_i \rbrace_{i =1}^H$ and posterior, $Q =\lbrace q_i \rbrace_{i =1}^H$ are discrete distributions on $\mathcal{H}$, where $p_i, q_i \geq 0 \; \forall i = 1, \ldots, H$ with $\sum_{i = 1}^H p_i = 1$ and $\sum_{i = 1}^H q_i = 1$. For differentiability required by KKT conditions, our objective function should have open domain, that is, the interior of the $H$-dimensional probability simplex: $int(\Delta^H) = \lbrace (q_1, \ldots, q_H) \vert q_i > 0 \; \forall i = 1, \ldots, H; \sum_{i = 1}^H q_i = 1 \rbrace$. In computations, we consider $q_i \geq \epsilon \; \forall i = 1, \ldots, H$ for $\epsilon > 0$ to ensure existence of a minimizer in $int(\Delta^H)$. Our FP equations are derived using partial KKT system on $int(\Delta^H)$.

\section{Optimal posterior, $Q^{\ast}_{\phi, \text{KL}}$,  for uniform prior}
We consider the special case of uniform prior on entire $\mathcal{H}$. We want to identify the optimal posterior $Q^{\ast}_{\phi, \text{KL}}$ with the $H$-dimensional probability simplex as the feasible region. We show below that it is enough to restrict the search space to certain subsets of this simplex. This reduces the computational complexity of the search from exponential scale to linear scale.

\begin{theorem} \label{thm:increasing.subsets.phiKL}
Consider a uniform prior distribution on the set $\mathcal{H}$ of classifiers, and a given set of posterior weights $Q = \lbrace q_j \rbrace_{j = 1}^{H'}$. We have three choices of distance function $\phi = \lbrace \phi_{\text{lin}}, \phi_{\text{sq}}, kl \rbrace$. Then among all subsets $\mathcal{H}' \subset \mathcal{H}$ of size $H'$, the smallest bound value $B_{\phi, \text{KL}}(Q, \mathcal{H}')$ corresponding to the given posterior weights $Q$ is achieved when $\mathcal{H}'$ is the subset formed by the first $H'$ elements of the ordered set of classifiers ranked by non-decreasing empirical risk values, $\hat{l}_1 \leq \hat{l}_2 \leq \ldots \leq \hat{l}_{H}$.
\end{theorem}
\begin{proof} (\textit{Please see Appendix C in suppl. file for other distance functions})
We consider linear distance based bound, $B_{\text{lin, KL}}(Q, \mathcal{H}')$ under the given set up, defined as follows:
\begin{align}
B_{\text{lin, KL}}(Q, \mathcal{H}') &:= \sum_{i\in \mathcal{H'}} \hat{l}_i q_i + \frac{\sum\limits_{i \in \mathcal{H}'}q_i \ln q_i + \ln H + \ln\left( \frac{\mathcal{I}^{K}_{{\text{lin}}}(m)}{\delta} \right)}{m}\label{eqn:BlinKL_unifP.genQ} 
\end{align} 
For a given set of posterior weights $\lbrace q_j \rbrace_{j = 1}^{H'}$, the term $\sum_{i \in \mathcal{H}'}q_i \ln q_i$ of the bound $B_{\text{lin, KL}}(Q, \mathcal{H}')$ is invariant of the support set $\mathcal{H}'$ as long as its cardinality is $H'$. Thus $B_{\text{lin, KL}}(Q, \mathcal{H}')$ is the smallest when the sum $\sum_{i \in \mathcal{H}'} \hat{l}_i q_i$ is minimized. This will happen when $\mathcal{H}'$ consists of classifiers with smallest $H'$ values in the set $\lbrace \hat{l}_i \rbrace_{i = 1}^{H}$.
Furthermore, if the elements of $\mathcal{H}'$ are ordered by non-decreasing empirical risk values, $\hat{l}_1 \leq \hat{l}_2 \leq \ldots \leq \hat{l}_{H'}$, the weights $\lbrace q_j \rbrace_{j = 1}^{H'}$ should be ordered non-increasingly. So, the theorem holds for linear distance function.
\end{proof}
\begin{cor}
As a consequence of the above Theorem \ref{thm:increasing.subsets.phiKL}, for determining the (globally) optimal posterior $Q^{\ast}_{\phi, \text{KL}}$, it is sufficient to compare the bound values corresponding to the best  posteriors on ordered subsets of $\mathcal{H}$, ranked by non-decreasing $\hat{l}_i$ values. These ordered subsets can be uniquely identified by their size.
\end{cor}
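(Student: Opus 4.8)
The plan is to use Theorem~\ref{thm:increasing.subsets.phiKL} as a support-selection oracle and then show that the full combinatorial search over the $2^H$ possible support sets collapses to a search indexed only by support size. First I would describe any feasible posterior $Q$ on $\mathcal{H}$ by a pair $(\mathcal{H}', w)$, where $\mathcal{H}' = \mathrm{supp}(Q)$ has cardinality $H'$ and $w \in \Delta^{H'}$ is the vector of nonzero weights. The crucial observation is that the admissible weight set is the simplex $\Delta^{H'}$, which depends only on the cardinality $H'$ and not on which particular classifiers constitute $\mathcal{H}'$. For uniform prior the $KL$ term $\sum_i q_i \ln q_i + \ln H$ depends only on the weight values, while the averaged empirical risk $\sum_i \hat{l}_i q_i$ depends on the assignment of those values to classifiers; this is precisely the regime in which Theorem~\ref{thm:increasing.subsets.phiKL} applies, identifying the ordered subset $\mathcal{H}'_{H'} := \{h_1, \ldots, h_{H'}\}$ of the $H'$ smallest-risk classifiers as the bound-minimizing support for any fixed weight multiset.

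Next I would carry out the reduction. Let $Q^*$ be a global minimizer of $B_{\phi, \text{KL}}$ over $\Delta^H$, with support of size $H^*$ and weight multiset $w^*$. Applying Theorem~\ref{thm:increasing.subsets.phiKL} with these fixed weights, the posterior $Q'$ that places the values of $w^*$ on the ordered subset $\mathcal{H}'_{H^*}$ (assigned non-increasingly against $\hat{l}_1 \le \cdots \le \hat{l}_{H^*}$) satisfies $B_{\phi, \text{KL}}(Q') \le B_{\phi, \text{KL}}(Q^*)$. Since $Q^*$ is globally optimal, equality holds and $Q'$ is itself a global minimizer supported on an ordered subset. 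Hence it suffices to restrict attention to posteriors supported on the nested family $\{\mathcal{H}'_k\}_{k=1}^{H}$, each member uniquely determined by its size $k$.

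It then remains to optimize the weights within each ordered subset and compare. For each $k$ I would solve the lower-dimensional problem $B^{(k)} := \min_{w \in \Delta^k} B_{\phi, \text{KL}}(w, \mathcal{H}'_k)$ to obtain the best posterior supported on $\mathcal{H}'_k$, so that the global optimum equals $\min_{1 \le k \le H} B^{(k)}$, a comparison of only $H$ candidate values rather than $2^H$. This delivers both assertions of the corollary: sufficiency of searching over the ordered subsets, and their unique indexing by size, which is exactly the reduction from exponential to linear search.

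The step I expect to require the most care is the interchange argument of the second paragraph---making precise that the weight vector of an arbitrary (possibly non-ordered) optimizer can be transplanted onto the corresponding ordered subset without leaving the feasible region and without increasing the bound. This is legitimate precisely because $\Delta^{H'}$ is support-agnostic and the theorem already supplies the optimal re-assignment, but one must also handle ties in the empirical risk values: when several classifiers share an $\hat{l}_i$, the ordered subset of a given size need not be unique, yet every such choice yields the identical bound, so uniqueness ``by size'' holds up to risk-equivalent relabelings and does not affect the resulting optimum.
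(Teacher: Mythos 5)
Your argument is correct and is essentially the reduction the paper intends: the corollary is stated as an immediate consequence of Theorem~\ref{thm:increasing.subsets.phiKL}, and your transplant-the-weights step plus the linear search over the $H$ nested ordered subsets is exactly that consequence made explicit (your remark on ties is a welcome extra precision the paper omits). No gap.
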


\section{Optimal PAC-Bayesian Posterior using KL-distance } \label{secn:klPACB}
The most commonly referenced version of the PAC-Bayesian theorem was given by \cite{Seeger02theproof} and improved by \cite{maurer2004note}, as given below:
\begin{theorem}
[PAC-Bayesian Theorem for KL-distance \cite{maurer2004note}] For any data distribution $\mathcal{D}$ over input space $\mathcal{X} \times \mathcal{Y}$, the following bound holds for any prior $P$ over the set of classifiers $\mathcal{H}$ and any $\delta \in (0, 1)$, where the probability is over random i.i.d. samples $S_m = \{(x_i, y_i) | i = 1, . . . , m\}$ of size $m$ drawn from $\mathcal{D}$:
\begin{equation}
\mathbb{P}_{S_m} \left \lbrace  \forall Q \text{ on } \mathcal{H}: \;kl\left(\mathbb{E}_Q [\hat{l}], \mathbb{E}_Q[l] \right) \leq \frac{KL[Q||P] + \ln\left( \frac{2 \sqrt{m}}{\delta} \right)}{m} \right \rbrace \geq 1 - \delta. 
\end{equation}
Here, $Q$ is an arbitrary posterior distribution on $\mathcal{H}$, which may depend on the sample $S_m$ and on the prior $P$, and where $kl(p , q) = p \ln \left( \frac{p}{q}\right) + (1 - p) \ln \left( \frac{1 - p}{1 - q}\right)$ for any $p, q \in (0, 1)$.
\end{theorem}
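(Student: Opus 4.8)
The plan is to recognize the statement as the specialization of the generic bound \eqref{eqn:BeginPACB} to $\phi = kl$, so that the only genuinely new ingredient is the value of the sample-size constant $\mathcal{I}^{K}_{kl}(m)$. The map $kl(p,q) = p\ln\frac{p}{q} + (1-p)\ln\frac{1-p}{1-q}$ is jointly convex on $(0,1)^{2}$, which is the one structural hypothesis \eqref{eqn:BeginPACB} demands of $\phi$, so the generic bound applies verbatim with $\phi = kl$. It therefore suffices to verify that $2\sqrt{m}$ is an admissible value of the constant, i.e.\ that
\[
\mathcal{I}^{K}_{kl}(m) = \sup_{l \in [0,1]} \sum_{k=0}^{m} \binom{m}{k} l^{k}(1-l)^{m-k}\, e^{m\, kl(k/m,\, l)} \leq 2\sqrt{m}.
\]

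The key simplification I would exploit is an exact cancellation of the true risk $l$ inside the supremum. Writing $m\, kl(k/m, l) = k\ln\frac{k/m}{l} + (m-k)\ln\frac{1-k/m}{1-l}$ and exponentiating gives $e^{m\, kl(k/m,\, l)} = (k/m)^{k}(1-k/m)^{m-k} / \big(l^{k}(1-l)^{m-k}\big)$, so each summand collapses to $\binom{m}{k} l^{k}(1-l)^{m-k}\, e^{m\, kl(k/m,\, l)} = \binom{m}{k}(k/m)^{k}(1-k/m)^{m-k}$. The dependence on $l$ vanishes entirely, the supremum becomes vacuous, and we are left with the purely combinatorial quantity $\sum_{k=0}^{m} \binom{m}{k}(k/m)^{k}(1-k/m)^{m-k}$ (with the convention $0^{0} := 1$ at the $k = 0, m$ endpoints).

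The step I expect to be the main obstacle is the tight estimate $\sum_{k=0}^{m} \binom{m}{k}(k/m)^{k}(1-k/m)^{m-k} \leq 2\sqrt{m}$. Each term equals the probability that a $\mathrm{Bin}(m, k/m)$ variable takes the value $k$, and a Stirling/Laplace analysis shows the central terms decay like $1/\sqrt{m}$, so the $m+1$ terms sum to $\Theta(\sqrt{m})$; pinning down the explicit constant $2$ is precisely the refinement of \cite{maurer2004note} over the cruder bound $m$ of \cite{Seeger02theproof}, and I would invoke that lemma rather than reprove it. Granting it, \eqref{eqn:BeginPACB} delivers the claim immediately. For a self-contained derivation not relying on \eqref{eqn:BeginPACB}, I would instead chain the Donsker--Varadhan change-of-measure inequality $m\,\mathbb{E}_{Q}[kl(\hat{l},l)] \leq KL[Q\|P] + \ln \mathbb{E}_{h\sim P}\, e^{m\, kl(\hat{l},l)}$ with Jensen's inequality $kl(\mathbb{E}_{Q}[\hat{l}], \mathbb{E}_{Q}[l]) \leq \mathbb{E}_{Q}[kl(\hat{l},l)]$ and a single use of Markov's inequality on the $Q$-independent random variable $\mathbb{E}_{h\sim P}\, e^{m\, kl(\hat{l},l)}$, whose expectation over $S_m$ is exactly the sum above and hence at most $2\sqrt{m}$; uniformity over all $Q$ is then automatic because that random variable does not involve $Q$.
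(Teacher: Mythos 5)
The paper states this theorem purely as an imported result of \cite{maurer2004note} and supplies no proof of its own, so there is no internal argument to compare against; your reconstruction is the standard proof behind the cited result and is correct. The exact cancellation of $l$ inside $\mathcal{I}^{K}_{kl}(m)$, the reduction to the $l$-free sum $\sum_{k=0}^{m}\binom{m}{k}(k/m)^{k}(1-k/m)^{m-k}$, and the Donsker--Varadhan change of measure combined with Jensen (joint convexity of $kl$) and a single Markov step on the $Q$-independent variable $\mathbb{E}_{h\sim P}e^{m\,kl(\hat{l},l)}$ are precisely Maurer's argument. The one detail left implicit both by you and by the paper's statement is that the estimate of that sum by $2\sqrt{m}$ holds only for $m \geq 8$ (Maurer gives $e\sqrt{m}$ for small $m$), a restriction the paper itself acknowledges elsewhere when it invokes the $2\sqrt{m}$ threshold.
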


The upper bound on the averaged true risk $\mathbb{E}_Q[l]$ corresponding to the above PAC-Bayesian theorem is obtained as:
\begin{equation}
B_{\text{kl, KL}}(Q) = \sup_{r \in (0, 1)} \left\lbrace r : kl\left(\mathbb{E}_Q [\hat{l}], r \right) \leq \frac{KL[Q||P] + \ln\left( \frac{2 \sqrt{m}}{\delta} \right)}{m} \right\rbrace
\end{equation}

An inverse $kl( \cdot,  \cdot)$ function does not exist since it is not a monotone function, and so the bound $B_{\text{kl, KL}}(Q)$ does not have an explicit form. 
However, we can employ a numerical root finding algorithm such as that described in \cite{PACBIntervals} (Algo. (\textsc{KLroots})) to obtain $B_{\text{kl, KL}}(Q)$ for a given instance of system parameters.

\subsection{The KL-distance bound minimization problem}
For a finite classifier space $\mathcal{H} = \lbrace h_i\rbrace_{i =1}^H$, this optimization problem can be described as:
\begin{subequations}  \label{eqn:BklKLOP}
\begin{align}
&\min_{q_1, \ldots, q_H, r} r\\
\hspace{-4cm}\text{s.t.} \quad & \left(\sum_{i = 1}^H \hat{l}_iq_i \right) \ln\left( \frac{\sum\limits_{i = 1}^H \hat{l}_iq_i}{r}\right) + \left(1 - \sum_{i = 1}^H \hat{l}_iq_i \right) \ln\left( \frac{ 1  - \sum\limits_{i = 1}^H \hat{l}_iq_i}{1 - r}\right) 
= \frac{\sum\limits_{i = 1}^H q_i \ln \frac{q_i}{p_i} + \ln\frac{2\sqrt{m}}{\delta}}{m} \label{eqn:klDCcons} \\
& r \geq \sum_{i = 1}^H \hat{l}_iq_i  \label{eqn:r>EQl}\\
& \sum_{i = 1}^H q_i = 1,  
\; q_i \geq 0, \; \forall i = 1, \ldots, H  \label{eqn:qi>0}
\end{align} 
\end{subequations}
Here, $r$ is the right root of $kl\left(\mathbb{E}_Q [\hat{l}], r \right) = \frac{KL[Q||P] + \ln\left( \frac{2 \sqrt{m}}{\delta} \right)}{m}$ for a given $\mathbb{E}_Q [\hat{l}]$. The above is known to be a non-convex problem with a difference of convex (DC) equality constraint \eqref{eqn:klDCcons}. The constraint \eqref{eqn:r>EQl} is a strict inequality which is relaxed for modelling purpose to have a feasible region with a closed domain.

\subsection{The posterior based on fixed point scheme, $Q^{\text{FP}}_\text{kl,KL}$}
We derive FP equation for KL-distance based bound optimization problem below:
\begin{theorem} \label{thm:qFP_klKL}
The bound minimization problem \eqref{eqn:BklKLOP} for the bound $B_{\text{kl, KL}}(Q)$ has a stationary point $Q^{\text{FP}}_\text{kl,KL}$ which can be obtained as the solution to the following fixed point equation:
\begin{equation}
q_i = p_i \exp \left \lbrace \sum_{i=1}^{H}q_i\ln \frac{q_i}{p_i} - m\left(\sum_{i=1}^{H}\hat{l}_iq_i - \hat{l}_i \right)\left[ \ln\left( \frac{(1 - r) \sum_{i = 1}^H \hat{l}_i q_i}{r(1-\sum_{i = 1}^H \hat{l}_i q_i)}\right) \right] \right \rbrace \; \forall i = 1, \ldots, H
\label{eqn:qFP_klKL}
\end{equation}
where $r$ is the solution to \eqref{eqn:klDCcons} and \eqref{eqn:r>EQl} for a given $Q = (q_1, \ldots, q_H)$.
\end{theorem}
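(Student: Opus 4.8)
The plan is to obtain \eqref{eqn:qFP_klKL} directly from the first-order KKT conditions of the \emph{partial} KKT system for \eqref{eqn:BklKLOP}: I keep the strict positivity constraints $q_i>0$ active-by-assumption (so that their multipliers vanish and complementary slackness is automatic, per the discussion preceding the theorem), and attach multipliers only to the two equality constraints. Writing $L:=\sum_{i=1}^H \hat l_i q_i$ for the averaged empirical risk and
\[
g(Q,r):= kl(L,r) - \frac{\sum_{j=1}^H q_j \ln\frac{q_j}{p_j} + \ln\frac{2\sqrt m}{\delta}}{m}
\]
for the difference-of-convex equality \eqref{eqn:klDCcons}, I would form the Lagrangian $\mathcal L = r + \mu\, g(Q,r) + \nu\big(\sum_i q_i - 1\big)$, with $\mu,\nu$ the multipliers for \eqref{eqn:klDCcons} and the simplex constraint. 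Since $r$ is taken to be the right root of the $kl$ equation, \eqref{eqn:r>EQl} holds strictly ($r>L$) and is inactive, so its multiplier is zero.

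The two stationarity conditions are then written down. Differentiating in $r$ gives $1+\mu\,\partial_r g=0$, where $\partial_r g = -\frac{L}{r}+\frac{1-L}{1-r}=\frac{r-L}{r(1-r)}$; because $r>L$ this is nonzero, so $\mu=-\frac{r(1-r)}{r-L}\neq 0$. This nonvanishing of $\mu$ is precisely what permits division in the next step. Differentiating in $q_i$, and using $\partial_{q_i}kl(L,r)=\hat l_i\,\partial_L kl(L,r)=\hat l_i\ln\frac{L(1-r)}{r(1-L)}$ together with $\partial_{q_i}\big(q_i\ln\frac{q_i}{p_i}\big)=\ln\frac{q_i}{p_i}+1$, the condition $\mu\,\partial_{q_i}g+\nu=0$ rearranges to
\[
\ln\frac{q_i}{p_i} = m\,\hat l_i \ln\frac{L(1-r)}{r(1-L)} + C,
\]
where $C$ collects all the $i$-independent terms (those carrying $\mu,\nu,m$). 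Equivalently $q_i = p_i\exp\{m\hat l_i \ln\frac{L(1-r)}{r(1-L)} + C\}$, which already exhibits the structure of \eqref{eqn:qFP_klKL} up to identifying $C$.

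The remaining step, which needs a little care, is to express $C$ intrinsically in terms of $Q$ rather than through the multipliers. I would use the weighted-sum trick: multiply the log-stationarity relation by $q_i$ and sum over $i$, invoking $\sum_i q_i=1$ and $\sum_i q_i\hat l_i=L$, to get $\sum_i q_i\ln\frac{q_i}{p_i}=mL\ln\frac{L(1-r)}{r(1-L)}+C$, i.e.
\[
C=\sum_{j=1}^H q_j\ln\frac{q_j}{p_j}-mL\ln\frac{L(1-r)}{r(1-L)}.
\]
Substituting this back and combining the two logarithmic terms into $-m(L-\hat l_i)\ln\frac{(1-r)L}{r(1-L)}$ yields exactly \eqref{eqn:qFP_klKL}. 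As a consistency check I would verify that this $C$ is compatible with normalization: re-summing the resulting equation reproduces the tautology $\sum_i q_i\ln\frac{q_i}{p_i}=\sum_i q_i\ln\frac{q_i}{p_i}$, confirming that \eqref{eqn:qFP_klKL} is self-consistent as a fixed point and that its solutions are precisely the stationary points of the partial KKT system, with $r$ recovered from \eqref{eqn:klDCcons}–\eqref{eqn:r>EQl}. I expect the main obstacle to be the bookkeeping around the DC equality constraint—establishing $\mu\neq 0$ so that the $q_i$-stationarity can be solved, and confirming the inactivity of \eqref{eqn:r>EQl}—rather than any single hard estimate.
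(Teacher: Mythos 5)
Your proposal is correct and follows essentially the same route as the paper's proof: the partial KKT system with the multiplier of the (strict, hence inactive) constraint \eqref{eqn:r>EQl} set to zero, stationarity in $r$ yielding a nonzero multiplier $\frac{r(1-r)}{r-\sum_i \hat l_i q_i}$ for the DC equality, and the multiply-by-$q_i$-and-sum trick to eliminate the simplex multiplier — which is exactly how the paper identifies $\mu_0$ before substituting back to obtain \eqref{eqn:qFP_klKL}. The only cosmetic difference is that you package the $i$-independent terms into a single constant $C$ rather than solving for the multiplier explicitly.
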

\begin{proof}
The Lagrangian function for \eqref{eqn:BklKLOP} can be written as follows:
\begin{multline}
\mathcal{L}_{\text{kl, KL}} = r - \beta_0 \left[ \left(\sum_{i = 1}^H \hat{l}_iq_i \right) \ln\left( \frac{\sum_{i = 1}^H \hat{l}_iq_i}{r}\right) + \left(1 - \sum_{i = 1}^H \hat{l}_iq_i \right) \ln\left( \frac{ 1  - \sum_{i = 1}^H \hat{l}_iq_i}{1 - r}\right) \right.\\
\left. - \frac{\left(\sum_{i = 1}^H q_i \ln \frac{q_i}{p_i} + \ln\frac{2\sqrt{m}}{\delta} \right)}{m} \right] -\beta_1 \left( r - \sum_{i = 1}^H \hat{l}_iq_i \right) - \mu_0 \left(\sum_{i=1}^H q_i -1 \right) - \sum_{i = 1}^H \mu_i q_i
\end{multline}
Due to the strict inequality constraint \eqref{eqn:r>EQl}, complementary slackness conditions for a stationary point imply that the Lagrange multiplier $\beta_1$ should vanish at optimality ($\beta_1 = 0$). 

We assume that $q_i > 0 \forall i = 1, \ldots, H$, since otherwise $\ln q_i = \ln(0)$ is undefined. Even if we use fact that $\lim_{x \rightarrow 0^{+}} \ln x = - \infty$ to define $\frac{\partial \mathcal{L}_{\text{kl, KL}}}{\partial q_j}$ for some $j \in [H]$, the KKT condition will mean that  $\mu_j$ is infeasible. Therefore, for a stationary point, we have $q_i > 0$. And the complementary slackness conditions imply that $\mu_i = 0$ for all $i = 1, \ldots, H$. 

At an optimal solution, derivatives of $\mathcal{L}_{\text{kl, KL}}$ with respect to primal variables $r$ and $q_i$s, should be set to zero. By solving for these derivatives, we get the FP equation \eqref{eqn:qFP_klKL} which identifies a stationary point of \eqref{eqn:BklKLOP}. (\textit{Please see Appendix D.1 in suppl. file for details.})
\end{proof}
\textbf{Note:} The requirement that $q_i > 0 ~ \forall$ $i = 1, \ldots, H$ holds true for the KKT system of a generic PAC-Bayesian bound minimization because of KL-divergence measure  between posterior and prior distributions; so, we assume this condition for the other four $\phi$s also.

\begin{algorithm2e}[ht] \label{algo:FP_klKL}
\DontPrintSemicolon
\KwIn{$\delta \in (0,1), m, H, \lbrace \hat{l}_i \rbrace_{i=1}^H, \lbrace p_i \rbrace_{i=1}^H,\texttt{tol}>0$}
\KwOut{Fixed point solution: $\lbrace q^{FP}_{i, \text{kl, KL}} \rbrace_{i=1}^H$}
\tcc{Intialize $Q^0 = \lbrace q^0_i \rbrace_{i = 1}^H$ with a random distribution from $\Delta^{H}$ simplex}
$q^{0}_i \sim \exp(1), \; \forall i = 1, \ldots, H$ \;
$q^{0}_i \gets \frac{q^{0}_i}{\sum_{j = 1}^H q^{0}_j} \; \forall i = 1, \ldots, H$ \;
$RHS \gets \frac{\sum_{i=1}^{H}q^{0}_i\ln \frac{q^{0}_i}{p_i} + \ln \frac{2\sqrt{m}}{\delta}}{m}$\;
$r \gets $\textsc{KLroots}($\sum_{i=1}^{H}\hat{l}_iq^{0}_i, RHS)_2$\;
$q_i^1 \gets p_i \exp \left \lbrace \sum_{i=1}^{H}q^{0}_i\ln \frac{q^{0}_i}{p_i} - m\left(\sum_{i=1}^{H}\hat{l}_iq^{0}_i - \hat{l}_i \right)\left[ \ln\left( \frac{(1 - r) \sum_{i = 1}^H \hat{l}_i q^{0}_i}{r(1-\sum_{i = 1}^H \hat{l}_i q^{0}_i)}\right) \right] \right \rbrace \; \forall i = 1, \ldots, H$\;
\Do{$\Vert q^{1} - q^{0} \Vert >$ \texttt{tol}}{
	\For{$i  = 1$ to $H$}{
	$q^0_i \gets q^1_i $\; 
	}
	$RHS \gets \frac{\sum_{i=1}^{H}q^{0}_i\ln \frac{q^{0}_i}{p_i} + \ln \frac{2\sqrt{m}}{\delta}}{m}$\;
$r \gets $\textsc{KLroots}($\sum_{i=1}^{H}\hat{l}_iq^{0}_i, RHS)_2$\;
$q_i^1 \gets p_i \exp \left \lbrace \sum_{i=1}^{H}q^{0}_i\ln \frac{q^{0}_i}{p_i} - m\left(\sum_{i=1}^{H}\hat{l}_iq^{0}_i - \hat{l}_i \right)\left[ \ln\left( \frac{(1 - r) \sum_{i = 1}^H \hat{l}_i q^{0}_i}{r(1-\sum_{i = 1}^H \hat{l}_i q^{0}_i)}\right) \right] \right \rbrace \; \forall i = 1, \ldots, H$
}
\Return $\lbrace q^{1}_i \rbrace_{i=1}^H$
\caption[FP klKL]{\textsc{FP klKL}: Fixed point solution for PAC-Bayesian bound with KL-distance}
\end{algorithm2e}


KL-distance based bound minimization is non-convex with multiple stationary points which makes it difficult to identify the global minimum even by FP scheme. The iterative root finding algorithm adds to the computational complexity of the bound minimization algorithm. Therefore, in the next section, we look for simpler and easily invertible approximations to KL-distance function in the PAC-Bayesian bound minimization.

\section{Optimal Posterior for PAC-Bayesian Bound Minimization based on approximations to KL-distance function} \label{secn:klapproxmnPACB}
We explore two approximations to the KL-distance function: a known Pinsker's approximation and another tighter approximation based on improvised Pinsker's inequality.

\subsection{Optimal PAC-Bayesian Posterior based on Pinsker's approximation} 
Based on Pinsker's inequality \cite{fedotov2003refinements}, we get the following second order polynomial approximation to $kl(l,l')$: 
$\phi_\text{P}(l, l') = 2(l - l')^2 \quad \forall l, l' \in [0,1]\times [0,1]$
which serves as a distance function in the PAC-Bayesian theorem:

\begin{equation} 
\mathbb{P}_{S_m} \left \lbrace  \forall Q \text{ on } \mathcal{H}: 2 \left( \mathbb{E}_Q\left[ \hat{l} \right] - \mathbb{E}_Q[l] \right)^2 \leq \frac{KL[Q||P] + \ln\left( \frac{2\sqrt{m}}{\delta} \right)}{m} \right \rbrace \geq 1- \delta. \label{eqn:PinskPACBbound}
\end{equation}
The associated PAC-Bayesian bound function is:
\begin{equation} \label{eqn:BPinKL}
B_\text{P, KL}(Q) := \sum_{i = 1}^H \hat{l}_i q_i + \sqrt{ \frac{\sum_{i = 1}^H q_i \ln \frac{q_i}{p_i} + \ln\left( \frac{2\sqrt{m}}{\delta} \right)}{2m} }.
\end{equation}
We wish to determine the optimal posterior $Q^{\ast}_{\text{P, KL}}$ which minimizes $B_\text{P, KL}(Q)$ subject to the constraints given in \eqref{eqn:qi>0}. The convexity of this bound function could not be established, but computationally this bound minimization problem is observed to have single local minimum. We propose that \eqref{eqn:BPinKL} is possibly quasi-convex.
Based on the proof for Theorem \ref{thm:qFP_klKL} for  KL-distance function, we identify the following FP equation for stationary point of \eqref{eqn:BPinKL}:
\begin{equation}
q^{FP}_{i, \text{P, KL}} = \frac{p_ie^{ \left(-2\sqrt{2m}\hat{l}_i \sqrt{\sum_{i=1}^{H}q^{FP}_{i, \text{P, KL}}\ln\frac{q^{FP}_{i, \text{P, KL}}}{p_i} + \ln \frac{2\sqrt{m}}{\delta}}\right)}}{\displaystyle\sum\limits_{i =1}^{H} p_i e^{ \left(-2\sqrt{2m}\hat{l}_i \sqrt{\sum_{i=1}^{H}q^{FP}_{i, \text{P, KL}}\ln\frac{q^{FP}_{i, \text{P, KL}}}{p_i} + \ln \left( \frac{2\sqrt{m}}{\delta} \right)} \right)}}  \quad \forall i = 1, \ldots, H.
\label{eqn:qFP_PinKL}
\end{equation}

\subsection{Optimal PAC-Bayesian Posterior based on improvised Pinsker's approximation, $\phi_{\text{CH}}$}
A lower bound for KL-divergence $kl(l,l')$ given by an improvised version of Pinsker's inequality \cite{fedotov2003refinements} is the following tighter sixth degree polynomial approximation: 
\begin{equation}
\phi_{\text{CH}}(l, l') = (l - l')^2 + \frac{2}{9}(l - l')^4 + \frac{16}{135}(l - l')^6 \quad \forall l,l' \in [0,1] \times [0,1] \label{eqn:phiCH}
\end{equation}
$\phi_{\text{CH}}$ is a valid distance function since it satisfies the Seeger's assumptions \cite{Seeger02theproof}. 

\begin{theorem}[\cite{PACBIntervals}]
PAC-Bayesian theorem with $\phi_{\text{CH}}$ is:
\begin{eqnarray} 
\mathbb{P}_{S_m} \left \lbrace  \forall Q \text{ on } \mathcal{H}: \phi_{\text{CH}} \left( \mathbb{E}_Q\left[ \hat{l} \right], \mathbb{E}_Q[l] \right) \leq \frac{KL[Q||P] + \ln\left( \frac{K_{\phi_{\text{CH}}}}{\delta} \right)}{2m - 1} \right \rbrace \geq 1- \delta, \label{eqn:tightCHbound} \\
\text{where  } K_{\phi_{\text{CH}}} :=4m \times \left[ 1 - e^{-\phi_{\text{CH}} \left(\frac{1}{2} \right)} \right] \approx 0.9334m.
\end{eqnarray}
\end{theorem}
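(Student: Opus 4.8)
The plan is to treat \eqref{eqn:tightCHbound} as an instance of the canonical PAC-Bayesian recipe already used for the general theorem \eqref{eqn:genPAC_KLRHS}, but with the exponential scaling tuned to $2m-1$ rather than $m$. The three ingredients are: (i) Jensen's inequality, which requires $\phi_{\text{CH}}$ to be jointly convex; (ii) the Donsker--Varadhan change-of-measure inequality, which introduces $KL[Q\Vert P]$; and (iii) Markov's inequality applied to a sample-dependent exponential moment, which supplies the high-probability guarantee. First I would check convexity: $\phi_{\text{CH}}(l,l')$ is an even polynomial in $d = l-l'$ with nonnegative coefficients, so each monomial $d^{2k}$ is a convex function of $(l,l')$ and hence $\phi_{\text{CH}}$ is jointly convex; by Jensen, $\phi_{\text{CH}}(\mathbb{E}_Q[\hat l],\mathbb{E}_Q[l]) \le \mathbb{E}_Q[\phi_{\text{CH}}(\hat l, l)]$.

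Next I would run the change of measure with the factor $2m-1$. For any posterior $Q$, $(2m-1)\,\mathbb{E}_Q[\phi_{\text{CH}}(\hat l, l)] \le KL[Q\Vert P] + \ln \mathbb{E}_{h\sim P}\big[e^{(2m-1)\phi_{\text{CH}}(\hat l, l)}\big]$. Writing $Z(S_m) := \mathbb{E}_{h\sim P}[e^{(2m-1)\phi_{\text{CH}}(\hat l, l)}]$ for the sample-dependent right-hand exponential, Markov's inequality gives $\mathbb{P}_{S_m}\{Z(S_m) \ge \mathbb{E}_{S_m}[Z]/\delta\} \le \delta$, so with probability at least $1-\delta$ we have $\ln Z(S_m) \le \ln(\mathbb{E}_{S_m}[Z]/\delta)$. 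Combining this with the change-of-measure inequality and Jensen, and dividing by $2m-1$, yields exactly the form of \eqref{eqn:tightCHbound}, provided that $\mathbb{E}_{S_m}[Z] \le K_{\phi_{\text{CH}}}$. By Fubini, $\mathbb{E}_{S_m}[Z] = \mathbb{E}_{h\sim P}\,\mathbb{E}_{S_m}[e^{(2m-1)\phi_{\text{CH}}(\hat l, l)}]$, so it suffices to control the inner binomial moment uniformly over the true risk $l$ of a single fixed classifier.

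Hence everything reduces to the bound $\sup_{l\in[0,1]} \sum_{k=0}^m \binom{m}{k} l^k (1-l)^{m-k}\, e^{(2m-1)\phi_{\text{CH}}(k/m,\,l)} \le K_{\phi_{\text{CH}}} = 4m\big[1 - e^{-\phi_{\text{CH}}(1/2)}\big]$, and this is the step I expect to be the main obstacle. The difficulty is that the scaling $2m-1$ exceeds the $m$ of the generic bound \eqref{eqn:BeginPACB}, so the crude estimate $\phi_{\text{CH}} \le kl$ combined with Maurer's bound \cite{maurer2004note} $\sum_k \binom mk (k/m)^k(1-k/m)^{m-k} \le 2\sqrt m$ is too weak: it leaves a residual factor $e^{(m-1)kl(k/m,l)}$ that is exponentially large near $k/m \in \{0,1\}$. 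The bound must instead exploit that $\phi_{\text{CH}}$ is \emph{strictly} below $kl$ away from the diagonal and stays bounded at the endpoints, so that the binomial weights damp the exponential growth. I would approach this by a Laplace/saddle-point estimate: for each fixed $l$, locate the maximizing index $k^\ast(l)$ of the summand, bound the peak value, and bound the effective number of non-negligible terms by $O(m)$.

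The extremal distribution is $l = 1/2$, where the binomial is maximally spread and $|k/m - l| \le 1/2$; this is precisely why the worst-case residual appears as $1 - e^{-\phi_{\text{CH}}(1/2)}$ and the prefactor $4m$ counts the contributing terms. Carrying out this uniform-in-$l$ estimate with the exact coefficients $\tfrac{2}{9}$ and $\tfrac{16}{135}$, which are what make $\phi_{\text{CH}}$ a valid lower bound on $kl$ in the first place, is the only genuinely non-routine part; the remaining steps are the standard PAC-Bayes argument already invoked for \eqref{eqn:genPAC_KLRHS}.
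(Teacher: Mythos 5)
First, note that the paper does not prove this statement at all: it is imported verbatim from \cite{PACBIntervals} and used as a black box, so there is no in-paper proof to compare against. Judged on its own terms, your skeleton is the right one --- joint convexity of $\phi_{\text{CH}}$ (a convex even polynomial of the linear functional $l-\hat l$) plus Jensen, Donsker--Varadhan change of measure with scaling $2m-1$, and Markov on the sample-dependent exponential moment --- and you correctly reduce the theorem to the single inequality $\mathbb{E}_{S_m}\mathbb{E}_{h\sim P}\bigl[e^{(2m-1)\phi_{\text{CH}}(\hat l, l)}\bigr] \le 4m\bigl[1-e^{-\phi_{\text{CH}}(1/2)}\bigr]$. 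You also correctly diagnose why the cheap route fails: composing $\phi_{\text{CH}}\le kl$ with Maurer's $2\sqrt m$ bound leaves an uncontrolled factor $e^{(m-1)kl(k/m,l)}$ because the exponent $2m-1$ overshoots the $m$ for which that machinery is calibrated.

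The gap is that this reduction is the easy half of the theorem and the moment bound is the entire content --- the improved rate $2m-1$ and the constant $K_{\phi_{\text{CH}}}\approx 0.9334m$ live nowhere else --- and you leave it unproved, offering only the intention to do "a Laplace/saddle-point estimate" without locating the maximizing index, bounding the peak, or controlling the sum uniformly in $l$. Your heuristic reading of the constant ("the prefactor $4m$ counts the contributing terms") is moreover not a reliable guide to how it arises: the factor $1-e^{-\phi_{\text{CH}}(1/2)}$ has the unmistakable form $\int_0^{\phi_{\text{CH}}(1/2)} e^{-t}\,dt$, which points to a tail-integration (layer-cake) representation $\mathbb{E}[e^{Y}] = 1+\int_0^{\infty} (2m-1)e^{(2m-1)t}\,\mathbb{P}\{\phi_{\text{CH}}(\hat l,l)\ge t\}\,dt$ paired with a sub-Gaussian/Chernoff tail for $\phi_{\text{CH}}(\hat l,l)$ truncated at its maximal value $\phi_{\text{CH}}(1/2)$, rather than to counting binomial terms. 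Until you either carry out such a tail-integration argument or complete the saddle-point estimate with the exact coefficients $\tfrac29$ and $\tfrac{16}{135}$, the proposal establishes only the generic PAC-Bayesian template already contained in \eqref{eqn:genPAC_KLRHS}, not the theorem as stated.
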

Due to its structure, $\phi_\text{CH}(\hat{l}, \cdot)$ has a single positive real root and has a PAC-Bayesian bound:
\begin{subequations} \label{eqn:BCHKL}
\begin{align}
B_\text{CH, KL}(Q) &:= \mathbb{E}_Q [\hat{l}] + \sqrt{r_\text{CH}(R(Q))} \\
\text{where, } r_\text{CH}(R(Q)) &= - \frac{5}{8} + \sqrt[3]{\left( \frac{1225}{512} 
+ \frac{135}{32}R(Q)\right) + \frac{5}{32} \sqrt{729R^2(Q) + \frac{6615}{8}R(Q) + \frac{208980}{256}}} \nonumber \\
&+ \sqrt[3]{\left( \frac{1225}{512} + \frac{135}{32}R(Q)\right) - \frac{5}{32} \sqrt{729R^2(Q) + \frac{6615}{8}R(Q) + \frac{208980}{256}}},\\
R(Q) &= \frac{KL[Q||P] + \ln\left( \frac{K_{\phi_{\text{CH}}}}{\delta} \right)}{2m - 1} = \frac{\sum_{i = 1}^H q_i \ln \frac{q_i}{p_i} + \ln\left( \frac{K_{\phi_{\text{CH}}}}{\delta} \right)}{2m -1}.
\end{align}
\end{subequations}
The optimal posterior distribution $Q^{\ast}_\text{CH, KL}$ is the one which minimizes $B_\text{CH, KL}(Q)$  in \eqref{eqn:BCHKL}. 
\begin{lem}
The bound function $B_\text{CH, KL}(Q)$ defined in \eqref{eqn:BCHKL} is a non-convex function and hence the associated bound minimization problem is non-convex program.
\end{lem}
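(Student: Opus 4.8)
The plan is to strip off the affine part of the bound and reduce the whole question to the curvature of a single scalar composition. Since $\mathbb{E}_Q[\hat l]=\sum_i \hat l_i q_i$ is affine in $Q$, the function $B_\text{CH, KL}$ is convex if and only if $G(Q):=\sqrt{r_\text{CH}(R(Q))}$ is convex, so I would work entirely with $G$. Writing $g(R):=\sqrt{r_\text{CH}(R)}$ and $\psi(t):=t^2+\tfrac{2}{9}t^4+\tfrac{16}{135}t^6$, the defining property of the root $r_\text{CH}$ gives $\psi(g(R))=R$, i.e. $g=\psi^{-1}$. The first step is to record that $\psi$ is strictly increasing and strictly convex on $[0,\infty)$ (both $\psi'$ and $\psi''$ are sums of positive monomials), so its inverse $g$ is strictly increasing and strictly \emph{concave}, with $g'>0$, $g''<0$; moreover $R(Q)=\frac{KL[Q||P]+\ln(K_{\phi_\text{CH}}/\delta)}{2m-1}$ is convex because $KL[Q||P]$ is convex. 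Thus $G=g\circ R$ is a concave-increasing function composed with a convex function, which is precisely the configuration in which convexity can fail; the task is to exhibit a direction along which it does.

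Next I would restrict $G$ to a feasible line $Q(s)=Q_0+s\,d$ in $\mathrm{int}(\Delta^H)$ with $\sum_i d_i=0$, and differentiate twice:
\[
\frac{d^2}{ds^2}\,G(Q(s))=g''(R)\,(R')^2+g'(R)\,R'',
\]
where $R'=\tfrac{1}{2m-1}\sum_i d_i\ln\tfrac{q_i}{p_i}$ and $R''=\tfrac{1}{2m-1}\sum_i \tfrac{d_i^2}{q_i}>0$. The first term is $\le 0$ and the second $>0$, so non-convexity amounts to making the concave term dominate. I would then maximise the ratio $(R')^2/R''$ over admissible directions $d$: after the change of variables $e_i=d_i/\sqrt{q_i}$ this is a Rayleigh quotient under the constraint $e\perp(\sqrt{q_1},\dots,\sqrt{q_H})$, whose maximum equals $\tfrac{1}{2m-1}\,\mathrm{Var}_Q\!\big(\ln\tfrac{q_i}{p_i}\big)$. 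Hence $G$ (and therefore $B_\text{CH, KL}$) fails to be convex at $Q$ as soon as
\[
\kappa\big(R(Q)\big)\,\mathrm{Var}_Q\!\Big(\ln\tfrac{q_i}{p_i}\Big)>2m-1,\qquad \kappa(R):=-\frac{g''(R)}{g'(R)}>0.
\]

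Finally I would produce a concrete witness. From $g=\psi^{-1}$ one gets $g'=1/\psi'(g)$, and hence $\kappa$ in closed form through the cubic root formula for $r_\text{CH}$; in the small-$R$ regime relevant for moderately large $m$ (where $R(Q)$ is small) the leading behaviour $g(R)\approx\sqrt R$ gives $\kappa(R)\approx\tfrac{1}{2R}$, so the criterion reduces to the clean sufficient condition that $\mathrm{Var}_Q(\ln\tfrac{q_i}{p_i})$ exceed $2\big(KL[Q||P]+\ln(K_{\phi_\text{CH}}/\delta)\big)$. I would then pick any $Q$ whose log-ratios $\ln(q_i/p_i)$ are widely spread — for instance a prior $P$ skewed so that some $q_i/p_i$ are large and others small while the $q_i$ themselves stay bounded away from $0$, so the variance is large relative to the mean $KL[Q||P]$ — and verify the second-derivative inequality at that point with the explicit direction $d=e_j-e_k$. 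The main obstacle is purely quantitative: turning the structural ``concave-wins'' picture into a verified strict inequality requires either the exact expression for $\kappa$ via the cubic formula for $r_\text{CH}$, or simply exhibiting one explicit tuple $(H,m,P,Q_0,d)$ at which the displayed second derivative is negative. I expect the cleanest write-up to present such an explicit counterexample rather than chase the constants through the radical, the criterion above serving to guarantee that such a point exists.
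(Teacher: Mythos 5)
Your argument is correct in structure and takes a genuinely different route from the paper's. The paper works with the first-order characterization of convexity, computes $\nabla B_\text{CH,KL}$ explicitly through the cubic-root formula for $r_\text{CH}$, and then exhibits one numerical counterexample ($H=10$, $m=40$, $\delta=0.01$, uniform $P$, uniform $Q'$, and a specific non-uniform $Q$) at which $B_\text{CH,KL}(Q') < B_\text{CH,KL}(Q) + \langle\nabla B_\text{CH,KL}(Q),\,Q'-Q\rangle$ fails by about $2\times10^{-3}$. You instead strip the affine term, recognize $\sqrt{r_\text{CH}(R)}=\psi^{-1}(R)$ as the increasing concave inverse of the strictly convex sextic $\psi$, and run a second-order directional analysis whose Rayleigh-quotient optimization yields the clean criterion $\kappa(R(Q))\,\mathrm{Var}_Q(\ln\tfrac{q_i}{p_i})>2m-1$; all of these steps check out (the Hessian of $KL[Q\Vert P]$ is $\mathrm{diag}(1/q_i)$, and the constrained maximum of $(R')^2/R''$ is indeed $\tfrac{1}{2m-1}\mathrm{Var}_Q(\ln\tfrac{q_i}{p_i})$). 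What your approach buys is an explanation of \emph{where} and \emph{why} non-convexity occurs — large spread of the log-ratios relative to $KL[Q\Vert P]$ — which the paper's single numerical instance does not provide. The one thing still owed is the witness itself: as written, your proof ends with a description of how to find a counterexample rather than a verified one, and the asymptotic $\kappa(R)\approx\tfrac{1}{2R}$ is not yet a rigorous lower bound. This is easily repaired without chasing the radical: since $\psi(t)\ge t^2$ gives $g(R)^2\le R$, and $\kappa=\psi''(g)/\psi'(g)^2$ admits an elementary lower bound of the form $c/R$ on the relevant range of $t=g(R)$, the criterion reduces to $\mathrm{Var}_Q(\ln\tfrac{q_i}{p_i})>C\bigl(KL[Q\Vert P]+\ln(K_{\phi_\text{CH}}/\delta)\bigr)$ for an explicit constant $C$, which a two-atom-dominated choice of $(P,Q)$ with one small $p_j$ satisfies; adding that half page (or simply evaluating your second-derivative formula at one explicit tuple $(H,m,\delta,P,Q_0,d)$, as the paper does for its first-order test) completes the proof.
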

We identify the following FP equation for a stationary point for minimizing \eqref{eqn:BCHKL}, based on the partial KKT system:
\begin{equation} \label{eqn:qFP_CHKL}
    q^{FP}_{i,\text{CH, KL}} =  \frac{p_i\exp{\left\lbrace -(2m-1) \hat{l}_i \frac{ 2 \sqrt
{r_\text{CH}(R(Q^{FP}_{\text{CH, KL}}))}}{\frac{\partial
r_\text{CH}}{\partial R}}\right\rbrace }}{\sum_{i =1}^{H}  p_i\exp\left\lbrace
-(2m-1) \hat{l}_i \frac{ 2 \sqrt
{r_\text{CH}(R(Q^{FP}_{\text{CH, KL}}))}}{\frac{\partial
r_\text{CH}}{\partial R}}\right\rbrace} \quad \forall i = 1, \ldots, H.
\end{equation}

\section{Optimal PAC-Bayesian Posterior using Linear Distance Function} \label{secn:linPACB}
One of the simplest distance functions is the linear distance function, $\phi_{\text{lin}}(\hat{l}, l) = l - \hat{l}$ for $\hat{l}, l \in [0,1]$. The PAC-Bayesian bound in this case takes the following simplified form:
\begin{equation} \label{eqn:linKLPACB}
\mathbb{P}_{S_m} \left \lbrace  \forall Q \text{ on } \mathcal{H}: \mathbb{E}_Q[l] - \mathbb{E}_Q [\hat{l}] \leq  \frac{KL[Q||P] + \ln\left( \frac{\mathcal{I}^{K}_{{\text{lin}}}(m)}{\delta} \right)}{m} \right \rbrace \geq 1 - \delta
\end{equation}
where $\mathcal{I}^{K}_{\text{lin}}(m) := \sup\limits_{l \in [0, 1]} \left[ \sum_{k = 0}^m \binom{m}{k} l^k (1- l)^{m - k} e^{m \left(l - \frac{k}{m} \right)}\right]$is a function of the sample size, $m$.

Thus, the corresponding PAC-Bayesian bound is:
\begin{equation} \label{eqn:BlinKL}
B_{\text{lin, KL}}(Q) := \mathbb{E}_Q [\hat{l}] + \frac{KL[Q||P] + \ln\left( \frac{\mathcal{I}^{K}_{{\text{lin}}}(m)}{\delta} \right)}{m}.
\end{equation}
We want to find the optimal distribution $Q^{\ast}_{\text{lin, KL}}$ which minimizes the bound $B_{\text{lin, KL}}(Q)$. 

\begin{rem} 
For $m \geq 1028$, computing $\mathcal{I}^K_\text{lin}(m)$ is difficult due to storage limitations in the range of floating point numbers -- gives  $\mathcal{I}^K_\text{lin}(m)$ as NaN. As it is just an additive term in the bound, it does not influence the optimal solution. Hence we can determine $Q^{\ast}_{\text{lin, KL}}$ even for large $m$ as shown in Table \ref{tab:phiKLCompare}, but is needed for computing $B_{\text{lin, KL}}(Q^{\ast}_{\text{lin, KL}})$. 
\end{rem}

\subsection{The linear distance bound minimization problem}
For a finite classifier space $\mathcal{H} = \lbrace h_i\rbrace_{i =1}^H$, this optimization problem can be described as:
\begin{equation} 
\begin{split}
\min_{q_1, \ldots, q_H} \; &\sum_{i = 1}^H \hat{l}_i q_i + \frac{\sum_{i = 1}^H q_i \ln \frac{q_i}{p_i}}{m}  \\
\text{s. t.} \; & \sum_{i = 1}^H q_i = 1, \; q_i \geq 0 \quad \forall i = 1, \ldots, H.
\end{split}\label{eqn:BlinKLOP}
\end{equation}

\subsection{Convexity of the bound function, \texorpdfstring{$B_{\text{lin, KL}}(Q)$}{B{lin, KL}(Q)}}
The bound function $B_{\text{lin, KL}}(Q)$ is convex in $Q$ since it is a positive affine transformation of $KL[Q \Vert P]$, which in turn is convex in $Q$. Also, the feasible region is the $H$-dimensional probability simplex which is a closed convex set. Hence \eqref{eqn:BlinKLOP} is a convex optimization problem. 
Thus, KKT conditions are both necessary and sufficient for \eqref{eqn:BlinKLOP}. 

\subsection{The optimal posterior, $Q^{\ast}_{\text{lin, KL}}$}
\begin{theorem} \label{thm:OptQ_linKL}
The distribution $Q^{\ast}_{\text{lin, KL}} = (q^{\ast}_{1, \text{lin, KL}}, \ldots, q^{\ast}_{H, \text{lin, KL}})$ where
\begin{equation} \label{eqn:optQ_linKL}
q^{\ast}_{i, \text{lin, KL}} = \frac{p_ie^{-m\hat{l}_i}}{\sum_{i=1}^{H}p_i e^{-m\hat{l}_i}} ~ \forall i = 1, \ldots, H 
\end{equation}
is the optimal PAC-Bayesian posterior which minimizes the bound $B_{\text{lin, KL}} (Q)$ in \eqref{eqn:BlinKL}.
\end{theorem}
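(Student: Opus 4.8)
The plan is to leverage the convexity established in the preceding subsection: since $B_{\text{lin, KL}}(Q)$ is convex on the $H$-dimensional probability simplex and the feasible region is a closed convex set, the KKT conditions are both necessary and sufficient for optimality, so it suffices to exhibit a feasible point satisfying them. Moreover the KL term $\sum_i q_i \ln(q_i/p_i)$ is \emph{strictly} convex, so the minimizer is unique and the stationary point we find will be \emph{the} global optimum. First I would discard the additive constant $\tfrac{1}{m}\ln\!\big(\mathcal{I}^{K}_{\text{lin}}(m)/\delta\big)$, which does not depend on $Q$ and hence does not affect the minimizer, and form the Lagrangian
\[
\mathcal{L}_{\text{lin, KL}} = \sum_{i = 1}^H \hat{l}_i q_i + \frac{1}{m}\sum_{i = 1}^H q_i \ln\frac{q_i}{p_i} - \mu_0\Big(\sum_{i = 1}^H q_i - 1\Big) - \sum_{i = 1}^H \mu_i q_i,
\]
where $\mu_i \geq 0$ are the multipliers for the constraints $q_i \geq 0$ and $\mu_0$ is the multiplier for the normalization constraint.

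Next I would argue that the optimum lies in the interior of the simplex. Since $\tfrac{\partial}{\partial q_i}\big(q_i \ln(q_i/p_i)\big) = \ln(q_i/p_i) + 1 \to -\infty$ as $q_i \to 0^{+}$, the objective admits arbitrarily steep descent toward any face $\{q_i = 0\}$; hence no boundary point can be a minimizer and every coordinate $q^{\ast}_{i,\text{lin, KL}} > 0$. This is precisely the interior-solution property noted after Theorem \ref{thm:qFP_klKL}. By complementary slackness this forces $\mu_i = 0$ for all $i = 1, \ldots, H$, so the stationarity condition $\partial \mathcal{L}_{\text{lin, KL}}/\partial q_i = 0$ reduces to
\[
\hat{l}_i + \frac{1}{m}\Big(\ln\frac{q_i}{p_i} + 1\Big) - \mu_0 = 0.
\]

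I would then solve this for $q_i$, obtaining $q_i = p_i\, e^{m\mu_0 - 1}\, e^{-m\hat{l}_i}$, i.e. $q_i \propto p_i e^{-m\hat{l}_i}$, and finally pin down the proportionality constant $e^{m\mu_0 - 1}$ by imposing $\sum_{i=1}^H q_i = 1$. This yields $e^{m\mu_0 - 1} = 1/\sum_{j=1}^H p_j e^{-m\hat{l}_j}$ and hence the claimed closed form \eqref{eqn:optQ_linKL}. Given convexity, there is no serious obstacle here; the only step warranting care is the justification of the interior-solution claim, which is what licenses setting the non-negativity multipliers $\mu_i$ to zero. Everything after that is routine algebra, and convexity together with strict convexity of the KL term guarantees that the point so constructed is the unique global minimizer.
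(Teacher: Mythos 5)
Your proposal is correct and follows essentially the same route as the paper's Appendix E.2 proof: form the Lagrangian for the convex program (dropping the $Q$-independent constant), argue the minimizer is interior so the non-negativity multipliers vanish, solve the stationarity condition to get $q_i \propto p_i e^{-m\hat{l}_i}$, and normalize. Your justification of interiority via the $-\infty$ slope of $q_i\ln(q_i/p_i)$ at $q_i=0^{+}$ and your remark on strict convexity giving uniqueness are slight refinements of the paper's dual-infeasibility argument, but the substance is identical.
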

\begin{proof}
Since this is a differentiable convex OP, we identify the global minimizer \eqref{eqn:optQ_linKL} using the associated KKT system. (\textit{Please refer to details in Appendix E.2 in suppl. file})
\end{proof}
\begin{rem}
$Q^{\ast}_{\text{lin, KL}}$ in \eqref{eqn:optQ_linKL} is a Boltzmann distribution for a given $P$. In case of uniform prior, the optimal posterior weight ($q^{\ast}_{i, \text{lin, KL}}$) on a classifier is negative-exponentially proportional to the number of misclassifications ($m\hat{l}_i$) it makes on the (validation) sample.
\end{rem}

\begin{theorem}
When the prior is a uniform distribution on the set $\mathcal{H}$ of classifiers, the optimal posterior $Q_{\text{lin, KL}}^{\ast}$ for the bound minimization problem \eqref{eqn:BlinKLOP} has full support. That is, all the classifiers in $\mathcal{H}$ will have strictly positive posterior weight at optimality.
\end{theorem}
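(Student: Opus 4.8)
The plan is to reduce this statement directly to the closed-form optimal posterior already established in Theorem~\ref{thm:OptQ_linKL}, since the convexity argument preceding that theorem guarantees that \eqref{eqn:optQ_linKL} is the unique global minimizer of $B_{\text{lin, KL}}(Q)$ over the simplex. Once we have the explicit formula, full support becomes a matter of checking strict positivity term by term, so almost all the analytic work has already been done upstream.

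Concretely, I would first substitute the uniform prior $p_i = \tfrac{1}{H}$ for all $i = 1, \ldots, H$ into \eqref{eqn:optQ_linKL}. The common factor $\tfrac{1}{H}$ then cancels between numerator and denominator, yielding
\begin{equation*}
q^{\ast}_{i, \text{lin, KL}} = \frac{e^{-m\hat{l}_i}}{\sum_{j=1}^{H} e^{-m\hat{l}_j}} \quad \forall i = 1, \ldots, H.
\end{equation*}
Next I would observe that for every $i$ the empirical risk $\hat{l}_i$ is a finite real number (it lies in $[0,1]$, and $m$ is a fixed finite sample size), so the numerator $e^{-m\hat{l}_i}$ is strictly positive because the exponential function never vanishes for a finite argument. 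The denominator is a finite sum of strictly positive terms, hence itself strictly positive and finite, so the quotient is well defined and strictly positive. Therefore $q^{\ast}_{i, \text{lin, KL}} > 0$ for all $i$, which is exactly the claim that every classifier in $\mathcal{H}$ carries strictly positive posterior weight at optimality.

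There is essentially no hard step here: the only point that needs care is justifying that \eqref{eqn:optQ_linKL} is genuinely the optimizer and not merely a candidate, but this is already guaranteed since \eqref{eqn:BlinKLOP} is a differentiable convex program on a closed convex set, making its KKT conditions both necessary and sufficient. The remaining content is the elementary but essential observation that the Boltzmann-type weights $e^{-m\hat{l}_i}$ can never collapse to zero for finite penalties $m\hat{l}_i$, so no classifier is ever excluded from the support. In short, the closest thing to an obstacle is simply remembering to invoke convexity for global optimality before reading off positivity from the exponential form.
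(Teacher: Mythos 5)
Your proof is correct, but it takes a genuinely different route from the paper's. You read full support directly off the closed form \eqref{eqn:optQ_linKL} from Theorem \ref{thm:OptQ_linKL}: with a uniform prior the weights reduce to $e^{-m\hat{l}_i}/\sum_{j}e^{-m\hat{l}_j}$, and since each $\hat{l}_i\in[0,1]$ and $m$ is finite, every exponential is strictly positive, so every weight is strictly positive. The one point needing care --- that the KKT point derived under the interior assumption $q_i>0$ is in fact the global minimizer over the \emph{closed} simplex, boundary included --- you handle correctly by invoking convexity of \eqref{eqn:BlinKLOP}, so the argument is complete. The paper instead proceeds via Theorem \ref{thm:increasing.subsets.phiKL}: it restricts attention to ordered subsets of $\mathcal{H}$ ranked by non-decreasing $\hat{l}_i$, computes the optimal bound value on a support of size $H'$ explicitly as $\bigl(\ln H - \ln \sum_{i=1}^{H'} e^{-m\hat{l}_i}\bigr)/m$, and observes that this is strictly decreasing in $H'$, so the minimum occurs at $H'=H$. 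Your argument is shorter and more elementary; the paper's buys an explicit quantification of how much the bound improves as classifiers are added to the support, and it follows the same subset-search template the authors must use for the other distance functions where no closed-form optimizer exists. Either proof stands on its own.
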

\begin{proof}
Using the result of Theorem \ref{thm:increasing.subsets.phiKL}, it is sufficient to compare the bound values corresponding to the best posteriors for all ordered subsets of $\mathcal{H}$, ranked by non-decreasing $\hat{l}_i$ values, to determine the optimal posterior for \eqref{eqn:BlinKLOP}. Using Theorem \ref{thm:OptQ_linKL}, the optimal posterior $Q_\text{lin, KL}^{\ast}(H')$ on an ordered subset of classifiers of size $H' \in [H]$ is given as: 
\begin{equation*}
q_\text{i, lin, KL}^{\ast} (H') = 
\begin{cases}
\frac{e^{-m\hat{l}_i}}{\sum_{i=1}^{H'} e^{-m\hat{l}_i}} ~ &\forall i = 1, \ldots, H' \\
0 ~ &\forall i = H' + 1, \ldots, H,
\end{cases}
\end{equation*}
and the optimal objective value is:
\begin{align*}
B_\text{lin, KL}(Q_\text{lin, KL}^{\ast}(H')) &= \sum_{i = 1}^{H} \hat{l}_iq^{\ast}_{i, lin, KL} + \frac{\sum_{i =1}^{H} q^{\ast}_{i, lin, KL} \ln (q^{\ast}_{i, lin, KL} H)}{m} \\
&= \frac{\ln H - \ln \left( \sum_{i =1}^{H'} e^{-m\hat{l}_i} \right)}{m}
\end{align*}
The bound, $B_\text{lin, KL}(Q_\text{lin, KL}^{\ast}(H'))$ is a decreasing function of $H' = 1, \ldots, H$. Therefore the least bound value is achieved when all classifiers are assigned strictly positive weights, that is, the optimal posterior has full support. (\textit{Details are in Appendix E.2 in suppl. file})
\end{proof} 
\begin{rem}
We believe that this full support for the optimal posterior, $Q_\text{lin, KL}^{\ast}$, is due to the KL-divergence measure on the right hand side threshold of the PAC-Bayesian bound, \eqref{eqn:linKLPACB}. As an implication, $Q_\text{lin, KL}^{\ast}$ considers even the worst performing classifier but with infinitesimally positive (negative-exponential) posterior weight. 
\end{rem}

\section{Optimal PAC-Bayesian Posterior using Squared Distance Function} \label{secn:sqPACB}

We now consider a widely used squared distance function \cite{mcallester2003pac,Seeger02theproof} between the averaged empirical risk and the averaged true risk :
$\phi_{\text{sq}} \left(\hat{l}, l \right) =  \left(\hat{l} - l \right)^2$ for $\hat{l}, l \in [0,1]$. With $\phi_{\text{sq}}$, the PAC-Bayesian theorem takes the following form:
\begin{equation}
\mathbb{P}_{S_m} \left \lbrace  \forall Q \text{ on } \mathcal{H}:\left(\mathbb{E}_Q [\hat{l}], \mathbb{E}_Q[l] \right)^2 \leq \frac{KL[Q||P] + \ln\left( \frac{\mathcal{I}^{K}_{{\text{sq}}}(m)}{\delta} \right)}{m} \right \rbrace \geq 1 - \delta,
\end{equation}
where $\mathcal{I}^{K}_{\text{sq}}(m) := \sup\limits_{l \in [0, 1]} \left[ \sum_{k = 0}^m \binom{m}{k} l^k (1- l)^{m - k} e^{m \left(\frac{k}{m} - l \right)^2}\right]$ is a function of the sample size, $m$.

The above PAC-Bayesian statement gives the following high probability upper bound:
\begin{equation}
B_{\text{sq, KL}}(Q) := \mathbb{E}_Q [\hat{l}] + \sqrt{ \frac{KL[Q||P] + \ln\left( \frac{\mathcal{I}^{K}_{{\text{sq}}}(m)}{\delta} \right)}{m} } . \label{eqn:BsqKL}
\end{equation}
We identify the constant term $\mathcal{I}^{K}_{{\text{sq}}}(m)$ in \eqref{eqn:BsqKL} based on \cite{begin2016pac}'s result.

\begin{lem}
For a given sample size, $m$, $\mathcal{I}^K_{\text{sq}}(m) := \sum_{k=0}^{m}{m \choose k} 0.5^me^{2m\left(\frac{k}{m}-0.5\right)^2} $.
\end{lem}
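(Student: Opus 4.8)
The plan is to turn the lemma into a one-dimensional maximisation and to show the maximiser is the midpoint $l=\tfrac12$. Following \cite{begin2016pac}, the constant is
\begin{equation*}
\mathcal{I}^K_{\text{sq}}(m)=\sup_{l\in[0,1]}G(l),\qquad G(l):=\sum_{k=0}^m\binom{m}{k}l^k(1-l)^{m-k}e^{2m\left(\frac{k}{m}-l\right)^2},
\end{equation*}
and the claimed closed form is exactly $G(\tfrac12)$, since at $l=\tfrac12$ the binomial factor collapses to $0.5^m$ and the exponent to $2m(\frac{k}{m}-0.5)^2$. The boundary values are $G(0)=G(1)=1$ (only the $k=0$, respectively $k=m$, term survives), while $G(\tfrac12)>1$, so the maximiser is interior. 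Everything therefore reduces to proving $\sup_l G(l)=G(\tfrac12)$.

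First I would record the reflection symmetry $G(l)=G(1-l)$ via the reindexing $k\mapsto m-k$ (using $\binom{m}{k}=\binom{m}{m-k}$), which makes $l=\tfrac12$ a critical point. Differentiating termwise and using $\frac{d}{dl}\ln(l^k(1-l)^{m-k})=\frac{k-ml}{l(1-l)}$ together with $\frac{d}{dl}\,2m(\frac{k}{m}-l)^2=-4(k-ml)$, the factor $\frac{1}{l(1-l)}-4$ is common to every summand and pulls out, giving
\begin{equation*}
G'(l)=\left(\frac{1}{l(1-l)}-4\right)S(l),\qquad S(l):=\sum_{k=0}^m\binom{m}{k}l^k(1-l)^{m-k}(k-ml)e^{2m\left(\frac{k}{m}-l\right)^2}.
\end{equation*}
By AM--GM, $l(1-l)\le\tfrac14$, so the first factor is nonnegative on $(0,1)$ and vanishes only at $l=\tfrac12$; thus off the midpoint the sign of $G'$ equals that of $S$. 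It is precisely the coefficient $2m$ in the exponent that pins the zero of this bracket to $\tfrac12$, which is what makes the argument clean.

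The same reindexing gives the anti-symmetry $S(1-l)=-S(l)$, so $S(\tfrac12)=0$ and it suffices to prove $S(l)>0$ on $(0,\tfrac12)$; monotonicity then makes $G$ increase on $(0,\tfrac12)$ and, by symmetry, decrease on $(\tfrac12,1)$, giving $\sup_lG=G(\tfrac12)$. Writing $D=K-ml$ with $K\sim\mathrm{Bin}(m,l)$, we have $S(l)=\mathbb{E}[D\,e^{2D^2/m}]$. I would linearise the Gaussian weight by the Hubbard--Stratonovich identity $e^{2D^2/m}=\sqrt{\tfrac{m}{8\pi}}\int e^{-mt^2/8+tD}\,dt$, which turns $S(l)$ into $\sqrt{\tfrac{m}{8\pi}}\int e^{-mt^2/8}M'(t)\,dt$ with $M(t)=\mathbb{E}[e^{tD}]=e^{-tml}(1-l+le^t)^m$. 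Folding $t$ against $-t$ under the even Gaussian weight reduces positivity of $S(l)$ to the single-variable inequality
\begin{equation*}
f(t):=(1-2ml)t+(m-1)\ln\frac{1-l+le^{t}}{1-l+le^{-t}}>0\qquad(t>0,\ 0<l<\tfrac12).
\end{equation*}

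Establishing $f>0$ is the main analytic content. Here $f(0)=0$, and $f'(t)=(1-2ml)+(m-1)\big(\sigma(t)+\sigma(-t)\big)$ with the logistic $\sigma(t)=\frac{le^{t}}{1-l+le^{t}}$. The crux is the sub-lemma that the symmetrised logistic $\sigma(t)+\sigma(-t)$ is minimised at $t=0$ with value $2l$: setting $c=\tfrac{1-l}{l}>1$ and $u=e^{t}$, one checks $\frac{d}{du}\big(\tfrac{u}{u+c}+\tfrac{1}{1+cu}\big)=c\big[(u+c)^{-2}-(1+cu)^{-2}\big]>0$ for $u>1$, because $(1+cu)-(u+c)=(c-1)(u-1)>0$. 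Granting this, $f'(t)\ge(1-2ml)+(m-1)(2l)=1-2l>0$, so $f$ is strictly increasing from $f(0)=0$ and hence positive for $t>0$. This yields $S(l)>0$ on $(0,\tfrac12)$ and completes the reduction; the only genuinely non-routine step is the monotonicity of the symmetrised logistic.
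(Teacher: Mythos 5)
Your proposal is correct, and it takes a genuinely different --- and substantially stronger --- route than the paper's own argument. The paper's proof (Appendix F) only verifies the first-order condition: it differentiates $\mathcal{I}^K_{\text{sq}}(m,l)$ term by term, checks that the derivative vanishes at $l=\tfrac12$, explicitly concedes that ``the second order derivative test is not conclusive,'' and then infers global optimality of $l=\tfrac12$ from plots of the function for several values of $m$. Your argument closes exactly that gap analytically: the factorization $G'(l)=\bigl(\tfrac{1}{l(1-l)}-4\bigr)S(l)$, the anti-symmetry $S(1-l)=-S(l)$, the Hubbard--Stratonovich linearisation reducing positivity of $S$ to $M'(t)+M'(-t)>0$, and the monotonicity of the symmetrised logistic together prove $S>0$ on $(0,\tfrac12)$, hence $G$ increases up to $\tfrac12$ and by symmetry decreases afterwards, so the supremum is attained at the midpoint --- which is what the lemma actually asserts. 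I checked the computations: the log-derivative of each summand is indeed $(k-ml)\bigl[\tfrac{1}{l(1-l)}-4\bigr]$, the MGF gives $M'(t)=ml(1-l)(e^t-1)e^{-tml}(1-l+le^t)^{m-1}$, the reduction to $f(t)>0$ is algebraically right, and $f'(t)\geq 1-2l>0$ follows from your sub-lemma, so $f(0)=0$ forces $f>0$ for $t>0$. One thing worth flagging: the paper is internally inconsistent about the exponent --- its definition of $\mathcal{I}^K_{\text{sq}}(m)$ and its appendix proof use $e^{m(k/m-l)^2}$, while the lemma as stated (and as you proved it) uses $e^{2m(k/m-0.5)^2}$, and your bracket $\tfrac{1}{l(1-l)}-4$ vanishing exactly at $l=\tfrac12$ is special to the coefficient $2m$. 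Your method nonetheless covers the $m$-version too: there the bracket becomes $\tfrac{1}{l(1-l)}-2$, which is strictly positive on all of $(0,1)$, so the sign of $G'$ is simply the sign of $S$, and your positivity proof for $S$ (which depends only on $M'(t)+M'(-t)>0$, not on the particular Gaussian weight in the Hubbard--Stratonovich identity) applies verbatim. In short, where the paper offers a critical-point computation plus graphical evidence, you supply a complete proof, at the cost of one non-routine analytic lemma.
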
 

\begin{rem}
On a machine equipped with 4 Intel Xeon 2.13 GHz cores and 64 GB RAM, we couldn't compute $\mathcal{I}^K_{\text{sq}} (m)$ for $m \geq 1028$ due to storage limitations for floating point numbers. Therefore, we upper bound it by $2\sqrt{m}$ for $m \geq 8$  \cite{begin2016pac}.
\end{rem}

\subsection{The squared distance bound minimization problem}
We want to determine the optimal posterior $Q^{\ast}_{\text{sq, KL}}$ which minimizes $B_{\text{sq, KL}}(Q)$. For a finite classifier space $\mathcal{H} = \lbrace h_i\rbrace_{i =1}^H$, this optimization problem can be described as:
\begin{equation} \label{eqn:BsqKLOP}
\begin{split}
\min_{q_1, \ldots, q_H} &\sum_{i = 1}^H \hat{l}_i q_i + \sqrt{ \frac{\sum_{i=1}^{H} q_i\ln\frac{q_i}{p_i}  + \ln\left( \frac{\mathcal{I}^{K}_{{\text{sq}}}(m)}{\delta} \right)}{m} } \\
\text{s. t.} \; & \sum_{i = 1}^H q_i = 1, \;  q_i \geq 0 \quad \forall i = 1, \ldots, H.
\end{split}
\end{equation}

The convexity of this bound function could not be established, but computationally this bound minimization problem is observed to have a single local minimum, hinting at quasi-convexity of $B_{\text{sq, KL}}(Q)$. (\textit{Please see Appendices F.1 and F.2 in Suppl. file for proof.})

\subsection{The posterior based on fixed point scheme, $Q^{FP}_{\text{sq,KL}}$} 
We can identify a FP solution for \eqref{eqn:BsqKLOP} based on the partial KKT system by setting the derivatives of the Lagrange function for \eqref{eqn:BsqKLOP} to zero, and using the complementary slackness conditions, we get the FP equation \eqref{eqn:qFP_sqKL}. (\textit{Proof details are in Appendix F.3 in Suppl.file.})

\begin{theorem}
The bound minimization problem \eqref{eqn:BsqKLOP} 
has a stationary point which can be obtained as the solution to the following fixed point equation:
\begin{equation}
q^{FP}_{i, \text{sq, KL}} = \frac{p_ie^{ \left(-2\sqrt{m}\hat{l}_i \sqrt{\sum_{i=1}^{H}q^{FP}_{i, \text{sq, KL}}\ln\frac{q^{FP}_{i, \text{sq, KL}}}{p_i} + \ln \frac{\mathcal{I}^{K}_{\text{sq}}(m)}{\delta}}\right)}}{\displaystyle\sum\limits_{i =1}^{H} p_i e^{ \left(-2\sqrt{m}\hat{l}_i \sqrt{\sum_{i=1}^{H}q^{FP}_{i, \text{sq, KL}}\ln\frac{q^{FP}_{i, \text{sq, KL}}}{p_i} + \ln \left( \frac{\mathcal{I}^{K}_{\text{sq}}(m)}{\delta} \right)} \right)}},  \quad \forall i = 1, \ldots, H
\label{eqn:qFP_sqKL}
\end{equation}
\end{theorem}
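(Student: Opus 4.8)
The plan is to reuse the Lagrangian/partial-KKT machinery from the proof of Theorem~\ref{thm:qFP_klKL}, but the derivation here is more direct because the squared-distance bound \eqref{eqn:BsqKL} is already given in explicit form; unlike the KL-distance case, there is no auxiliary right-root variable $r$ to carry through the computation. First I would attach a multiplier $\mu_0$ to the normalization constraint $\sum_i q_i = 1$ and multipliers $\mu_i \ge 0$ to the nonnegativity constraints $q_i \ge 0$, giving
\begin{equation*}
\mathcal{L}_{\text{sq, KL}} = \sum_{i=1}^H \hat{l}_i q_i + \sqrt{\frac{\sum_{i=1}^H q_i \ln\frac{q_i}{p_i} + \ln\left(\frac{\mathcal{I}^{K}_{\text{sq}}(m)}{\delta}\right)}{m}} - \mu_0\left(\sum_{i=1}^H q_i - 1\right) - \sum_{i=1}^H \mu_i q_i.
\end{equation*}

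Next I would invoke the strict-positivity argument recorded in the Note following Theorem~\ref{thm:qFP_klKL}: because $KL[Q\Vert P]=\sum_i q_i\ln\frac{q_i}{p_i}$ appears in the objective, any stationary point must satisfy $q_i>0$ for all $i$ (otherwise $\ln q_i$ and the KKT relations are ill-defined), so complementary slackness forces $\mu_i=0$. Setting $\partial \mathcal{L}_{\text{sq, KL}}/\partial q_j = 0$ then requires differentiating the composite square-root term by the chain rule, using $\partial/\partial q_j\big(\sum_i q_i\ln\frac{q_i}{p_i}\big) = \ln\frac{q_j}{p_j}+1$. Writing the radicand as the scalar quantity that recurs throughout, the stationarity condition becomes
\begin{equation*}
\hat{l}_j + \frac{\ln\frac{q_j}{p_j} + 1}{2\sqrt{m}\,\sqrt{\sum_{i=1}^H q_i \ln\frac{q_i}{p_i} + \ln\left(\frac{\mathcal{I}^{K}_{\text{sq}}(m)}{\delta}\right)}} - \mu_0 = 0, \qquad j = 1,\ldots,H.
\end{equation*}

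Finally I would solve this relation for $q_j$: isolating $\ln\frac{q_j}{p_j}$ and exponentiating yields $q_j = p_j\,\exp\!\big(-2\sqrt{m}\,\hat{l}_j\sqrt{\,\cdot\,}\big)$ times a factor that is independent of $j$ (it collects the $\mu_0$ and constant terms). Eliminating that $j$-independent factor through the normalization $\sum_i q_i = 1$ turns it into the denominator sum, producing exactly the self-consistent expression \eqref{eqn:qFP_sqKL}. I expect the main subtlety to be that the radicand still contains $KL[Q\Vert P]$, so the expression for $q_j$ depends on the entire vector $Q$ through the square root; this is precisely why the stationarity condition cannot be resolved into a closed form and is instead stated as a fixed point equation. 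A secondary point worth noting is that differentiating the square root requires its argument to be strictly positive at the candidate point, which holds since $\ln(\mathcal{I}^{K}_{\text{sq}}(m)/\delta)>0$ dominates and $KL[Q\Vert P]\ge 0$, so the objective is differentiable on $int(\Delta^H)$ and the partial KKT system applies.
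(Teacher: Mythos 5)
Your proposal is correct and follows essentially the same route as the paper's own derivation in the supplementary material: the same Lagrangian, the same strict-positivity argument forcing $\mu_i = 0$, the same chain-rule stationarity condition, and the same elimination of $\mu_0$ via the normalization constraint to obtain the self-normalized fixed point form. Your added remark on the strict positivity of the radicand (hence differentiability on $int(\Delta^H)$) is a small but valid supplement to what the paper states.
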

\section{Choice of Regularization Parameter for SVMs} \label{secn:SVM_PACB}
For computations, we included nine datasets from UCI repository \cite{UCI:2017} with small to moderate number of examples (306 examples to 5463 examples)  and small to moderate number of 
features (3 features to 57 features). These datasets span a variety ranging from almost linearly separable (Banknote, Mushroom and Wave datasets) to moderately inseparable (Wdbc, Mammographic and Ionosphere datasets) to inseparable data (Spambase, Bupa and Haberman datasets). SVMs on these datasets have varying ranges and degrees of variation in their empirical risk values. We consider a finite set of SVM regularization parameter values $\Lambda = \lbrace \lambda_i \rbrace_{i = 1}^{H}$, say, between $0$ and an upper bound $\lambda_0 > 0$, since small values of $\lambda_i$'s are preferable. We took $\Lambda = \lbrace 0.1, 0.11, \ldots, 20\rbrace$ at a granularity of 0.01. SVM QP (with RBF kernels) was implemented using \texttt{ksvm} function in \texttt{kernlab} package \cite{kernlab} in { \em R (version 3.1.3 (2015-03-09))}. The Gaussian width parameter is estimated by \texttt{kernlab} using \texttt{sigest} function which estimates 0.1 and 0.9 quantiles of squared distance between the data points.

Each of these datasets was partitioned such that 80\% of the examples formed a composition of training set and validation set (in equal proportion) used for constructing the set $\mathcal{H} = \lbrace h(\lambda_i) | \lambda_i \in \Lambda \rbrace_{i = 1}^H$ of SVM classifiers and remaining 20\% used for computing their test error rates. The training set size ($m$), validation set size ($v$) and test set size ($t$) are in the ratio $m:v:t = 0.4:0.4:0.2$. The role of the validation set is to compute the empirical risk $\hat{l}_i$ of the SVM $h(\lambda_i) \in \mathcal{H}$ which will be used for deriving the PAC-Bayesian bound.  We follow the scheme provided in \cite{begin2016pac, thiemann2016quasiconvexPACB} to generate the set $\mathcal{H}$. Each classifier $h(\lambda_i) \in \mathcal{H}$ is trained on $m$ training examples subsampled from this composite set and validated on the remaining $v$ examples. Overlaps between training sets of different classifiers are allowed. Same is true for their validation sets. 

The PAC-Bayesian bound minimization problem for finding the optimal posterior was implemented in AMPL Interface and solved using \texttt{Ipopt} software package {\em (version 3.12 (2016-05-01))} \cite{ipopt}, a library for large-scale nonlinear optimization
(\url{http://projects.coin-or.org/Ipopt}). All computations were done on a machine equipped with 12 Intel Xeon 2.20 GHz cores and 64 GB RAM. We summarize comparisons among optimal posteriors for different distance functions in Table \ref{tab:phiKLCompare}.
 \paragraph{Fixed point solutions can be more reliable than solver output}
In case of KL-distance based bound, we observe that the FP scheme is able to converge to a stationary point even when solver fails to identify a local solution, as seen in Table  \ref{tab:Bnd.klKL}. More such cases are illustrated in Table 5 in supplementary file with 7 other datasets.
\begin{table}[]
{\footnotesize
\begin{center}
\setlength{\tabcolsep}{0.1em}
\begin{tabular}{|c|c c c c c||c c c c c|}
\hline
\textbf{Dataset} & \multicolumn{5}{c||}{\textbf{PAC-Bayesian Bound}, $B^{\ast}_{\phi, \text{KL}}$ } & \multicolumn{5}{c|}{\textbf{Average Test Error}, $T_{\phi, \text{KL}}$} \\
\cline{2-11}
 & $B^{\ast}_{\text{lin, KL}}$ & $B^{\ast}_{\text{sq, KL}}$ & $B^{\ast}_{\text{P, KL}}$ & $B^{\ast}_{\text{CH, KL}}$ &$B^{\ast}_{\text{kl, KL}}$ & $T_{\text{lin, KL}}$ & $T_{\text{sq, KL}}$ & $T_{\text{P, KL}}$ & $T_{\text{CH, KL}}$ &$T_{\text{kl, KL}}$
\\ \hline
Spambase & NaN & 0.20046 & 0.17361 & 0.17958 & \textbf{0.15332}\textcolor{magenta}{$\star$} & \textbf{0.15684} & \textbf{0.15392} & \textbf{0.15423} & \textbf{0.15434} & \textbf{0.15487}\textcolor{magenta}{$\star$} \\ \hline
Bupa & 0.27005 & \makecell{0.38167 \\ \textit{0.34547}} & 0.29265 & 0.30537 & \textbf{0.23851}\textcolor{magenta}{$\star$} & \textbf{0.13207} & \makecell{0.145801 \\ \textit{0.14873}} & 0.13631 & 0.13382 & \textbf{0.11998}\textcolor{magenta}{$\star$} \\ \hline
\makecell{Mammographic} & 0.29518 & \makecell{0.34187 \\ \textit{0.31290}}& 0.28790 & 0.29659 & \textbf{0.26063}\textcolor{magenta}{$\star$} & \textbf{0.20462} &\makecell{\textbf{0.21120} \\ \textit{0.21386}} &\textbf{0.20716} & \textbf{0.20628} & \textbf{0.20519}\textcolor{magenta}{$\star$} \\ \hline
Wdbc & 0.20706 & \makecell{0.26000 \\ \textit{0.22122}}& 0.20236 & 0.21646 & \textbf{0.14759}\textcolor{magenta}{$\star$} & \textbf{0.06489} & \makecell{\textbf{0.06901} \\ \textit{0.07052}}& \textbf{0.06650} & \textbf{0.06584} & \textbf{0.06541}\textcolor{magenta}{$\star$}\\ \hline
Banknote & 0.13647 & \makecell{0.13225 \\ \textit{0.10343}}& 0.09538 & 0.10672 & \textbf{0.02051} & 0.00161 & \makecell{0.00561 \\ \textit{0.00592}} & 0.00500 & 0.00469 & \textbf{0.00037} \\ \hline
Mushroom & NaN & 0.06584 & 0.04702 & 0.05399 & \textbf{0.00489} & \textbf{8.92{e-05}} & 0.00066 & 0.00057 & 0.00053 & \textbf{1.39{e-05}}\\ \hline
Ionosphere & 0.20816 & \makecell{0.30151 \\ \textit{0.25884}} & 0.22508 & 0.24011 & \textbf{0.14707}\textcolor{magenta}{$\star$} & \textbf{0.04494} & \makecell{\textbf{0.04781} \\ \textit{0.04899}}& \textbf{0.04393}
& \textbf{0.04553} & \textbf{0.04359}\textcolor{magenta}{$\star$} \\ \hline
Waveform & NaN & 0.12875 & 0.10335 & 0.11103 & \textbf{0.06338} & 0.05847 & \textbf{0.05175} & \textbf{0.05276} & \textbf{0.05345} & 0.05792 \\ \hline
Haberman & \textbf{0.37277} & \makecell{ 0.48385 \\ \textit{0.43977}} & \textbf{0.39769} & 0.41178 & \textbf{0.37998}\textcolor{magenta}{$\star$} & \textbf{0.29157} & \makecell{\textbf{0.29069} \\ \textit{0.29007}} & \textbf{0.29163} & \textbf{0.29162} & \textbf{0.28997}\textcolor{magenta}{$\star$} \\ 
\hline
\end{tabular}
\end{center}
}
\vspace{-3mm}
\caption[PAC-Bayesian bounds and averaged test error rates for $Q^{\ast}_{\phi,\text{KL}}$]{\small \textbf{PAC-Bayesian bounds and averaged test error rates for $Q^{\ast}_{\phi,\text{KL}}$} We compare bound values $B^{\ast}_{\phi,\text{KL}}$ and average test error rates $T_{\phi,\text{KL}}$ of optimal posteriors due to five distance functions, $\phi$:  KL-divergence $kl$, its Pinsker's approximation $\phi_\text{P}$ and a sixth degree polynomial approximation $\phi_\text{CH}$; linear $\phi_{\text{lin}}$ and squared distances $\phi_{\text{sq}}$ for $H=1990$ SVM classifiers. For large sample size ($m \geq 1028$), $\mathcal{I}^K_\text{lin}(m)$ cannot be computed due to storage limitations for floating point numbers and in that case, $B^{\ast}_{\text{lin,KL}}$ is denoted by NaN. $Q^\ast_\text{sq, KL}$ was determined using: $2\sqrt{m}$ (in regular font) and $\mathcal{I}^K_\text{sq}(m)$ (in italicized font). $\mathcal{I}^K_\text{sq}(m)$ cannot be computed for $m \geq 1028$ due to storage limitations. For such cases, we report the values computed using $2\sqrt{m}$ alone. \textcolor{magenta}{$\star$} refers to values obtained using fixed point(FP) equation because \texttt{Ipopt} solver does not converge. Lowest 10\% bound values and test error rates for each dataset are denoted in bold face. $kl$ has the tightest bound and lowest 10\% error rate for most datasets, but is computationally expensive and has multiple local minima. Between $\phi_\text{P}$ and $\phi_\text{CH}$, the latter has lower test error values but a slightly complicated bound evaluation. \deleted{$\phi_\text{sq}$ is ranked lowest on bound values and test error rates, followed by $\phi_\text{CH}$ and $\phi_\text{P}$.} $\phi_\text{sq}$ and $\phi_\text{P}$ are related by a scaling ($\phi_\text{P} = 2\phi_\text{sq}$). $\phi_\text{P}$ provides a lower bound value than that of $\phi_\text{sq}$, but both have comparable test set performances with differences of at most 3\%. $\phi_\text{lin}$ has second lowest bound value for all datasets (except where $m \geq 1028$, namely, Spambase, Mushroom and Waveform, where $B^{\ast}_{\text{lin}, KL}$ cannot be computed) and also has the lowest 10\% test error rates for most datasets. All 5 $\phi$s have lowest 10\% test error values on most datasets considered, except for Bupa dataset and two almost separable datasets, Banknote and Mushroom, where $\phi_\text{lin}$ and $\phi_\text{kl}$ do better.}
\label{tab:phiKLCompare}
{ \setlength{\tabcolsep}{0.1em}
\footnotesize
\begin{tabular}{|l|c|c|c|c|c|c|c|c|c|c|c|}
\hline
\diagbox{\textbf{Dataset}}{\textbf{H}} & \multicolumn{2}{c|}{\textbf{50}} & \multicolumn{2}{c|}{\textbf{200}} & \multicolumn{2}{c|}{\textbf{500}} & \multicolumn{2}{c|}{\textbf{1000}} & \multicolumn{2}{c|}{\textbf{1990}} \\  \cline{2-11}
\makecell{(Validation  \\ set size, $v$)}& $B^{FP}_{\text{kl, KL}}$ &  $B^{solver}_{\text{kl, KL}}$  &  $B^{FP}_{\text{kl, KL}}$ &  $B^{solver}_{\text{kl, KL}}$ & $B^{FP}_{\text{kl, KL}}$ &  $B^{solver}_{\text{kl, KL}}$ & $B^{FP}_{\text{kl, KL}}$ &  $B^{solver}_{\text{kl, KL}}$  & $B^{FP}_{\text{kl, KL}}$ &  $B^{solver}_{\text{kl, KL}}$ \\ \hline
\makecell{Spambase \\ $(v = 1840)$ } & 0.14726 & 0.14726  & 0.14942 & 0.14942  & 0.15157 & 0.27004(\textbf{E}) & 0.15202 & 0.29484(\textbf{E})  & 0.15332 & 0.31452(\textbf{E})
 \\ \hline
 \makecell{Bupa \\ $( v = 138)$ }  & 0.20833 & 0.20833    & 0.22006 & 0.22006   & 0.22750 &  0.43732(\textbf{E})  & 0.23300 & 0.50867(\textbf{E})   & 0.23851 & 0.57682(\textbf{E})
 \\ \hline
\end{tabular}}
\caption[Bound values for kl-KL case]{\small Comparing bound values due to fixed point solution, $B^{KKT}_{\text{kl, KL}}$, and bound values due to solver output, $B^{solver}_{\text{kl, KL}}$, for bound minimization problem \eqref{eqn:BklKLOP} involving KL-distance function with KL-divergence measure. We observe that the fixed point equation always converges to a solution, even when the \texttt{Ipopt} solver is not able to identify a solution (denoted by `\textbf{E}' (Unknown Error)). Other examples of solver failure are in Table 5 in Suppl. file (eg.  `\textbf{R}' (Restoration Phase Failed) or `\textbf{M}' (Maximum Number of Iterations Exceeded)).}
\label{tab:Bnd.klKL}
\end{table}

\section{Conclusion and Future Directions} \label{secn:conclusion}
We considered the PAC-Bayesian bound minimization problem for a finite classifier set with 5 distance functions. 
The optimal posterior weights are negative-exponentially decreasing with empirical risk values. For linear distance and uniform prior, weights are negative-exponentially proportional to number of misclassifications. Since some of these minimization problems are non-convex, we proposed fixed point (FP) iterates to identify posteriors with good test error rates. We apply these ideas for choosing SVM regularization parameter via an optimal posterior on the regularization parameter set, yielding a stochastic SVM.

As a part of the future work, we wish to investigate the convergence of FP iterates, and the reason for uniqueness of local minimum for some non-convex cases. For a comparative study, we can consider the PAC-Bayesian counterpart based on R{\'e}nyi divergence between posterior and prior (proposed by \cite{begin2016pac}) for the distance functions considered.


\bibliography{ACML2019bibDOI}

 \appendix
\section{No unique best parameter}
We are given a dataset and we fix a set of $\lambda$ values to choose from. Let this set be denoted as $\Lambda = \lbrace \lambda_1, \ldots, \lambda_H \rbrace$ where $H$ is the number of parameter values that we consider. To generate the classifiers, we create training samples by partitioning the given dataset. On every sample, we learn SVM classifiers by considering each parameter value in the set $\Lambda$ and then choose the best value $\lambda^\ast(S)$  for a sample $S$ by comparing the 0-1 training errors of the classifiers obtained. To see how these values fare on the scale of generalization performance, we compare their test error rates computed on a common test set. In the adjoining Figure \ref{fig:nobestlam}, we plot the test error rates of the best parameter value $\lambda^\ast(S)$ (with this best value mentioned above the lines representing the error rates) for each sample in the set of samples drawn from a UCI dataset \cite{UCI:2017}.

From Figure \ref{fig:nobestlam}, we observe that the best parameter value is sample dependent and that there is no unique value which is best for almost all the samples (at least 75\% of the samples). However, if we determine the set of $\lambda$ values with lowest $\rho\%$ error rates on each sample, we observe a recurring subset of $\lambda$ values across the training samples. 
Thus, we have an ensemble of values to pick from as in Table \ref{tab:top30perclambda_mamm}. The PAC-Bayesian approach does such a stochastic selection.

\begin{figure}[ht] 
\centering
\includegraphics[width=0.45\textwidth]{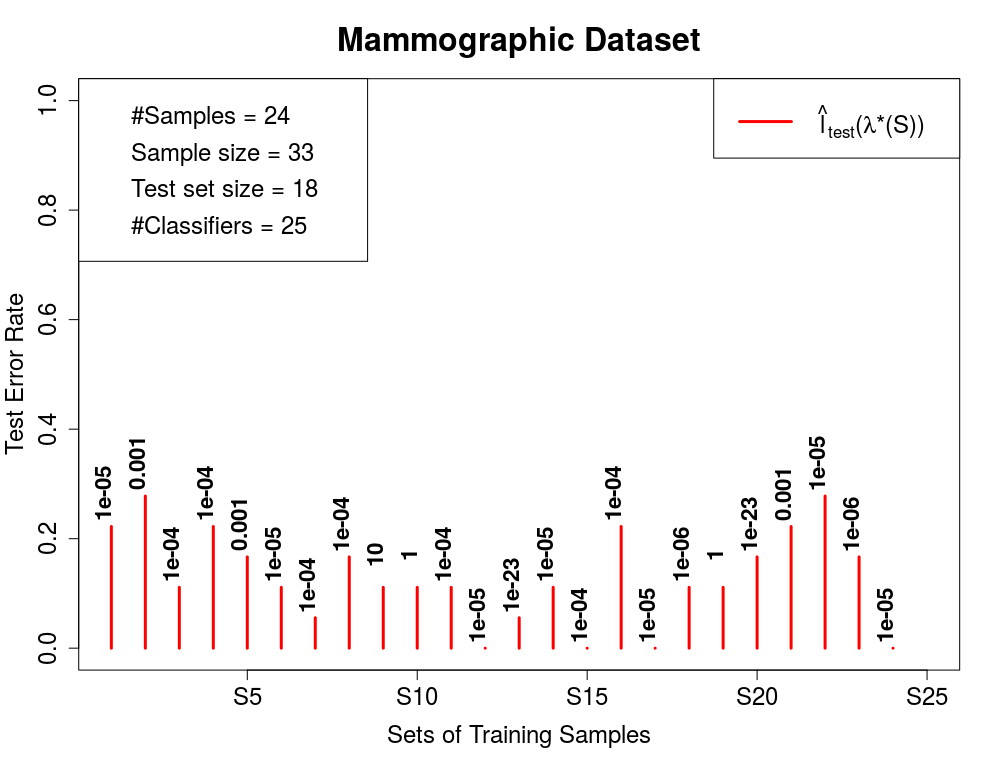} 
\includegraphics[width=0.45\textwidth]{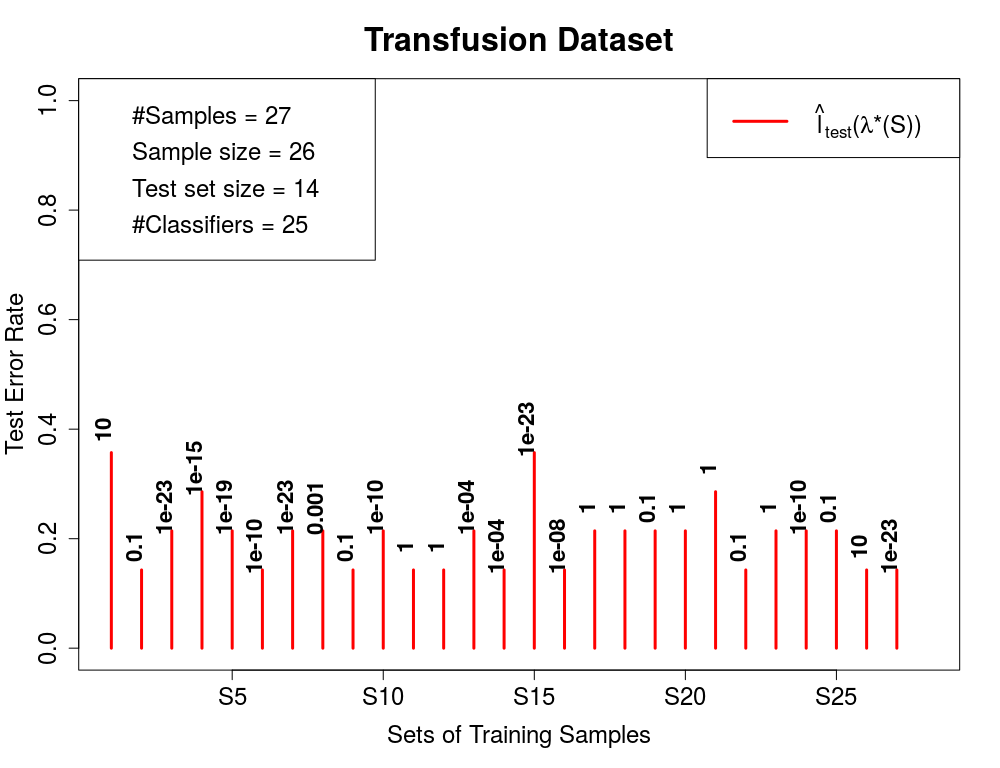}
\caption[No best parameter across samples]{We illustrate that there is no unique value of the regularization parameter $\lambda$ of the SVM algorithm which has the lowest 0-1 training error for almost all the samples considered.  We partition the dataset into training samples of equal size indicated on the horizontal axis of the graph. A total of $H = 25$ values of the regularization parameter ($\Lambda = \lbrace \lambda_i := 10^{(2-i)} \rbrace_{i =1}^{25}$; ranging from $10$ to $10^{-23}$ on a log scale with base 10) were used for obtaining SVMs on each of these training samples. For every sample $S_j$, the best parameter value $\lambda^{\ast}(S_j)$ in the set $\Lambda$ was determined based on the lowest training error obtained on this sample. A test set was set aside beforehand. This common test set was used for evaluating the classifiers generated by different samples, $S_j$s and different parameter values, $\lambda_i$s. We plot the test error rate $\hat{l}_{\text{test}}(\lambda^{\ast}(S_j))$ for each sample $S_j$ along with the best parameter value $\lambda^{\ast}(S_j)$. We can easily see that there is no single value of $\lambda_i$ which serves as best parameter value of most of the samples. \label{fig:nobestlam}}
\end{figure}

\begin{table}[ht]
\centering
\begin{tabular}{c|rrrrrrrr}
  \hline
\makecell{Sample \\Index} & $\lambda^\ast_1$ & $\lambda^\ast_2$ & $\lambda^\ast_3$ & $\lambda^\ast_4$ & $\lambda^\ast_5$ & $\lambda^\ast_6$ & $\lambda^\ast_7$ &$\lambda^\ast_8$ \\ 
  \hline
$S_1$ & \textbf{1e-05} & \textbf{0.1} &   \textbf{1} &  \textbf{10} & \textbf{0.0001} & \textbf{0.001} & \textbf{0.01} & \textbf{1e-23} \\ 
  $S_2$ & \textbf{0.001} & \textbf{0.01} & \textbf{0.1} &   \textbf{1} &  \textbf{10} & \textbf{0.0001} & \textbf{1e-23} & 1e-22 \\ 
  $S_3$ & \textbf{0.0001} & \textbf{0.001} & \textbf{0.01} & \textbf{0.1} &   \textbf{1} &  \textbf{10} & \textbf{1e-05} & \textbf{1e-23} \\ 
  $S_4$ & \textbf{0.0001} & \textbf{0.001} & \textbf{0.01} & \textbf{0.1} &   \textbf{1} &  \textbf{10} & \textbf{1e-23} & 1e-22 \\ 
  $S_5$ & \textbf{0.001} & \textbf{0.01} & \textbf{0.1} & \textbf{0.0001} &   \textbf{1} &  \textbf{10} & 1e-06 & \textbf{1e-05} \\ 
  $S_6$ & \textbf{1e-05} & \textbf{0.001} & \textbf{0.01} & \textbf{0.1} &   \textbf{1} & \textbf{0.0001} &  \textbf{10} & \textbf{1e-23} \\ 
  $S_7$ & \textbf{0.0001} & \textbf{0.001} & \textbf{0.01} & \textbf{0.1} &   \textbf{1} &  \textbf{10} & \textbf{1e-05} & \textbf{1e-23} \\ 
  $S_8$ & \textbf{0.0001} & \textbf{0.001} & \textbf{0.01} & \textbf{0.1} & \textbf{1e-05} &   \textbf{1} &  \textbf{10} & 1e-06 \\ 
  $S_9$ &  \textbf{10} & \textbf{1e-23} & 1e-22 & 1e-21 & 1e-20 & 1e-19 & 1e-18 & 1e-17 \\ 
  $S_{10}$ &   \textbf{1} & \textbf{1e-05} & \textbf{0.01} & \textbf{0.1} &  \textbf{10} & \textbf{0.0001} & \textbf{0.001} & 1e-06 \\ 
  $S_{11}$ & \textbf{0.0001} & \textbf{0.001} & \textbf{0.01} & \textbf{0.1} &   \textbf{1} &  \textbf{10} & \textbf{1e-23} & 1e-22 \\ 
  $S_{12}$ & \textbf{1e-05} & \textbf{0.0001} & \textbf{0.001} & \textbf{0.01} & \textbf{0.1} &   \textbf{1} &  \textbf{10} & \textbf{1e-23} \\ 
  $S_{13}$ & \textbf{1e-23} & 1e-22 & 1e-21 & 1e-20 & 1e-19 & 1e-18 & 1e-17 & 1e-16 \\ 
  $S_{14}$ & \textbf{1e-05} & \textbf{0.0001} & \textbf{0.001} & \textbf{0.01} & \textbf{0.1} &   \textbf{1} &  \textbf{10} & \textbf{1e-23} \\ 
  $S_{15}$ & \textbf{0.0001} & \textbf{0.001} & \textbf{0.01} & \textbf{0.1} &   \textbf{1} &  \textbf{10} & \textbf{1e-05} & \textbf{1e-23} \\ 
  $S_{16}$ & \textbf{0.0001} & \textbf{0.001} & \textbf{0.01} & \textbf{0.1} & \textbf{1e-23} & 1e-22 & 1e-21 & 1e-20 \\ 
  $S_{17}$ & \textbf{1e-05} & \textbf{0.0001} & \textbf{0.001} & \textbf{0.01} & \textbf{0.1} &   \textbf{1} &  \textbf{10} & 1e-06 \\ 
  $S_{18}$ & 1e-06 & \textbf{1e-05} & \textbf{0.0001} & \textbf{0.001} & \textbf{0.01} &   \textbf{1} & \textbf{0.1} &  \textbf{10} \\ 
  $S_{19}$ &   \textbf{1} &  \textbf{10} & \textbf{0.0001} & \textbf{0.001} & \textbf{0.01} & \textbf{0.1} & 1e-06 & \textbf{1e-05} \\ 
  $S_{20}$ & \textbf{1e-23} & 1e-22 & 1e-21 & 1e-20 & 1e-19 & 1e-18 & 1e-17 & 1e-16 \\ 
  $S_{21}$ & \textbf{0.001} & \textbf{0.01} & \textbf{0.1} &   \textbf{1} &  \textbf{10} & \textbf{0.0001} & \textbf{1e-23} & 1e-22 \\ 
  $S_{22}$ & \textbf{1e-05} & \textbf{0.0001} & \textbf{0.001} & \textbf{0.01} & \textbf{0.1} &   \textbf{1} &  \textbf{10} & \textbf{1e-23} \\ 
  $S_{23}$ & 1e-06 & \textbf{1e-05} & \textbf{0.0001} & \textbf{0.001} & \textbf{0.01} & \textbf{0.1} &   \textbf{1} &  \textbf{10} \\ 
  $S_{24}$ & \textbf{1e-05} & \textbf{0.0001} & \textbf{0.001} & \textbf{0.01} & \textbf{0.1} &   \textbf{1} &  \textbf{10} & \textbf{1e-23} \\ 
  \hline
\end{tabular}
\caption[SVM parameter values with lowest training errors across training samples]{\textbf{SVM parameter values with lowest training errors across training samples}. We partition the Mammographic dataset (830 samples, 5 features) into 24 training samples of equal size ($m = 33$). A total of $H = 25$ values of the regularization parameter were used for obtaining SVMs on each of these training samples ($\Lambda = \lbrace \lambda_i := 10^{(2-i)} \rbrace_{i =1}^{25}$; ranging from $10$ to $10^{-23}$ on a log scale with base 10). Each row represents top 30\% ($\approx 8$ values) parameter values ranked by increasing training errors on a training sample from the set of samples used. We observe a few \textit{recurring} values of $\lambda_i$s (represented in boldface) across different sample. This subset of $\Lambda$, namely $\lbrace 1\mathrm{e}{-23}, 0.0001, 0.001, 0.01, 0.1, 1 , 10 \rbrace$, generates classifiers which are among the top performers (on the training error values) for most of the samples (about 21 of the 24 samples), save a few (namely, samples $S_9$, $S_{13}$ and $S_{20}$). \label{tab:top30perclambda_mamm}}
\end{table}

 \section{PAC-Bayesian bound illustration}
 We illustrate PAC-Bayesian bounds with two distance functions -- linear distance and squared distance function. We state the correspoding PAC-Bayesian theorems below:
 
\begin{eqnarray}
\mathbb{P}_{S_m} \left \lbrace \forall Q \text{ on } \mathcal{H}: \mathbb{E}_Q[l] - \mathbb{E}_Q[\hat{l}] \leq  \frac{KL[Q||P] + \ln\left( \frac{\mathcal{I}^{K}_{{\text{lin}}}(m)}{\delta} \right)}{m} \right \rbrace \geq 1 - \delta. \label{eqn:linKLPACB}\\
\mathbb{P}_{S_m} \left \lbrace  \forall Q \text{ on } \mathcal{H}: (\mathbb{E}_Q[l] - \mathbb{E}_Q[\hat{l}])^2 \leq  \frac{KL[Q||P] + \ln\left( \frac{\mathcal{I}^{K}_{{\text{sq}}}(m)}{\delta} \right)}{m} \right \rbrace \geq 1 - \delta. \label{eqn:sqKLPACB}
\end{eqnarray} 

 \begin{figure}[ht]
    \centering
    \includegraphics[width = 0.8\textwidth]{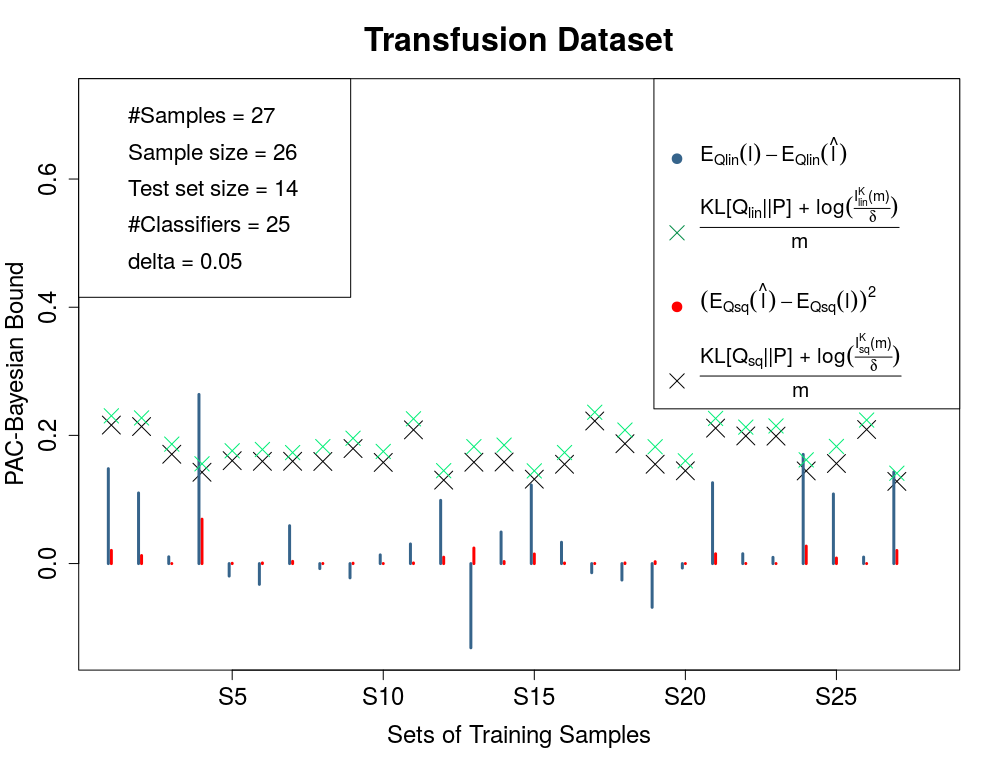}
    \caption{\textbf{Robustness of PAC-Bayesian bound to training sample}. The blue vertical lines correspond to LHS of \eqref{eqn:linKLPACB} and the green cross marks represent the RHS of \eqref{eqn:linKLPACB} for different training samples partitioned from Transfusion dataset. We observe that the RHS bounds the LHS for most of the samples (high probability). The red vertical lines correspond to LHS of \eqref{eqn:sqKLPACB} and the black cross marks represent the RHS of \eqref{eqn:sqKLPACB} for different training samples. In this case, RHS bounds LHS for all the training samples (high probability).}
    \label{fig:samplerobustPACB}
\end{figure}

\section{Optimal posterior, $Q^{\ast}_{\phi, \text{KL}}$,  for uniform prior}
We consider the special case of uniform prior on entire $\mathcal{H}$. We want to identify the optimal posterior $Q^{\ast}_{\phi, \text{KL}}$ with the $H$-dimensional probability simplex as the feasible region. We show below that it is enough to restrict the search space to certain subsets of this simplex. This reduces the computational complexity of the search from exponential scale to linear scale.

\begin{theorem} \label{thm:increasing.subsets.phiKL}
Consider a uniform prior distribution on the set $\mathcal{H}$ of classifiers, and a given set of posterior weights $Q = \lbrace q_j \rbrace_{j = 1}^{H'}$. We have three choices of distance function $\phi = \lbrace \phi_{\text{lin}}, \phi_{\text{sq}}, kl \rbrace$. Then among all subsets $\mathcal{H}' \subset \mathcal{H}$ of size $H'$, the smallest bound value $B_{\phi, \text{KL}}(Q, \mathcal{H}')$ corresponding to the given posterior weights $Q$ is achieved when $\mathcal{H}'$ is the subset formed by the first $H'$ elements of the ordered set of classifiers ranked by non-decreasing empirical risk values, $\hat{l}_1 \leq \hat{l}_2 \leq \ldots \leq \hat{l}_{H}$.
\end{theorem}
\begin{proof}
We first consider the case of linear and squared distance based bounds. Under the given set up, these bound functions are defined as follows:
\begin{align}
B_{\text{lin, KL}}(Q, \mathcal{H}') &:= \sum_{i\in \mathcal{H'}} \hat{l}_i q_i + \frac{\sum\limits_{i \in \mathcal{H}'}q_i \ln q_i + \ln H + \ln\left( \frac{\mathcal{I}^{K}_{{\text{lin}}}(m)}{\delta} \right)}{m}\label{eqn:BlinKL_unifP.genQ} \\
B_{\text{sq, KL}}(Q, \mathcal{H}') &:= \sum_{i\in \mathcal{H'}} \hat{l}_i q_i + \sqrt{\frac{\sum\limits_{i \in \mathcal{H}'}q_i \ln q_i + \ln H + \ln\left( \frac{\mathcal{I}^{K}_{{\text{sq}}}(m)}{\delta} \right)}{m}}. \label{eqn:BsqKL_unifP.genQ}
\end{align} 

For a given set of posterior weights $\lbrace q_j \rbrace_{j = 1}^{H'}$, the second terms of $B_{\text{lin}, \chi^2}(Q, \mathcal{H}')$ and $B_{\text{sq}, \chi^2}(Q, \mathcal{H}')$ are invariant of the support set $\mathcal{H}'$ as long as its cardinality is $H'$. Thus the value of the bound depends on the common first term which is a sum of positive quantities. For given weights $\lbrace q_j \rbrace_{j = 1}^{H'}$, the bounds \eqref{eqn:BlinKL_unifP.genQ} and \eqref{eqn:BsqKL_unifP.genQ} are the smallest when the sum $\sum_{i \in \mathcal{H}'} \hat{l}_i q_i$ is minimized. This will happen when $\mathcal{H}'$ consists of classifiers with smallest $H'$ values in the set $\lbrace \hat{l}_i \rbrace_{i = 1}^{H}$.
Furthermore, if the elements of $\mathcal{H}'$ are ordered by non-decreasing empirical risk values, $\hat{l}_1 \leq \hat{l}_2 \leq \ldots \leq \hat{l}_{H'}$, the posterior weights should be ordered non-increasingly. Hence, the claim of the theorem holds true.

Now, for the KL-divergence as distance function, the bound value, $r$, is the solution to following two equations:
\begin{align}
&kl\left( \sum_{i \in \mathcal{H}'} \hat{l}_i q_i , r \right) = \sum_{i\in \mathcal{H'}} \hat{l}_i q_i + \frac{\sum\limits_{i \in \mathcal{H}'}q_i \ln q_i + \ln H + \ln\left( \frac{2\sqrt{m}}{\delta} \right)}{m} \label{eqn:klDCcons.unifP.genQ}\\
& r \geq \sum_{i \in \mathcal{H}'} \hat{l}_iq_i \label{eqn:r>EQl.unifP.genQ}
\end{align}
The right hand side term of \eqref{eqn:klDCcons.unifP.genQ} is invariant of support $\mathcal{H}'$ as long as it is of size $H'$. Let $\hat{L} := \sum_{i \in \mathcal{H}'} \hat{l}_iq_i$, then \eqref{eqn:klDCcons.unifP.genQ} is an implicit function of variables $\hat{L}$  and $r$. Using implicit function theorem, we have
\begin{equation}
\frac{dr}{d\hat{L}} = \frac{- \partial kl/\partial \hat{L}}{ \partial kl/\partial r} = \frac{\ln \frac{\hat{L}}{r} - \ln \frac{1 - \hat{L}}{1 - r}}{\frac{\hat{L}}{r} - \frac{1 - \hat{L}}{1 - r}}
\end{equation}
Using \eqref{eqn:r>EQl.unifP.genQ} and strict monotonicity of natural logarithm function, we can claim that $\frac{dr}{d\hat{L}} > 0$. That is, the bound $r$ is a strictly increasing function of $\hat{L} := \sum_{i \in \mathcal{H}'} \hat{l}_iq_i$ under the given set up. To find the least $r$ for a given $Q(\mathcal{H}') = \lbrace q_j \rbrace_{j = 1}^{H'}$, we need to find the least $\sum_{i \in \mathcal{H}'} \hat{l}_iq_i$ on all possible subsets $\mathcal{H}'$. This happens when $\mathcal{H}'$ is the subset formed by the first ordered $H'$ elements $\hat{l}_1 \leq \hat{l}_2 \leq \ldots \leq \hat{l}_{H'}$. Hence proved.
\end{proof}

\begin{cor}
As a consequence of the above Theorem \ref{thm:increasing.subsets.phiKL}, for determining the (globally) optimal posterior $Q^{\ast}_{\phi, \text{KL}}$, it is sufficient to compare the bound values corresponding to the best  posteriors on ordered subsets of $\mathcal{H}$, ranked by non-decreasing $\hat{l}_i$ values. These ordered subsets can be uniquely identified by their size. An ordered subset of size 1 is $\lbrace \hat{l}_1 \rbrace$, of size 2 is $\lbrace \hat{l}_1, \hat{l}_2 \rbrace$ and so on. Thus there exists an isomorphism between the set $\lbrace 1, \ldots, H \rbrace$ (which denote the subset size) and the family of ordered increasing subsets of $\mathcal{H}$.
\end{cor}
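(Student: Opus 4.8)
The plan is to obtain the corollary as a direct consequence of Theorem \ref{thm:increasing.subsets.phiKL}, thereby collapsing the combinatorial search over all $2^H$ candidate support sets to a linear search over $H$ posteriors. First I would fix notation: for each size $H' \in \{1, \ldots, H\}$ let $\mathcal{H}'_{H'} := \{h_1, \ldots, h_{H'}\}$ be the initial segment in the non-decreasing empirical-risk order $\hat{l}_1 \le \cdots \le \hat{l}_H$, and let $Q^{\ast}_{H'}$ denote the bound-minimizing posterior whose support is $\mathcal{H}'_{H'}$ (its existence follows by continuity of $B_{\phi, \text{KL}}$ over the corresponding compact simplex face). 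The claim to establish is that some global minimizer of $B_{\phi, \text{KL}}$ over the full simplex attains the same bound value as the best of the $H$ candidates $Q^{\ast}_{1}, \ldots, Q^{\ast}_{H}$.

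The argument splits into two inequalities. The easy direction is that every $Q^{\ast}_{H'}$ is a feasible posterior on all of $\mathcal{H}$, so the global minimum value of the bound is at most $\min_{H'} B_{\phi, \text{KL}}(Q^{\ast}_{H'})$. For the reverse, substantive direction I would take a global minimizer $\bar{Q}$, let $H^{\star}$ be the cardinality of its support, and let $W$ be the associated multiset of weights. Applying Theorem \ref{thm:increasing.subsets.phiKL} to the fixed weight set $W$ shows that relocating these weights onto the ordered initial segment $\mathcal{H}'_{H^{\star}}$, paired non-increasingly against the increasing risks, yields a posterior whose bound is no larger than $B_{\phi, \text{KL}}(\bar{Q})$; by global optimality of $\bar{Q}$ the relocated posterior is itself optimal. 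Re-optimizing the weights on this fixed support can only leave the bound unchanged or smaller, so $B_{\phi, \text{KL}}(Q^{\ast}_{H^{\star}}) \le B_{\phi, \text{KL}}(\bar{Q})$, and hence $\min_{H'} B_{\phi, \text{KL}}(Q^{\ast}_{H'}) \le B_{\phi, \text{KL}}(\bar{Q})$. Combining the two inequalities gives equality, so it suffices to compare the $H$ ordered-subset candidates.

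The isomorphism statement is then pure bookkeeping: the family of ordered increasing subsets is exactly $\{\mathcal{H}'_{1}, \ldots, \mathcal{H}'_{H}\}$, and the map $H' \mapsto \mathcal{H}'_{H'}$ is a bijection onto this family, since an initial segment of a totally ordered set is determined uniquely by its cardinality and, conversely, each such subset has a well-defined size. This identifies the search index with $\{1, \ldots, H\}$ and delivers the stated reduction from exponential to linear scale.

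The step I expect to be the main obstacle is making the reverse inequality fully rigorous rather than merely plausible. Theorem \ref{thm:increasing.subsets.phiKL} is phrased for a fixed set of weights, so I must justify decoupling the choice of support from the choice of weights: first holding $\bar{Q}$'s weights fixed while moving them onto the ordered support, then re-optimizing the weights on that support, verifying that neither operation increases the bound or leaves the simplex. A secondary subtlety is ties among the $\hat{l}_i$, where the initial segment $\mathcal{H}'_{H'}$, and hence $Q^{\ast}_{H'}$, need not be unique; I would remark that any tie-breaking produces the same bound value, so the conclusion is unaffected.
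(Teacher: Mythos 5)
Your proposal is correct and follows the same route the paper intends: the corollary is stated without proof as an immediate consequence of Theorem \ref{thm:increasing.subsets.phiKL}, and your two-inequality sandwich (feasibility of each ordered-segment optimizer for one direction; relocating a global minimizer's weights onto the initial segment via the theorem and then re-optimizing for the other) is exactly the implicit argument, made rigorous. Your remarks on existence of the restricted minimizers and on ties among the $\hat{l}_i$ are sensible additions that the paper omits.
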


\section{PAC-Bayesian bound with KL-divergence as distance function}

\subsection{The posterior based on fixed point solution, $Q^{FP}_\text{kl,KL}$}
\begin{theorem} \label{thm:qFP_klKL}
The bound minimization problem (6) (in paper) for the bound $B_{\text{kl, KL}}(Q)$ has a stationary point $Q^{\text{FP}}_\text{kl,KL}$ which can be obtained as the solution to the following fixed point equation:
\begin{equation}
q_i = p_i \exp \left \lbrace \sum_{i=1}^{H}q_i\ln \frac{q_i}{p_i} - m\left(\sum_{i=1}^{H}\hat{l}_iq_i - \hat{l}_i \right)\left[ \ln\left( \frac{(1 - r) \sum_{i = 1}^H \hat{l}_i q_i}{r(1-\sum_{i = 1}^H \hat{l}_i q_i)}\right) \right] \right \rbrace \; \forall i = 1, \ldots, H
\label{eqn:qFP_klKL}
\end{equation}
where $r$ is the solution to (6b) and (6c) in paper for a given $Q = (q_1, \ldots, q_H)$.
\end{theorem}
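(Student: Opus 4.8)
The plan is to obtain the fixed-point equation as a set of first-order KKT stationarity conditions for the Lagrangian $\mathcal{L}_{\text{kl, KL}}$ already displayed, and then to eliminate the surviving Lagrange multipliers by a self-referential summation trick. First I would record the two simplifications built into the partial KKT system: the strict inequality \eqref{eqn:r>EQl} forces $\beta_1 = 0$ by complementary slackness, and the strict positivity $q_i > 0$ (needed for $\ln q_i$ to be defined) forces $\mu_i = 0$ for all $i$. With these in hand, only $\beta_0$ and $\mu_0$ remain, and the stationarity conditions reduce to derivatives with respect to $r$ and each $q_j$.

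Writing $\hat{L} := \mathbb{E}_Q[\hat{l}] = \sum_i \hat{l}_i q_i$, I would next compute $\partial \mathcal{L}_{\text{kl, KL}}/\partial r = 0$. Differentiating the $kl$-term in $r$ gives $1 - \beta_0\bigl(\tfrac{1-\hat{L}}{1-r} - \tfrac{\hat{L}}{r}\bigr) = 0$, so $\beta_0 = \bigl(\tfrac{1-\hat{L}}{1-r} - \tfrac{\hat{L}}{r}\bigr)^{-1}$. The constraint \eqref{eqn:r>EQl} guarantees $r > \hat{L}$, which makes this bracket strictly positive; hence $\beta_0$ is finite and nonzero, legitimizing the division by $\beta_0$ performed below. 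I would then compute $\partial \mathcal{L}_{\text{kl, KL}}/\partial q_j = 0$: differentiating the $kl$-term through $\hat{L}$ produces the common factor $A := \ln\tfrac{\hat{L}(1-r)}{r(1-\hat{L})}$, while differentiating the $KL[Q||P]$ term produces $\tfrac{1}{m}\bigl(\ln\tfrac{q_j}{p_j} + 1\bigr)$. Collecting terms and dividing by $\beta_0$ yields $\ln\tfrac{q_j}{p_j} = m\hat{l}_j A + C$, i.e. $q_j = p_j \exp\{m\hat{l}_j A + C\}$, where $C$ gathers the contributions of $\mu_0$, $\beta_0$ and the additive $-1$ and is the same constant for every $j$.

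The crucial step, and the one I expect to be the main obstacle, is eliminating $C$ without solving for $\mu_0$ explicitly. The idea is to make the stationarity relation self-referential: multiply $\ln\tfrac{q_j}{p_j} = m\hat{l}_j A + C$ by $q_j$ and sum over $j$. Using $\sum_j q_j = 1$ together with the definitions $KL[Q||P] = \sum_j q_j \ln\tfrac{q_j}{p_j}$ and $\hat{L} = \sum_j \hat{l}_j q_j$, the left side collapses to $KL[Q||P]$ and the right side to $m\hat{L}A + C$, whence $C = KL[Q||P] - m\hat{L}A$. Substituting back gives $q_j = p_j \exp\{KL[Q||P] - m(\hat{L} - \hat{l}_j)A\}$, which is exactly \eqref{eqn:qFP_klKL} once $A$ is written out and $r$ is taken as the solution of \eqref{eqn:klDCcons}–\eqref{eqn:r>EQl} for the current $Q$.

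Finally I would emphasize why this is a fixed-point characterization rather than a closed form: since $KL[Q||P]$, $\hat{L}$ and $r$ all depend on the unknown $Q$, the relation is implicit and self-referential, so it defines a stationary point only as a solution of the associated iteration. I would close by remarking that because problem \eqref{eqn:BklKLOP} is a non-convex DC program, these KKT conditions are necessary for a stationary point but not sufficient for global optimality, which is consistent with the theorem asserting only that $Q^{\text{FP}}_{\text{kl,KL}}$ is \emph{a} stationary point.
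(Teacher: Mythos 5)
Your proposal is correct and follows essentially the same route as the paper's own derivation (given in detail in its Appendix D.1): form the Lagrangian, kill $\beta_1$ via complementary slackness on the strict inequality $r > \sum_i \hat{l}_i q_i$, argue $q_i > 0$ so $\mu_i = 0$, solve $\partial\mathcal{L}/\partial r = 0$ for $\beta_0 = \tfrac{r(1-r)}{r-\hat{L}} > 0$, and eliminate the remaining multiplier by multiplying the $q_i$-stationarity condition by $q_i$ and summing over $i$ using $\sum_i q_i = 1$ — which is exactly the paper's step for solving for $\mu_0$. No gaps.
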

The Lagrangian function for (6) (in paper) can be written as follows:
\begin{multline}
\mathcal{L}_{\text{kl, KL}}(Q,r,\beta_0, \beta_1, \mu_0,\mu_i) = r - \beta_0 \left[ \left(\sum_{i = 1}^H \hat{l}_iq_i \right) \ln\left( \frac{\sum_{i = 1}^H \hat{l}_iq_i}{r}\right)  \right.\\
\left. + \left(1 - \sum_{i = 1}^H \hat{l}_iq_i \right) \ln\left( \frac{ 1  - \sum_{i = 1}^H \hat{l}_iq_i}{1 - r}\right) - \frac{\left(\sum_{i = 1}^H q_i \ln \frac{q_i}{p_i} + \ln\frac{2\sqrt{m}}{\delta} \right)}{m} \right]  \\
 -\beta_1 \left( r - \sum_{i = 1}^H \hat{l}_iq_i \right) - \mu_0 \left(\sum_{i=1}^H q_i -1 \right) - \sum_{i = 1}^H \mu_i q_i
\end{multline}
Note that we have a strict inequality constraint in our optimization model (6) (in paper), namely, $r > \sum_{i = 1}^H \hat{l}_iq_i$. Hence, due to complementary slackness conditions for a stationary point, the associated Lagrange multiplier $\beta_1$ should vanish at optimality, i.e., $\beta_1 = 0$. 

Differentiating $\mathcal{L}_{\text{kl, KL}}$ with respect to primal variables $r$ and $q_i$s, and also with respect to dual variable $\mu_0$, we get:
\begin{align}
\frac{\partial \mathcal{L}_{\text{kl, KL}}}{\partial r} &= 1- \beta_0\left[- \left(\frac{\sum_{i=1}^{H}\hat{l}_iq_i}{r}\right) + \left( \frac{1-\sum_{i = 1}^H \hat{l}_i q_i}{1-r}\right)\right] \label{eqn:deriver_klKL}\\
\frac{\partial \mathcal{L}_{\text{kl, KL}}}{\partial q_i} 
&= -\beta_0 \left\{ \hat{l}_i \left[ \ln\left( \frac{\sum_{i = 1}^H \hat{l}_i q_i}{r} \right) - \ln \left( \frac{1-\sum_{i = 1}^H \hat{l}_i q_i}{1-r}\right)\right] - \left( \frac{1+\ln \frac{q_i}{p_i}}{m}\right) \right\} \nonumber \\
&\hspace{8cm} -\mu_0 -\mu_i \; \forall i = 1, \ldots, H \label{eqn:deriveq_klKL}\\
\frac{\partial \mathcal{L}_{\text{kl, KL}}}{\partial \mu_0} &= \sum_{i=1}^{H}q_i -1 \label{eqn:derivmu0_klKL}
\end{align}
At an optimal solution, these derivatives should be set to zero. Let us first consider the derivative \eqref{eqn:deriver_klKL} and set it to zero. We get:
\begin{align}
\beta_0 = \frac{r(1-r)}{r - \sum\limits_{i=1}^{H}\hat{l}_iq_i} >0 \label{eqn:beta0LagrangeklKL}
\end{align}
The denominator in above is strictly positive since $r>\sum\limits_{i=1}^{H}q_i\hat{l}_i$ . The inequality constraint also implies that $r \in (0, 1)$, which means that the numerator term is also strictly positive. Hence, we have $\beta_0 > 0$ which is a feasible value for the Lagrange parameter.

Next consider the derivative \eqref{eqn:deriveq_klKL} of the Lagrange $\mathcal{L}_{\text{kl, KL}}$. We multiply it with $q_i$ and set it zero to get:
\begin{align}
&q_i\frac{\partial \mathcal{L}_{\text{kl, KL}}}{\partial q_i} = 0 \quad \forall i = 1, \ldots, H \nonumber\\
\Rightarrow & -\beta_0 \left\{ \hat{l}_iq_i \left[ \ln\left( \frac{\sum_{i = 1}^H \hat{l}_i q_i}{r}\right) - \ln \left( \frac{1-\sum_{i = 1}^H \hat{l}_i q_i}{1-r}\right)\right] - \left( \frac{q_i +q_i\ln \frac{q_i}{p_i}}{m}\right) \right\}  - \mu_0q_i -\mu_iq_i =0 \label{eqn:qmultiplyLagrangederive}
\end{align}
where $\mu_iq_i = 0$ due to complementary slackness conditions, since $\mu_i$ is the Lagrange multiplier for the constraint $q_i \geq 0$.

We assume that $q_i > 0$ for all $i = 1, \ldots, H$, since otherwise $\ln q_i = \ln(0)$ is undefined. Even if we use fact that $\lim_{x \rightarrow 0^{+}} \ln x = - \infty$, the KKT condition will mean that the dual variable $\mu_i$ is infeasible. Therefore our assumption holds true for a stationary point.
Due to complementary slackness conditions, we have $\mu_i q_i = 0$ which implies $\mu_i = 0$ since $q^{\ast}_i > 0$ for all $i = 1, \ldots, H$. 

Summing \eqref{eqn:qmultiplyLagrangederive} over all $i=1,\ldots,H$, we get
\begin{multline*}
\sum\limits_{i=1}^{H}q_i\frac{\partial \mathcal{L}_{\text{kl, KL}}}{\partial q_i} = -\beta_0 \left\{ \sum\limits_{i=1}^{H}q_i\hat{l}_i \left[ \ln\left( \frac{\sum_{i = 1}^H \hat{l}_i q_i}{r}\right) - \ln \left( \frac{1-\sum_{i = 1}^H \hat{l}_i q_i}{1-r}\right)\right] \right. \\
\left. - \left( \frac{\sum\limits_{i=1}^{H}q_i +\sum\limits_{i=1}^{H}q_i\ln \frac{q_i}{p_i}}{m}\right) \right\} - \sum\limits_{i=1}^{H}q_i\mu_0 -0 =0
\end{multline*}
Since $\sum\limits_{i=1}^{H}q_i=1$, solving the above equation for $\mu_0$,we get,
\begin{eqnarray*}
\mu_0 = -\beta_0 \left\{ \sum\limits_{i=1}^{H}q_i\hat{l}_i \left[ \ln\left( \frac{\sum_{i = 1}^H \hat{l}_i q_i}{r}\right) - \ln \left( \frac{1-\sum_{i = 1}^H \hat{l}_i q_i}{1-r}\right)\right] - \left( \frac{1 +\sum\limits_{i=1}^{H}q_i\ln \frac{q_i}{p_i}}{m}\right) \right\}
\end{eqnarray*}
If $q_i=0$ for some $i \in [H]$, then by setting $\frac{\partial \mathcal{L}_{\text{kl, KL}}}{\partial q_i} =0$, we get
\begin{align*}
&-\beta_0  \left\{ \hat{l}_i \left[ \ln\left( \frac{\sum_{i = 1}^H \hat{l}_i q_i}{r}\right) - \ln \left( \frac{1-\sum_{i = 1}^H \hat{l}_i q_i}{1-r}\right)\right] - \left( \frac{1+\ln \frac{q_i}{p_i}}{m}\right) \right\} - \mu_0 - \mu_i =0 \\
\Rightarrow &\mu_i = -\beta_0 \left\{ \hat{l}_i \left[ \ln\left( \frac{\sum_{i = 1}^H \hat{l}_i q_i}{r}\right) - \ln \left( \frac{1-\sum_{i = 1}^H \hat{l}_i q_i}{1-r}\right)\right] - \left( \frac{1-\infty}{m}\right) \right\} - \mu_0 \\
\Rightarrow &\mu_i = -\infty \qquad (\text{since } \ln q_i = \ln (0) = -\infty)
\end{align*}
Therefore, $\mu_i =-\infty$. This implies that whenever $q_i = 0$, the corresponding multiplier $\mu_i$ is dual infeasible. Hence $q_i>0 ~~\forall i = 1, \ldots, H$. And by complementary slackness conditions, $\mu_i=0 \; \forall i = 1, \ldots, H$. Using this we can simplify $\frac{\partial \mathcal{L}_{\text{kl, KL}}}{\partial q_i} = 0$ to get the following fixed point equation:


 \begin{eqnarray*}
 q^{FP}_{i, \text{kl, KL}} = p_i \exp \left \lbrace \sum_{i=1}^{H}q_i\ln \frac{q_i}{p_i} - m\left(\sum_{i=1}^{H}\hat{l}_iq_i - \hat{l}_i \right)\left[ \ln\left( \frac{(1 - r) \sum_{i = 1}^H \hat{l}_i q_i}{r(1-\sum_{i = 1}^H \hat{l}_i q_i)}\right) \right] \right \rbrace \; \forall i = 1, \ldots, H
\end{eqnarray*}
The above fixed point equation in variables $q_i$s will result in a feasible stationary point for the bound minimization problem (6) (in paper)


\begin{algorithm2e} 
\DontPrintSemicolon
\KwIn{$\hat{p} \in [0, 1], x_{\delta, m} > 0, \texttt{tol}>0, \epsilon > 0$}
\KwOut{Roots: $\underline{q}_{KL}$ and $\overline{q}_{KL}$}
$q_{\text{low}} \gets \epsilon$\;
$q_{\text{high}} \gets  1 - \epsilon$ \;
$Z_{\text{low}} \gets \hat{p} \ln \left(\frac{\hat{p}}{q_{\text{low}}} \right) + (1-\hat{p}) \ln \left(\frac{1-\hat{p}}{1-q_{\text{low}}} \right)$ \;
$Z_{\text{high}} \gets \hat{p} \ln \left(\frac{\hat{p}}{q_{\text{high}}} \right) + (1-\hat{p}) \ln \left(\frac{1-\hat{p}}{1-q_{\text{high}}} \right)$ \;
\Fn{BisectKL($q_0, p, x_{\delta, m}$)}{
$p_0 \gets p$\;
$q_{\text{mid}} \gets \frac{p_0 + q_0}{2}$ \;
$Z_{q_{\text{mid}}} \gets p \ln \left(\frac{p}{q_{\text{mid}}} \right) + (1-p) \ln \left(\frac{1-p}{1-q_{\text{mid}}} \right)$ \;
 \While{$|{Z_{q_{\text{mid}}} - x_{\delta, m}}|> \texttt{tol}$ \label{step:while}}{
  \eIf{$Z_{q_{\text{mid}}} > x_{\delta, m}$}{
 $q_0 \gets q_{\text{mid}}$ \; }{
 $p_0 \gets q_{\text{mid}}$ \;}
 
$q_{\text{mid}} \gets \frac{p_0 + q_0}{2}$ \;
\If{$|{q_{\text{mid}} - q_0| \label{step:breakKLalgo}} < \epsilon$}{\KwBreak\;} 

 $Z_{q_{\text{mid}}} \gets p \ln \left(\frac{p}{q_{\text{mid}}} \right) + (1-p) \ln \left(\frac{1-p}{1-q_{\text{mid}}} \right)$ \;
}
\Return{$q_{\text{mid}}$}\;
}
\uIf{$\hat{p} = 0$}
	{$\underline{q}_{KL} \gets 0$ \;
	 $\overline{q}_{KL} \gets 1 - e^{-x_{\delta, m}}$ \;}
	\uElseIf{$\hat{p} = 1$}
	{$\underline{q}_{KL} \gets e^{-x_{\delta, m}}$ \;
	 $\overline{q}_{KL} \gets 1$ \;}
	 \uElseIf{$x_{\delta, m} \geq \max \lbrace Z_{\text{low}}, Z_{\text{high}} \rbrace$}{\KwBreak\;}
	\uElseIf{$x_{\delta, m} < \min \lbrace Z_{low}, Z_{high} \rbrace$}{
$\underline{q}_{KL} \gets$ BisectKL($q_{\text{low}}, \hat{p}, x_{\delta, m}$) \;
$\overline{q}_{KL} \gets$  BisectKL($q_{\text{high}}, \hat{p}, x_{\delta, m}$) \;
}
\uElseIf{$Z_{low} \leq x_{\delta, m} < Z_{high}$ }{
$\overline{q}_{KL} \gets$  BisectKL($q_{\text{high}}, \hat{p}, x_{\delta, m}$) \;}
\Else
{$\underline{q}_{KL} \gets$ BisectKL($q_{\text{low}}, \hat{p}, x_{\delta, m}$) \;}
\Return{$\underline{q}_{KL}, \overline{q}_{KL}$}
\caption{{\sc KLRoots} \cite{PACBIntervals}}\label{algo:KLroot}
\end{algorithm2e}

\subsubsection{Special Case: Optimal posterior when all \texorpdfstring{$\hat{l}_i$s}{hat l i's} are same}

\begin{lem}
When all the classifiers have same empirical risk (all $\hat{l}_i$s are same), the optimal posterior for the bound minimization problem (6) (in paper) is $Q \equiv P$.
\end{lem}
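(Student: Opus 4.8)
The plan is to exploit the fact that, when every classifier has the same empirical risk $\hat{l}_i \equiv \hat{l}$, the averaged empirical risk $\sum_{i=1}^H \hat{l}_i q_i = \hat{l}\sum_{i=1}^H q_i = \hat{l}$ becomes a constant, independent of the posterior $Q$. This decouples the first argument of the $kl$ function from the optimization variables, so the defining equality constraint \eqref{eqn:klDCcons} reduces to
\begin{equation*}
kl(\hat{l}, r) = \frac{KL[Q\|P] + \ln\frac{2\sqrt{m}}{\delta}}{m},
\end{equation*}
in which the only dependence on $Q$ enters through $KL[Q\|P]$. The objective $r$ is therefore an implicit function of the single quantity $K := \tfrac{1}{m}\bigl(KL[Q\|P] + \ln\frac{2\sqrt m}{\delta}\bigr)$.

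First I would argue that minimizing $r$ over feasible $Q$ is equivalent to minimizing this threshold $K$. By \eqref{eqn:r>EQl} the feasible $r$ is the right root, i.e.\ $r \geq \hat{l}$, and since $kl(\hat{l}, \cdot)$ increases continuously and strictly from $0$ at $r = \hat{l}$ to $+\infty$ as $r \to 1$, for every admissible $K \geq 0$ there is a unique right root in $[\hat{l}, 1)$, and that root is a strictly increasing function of $K$. This is the same monotonicity already used in the proof of Theorem~\ref{thm:increasing.subsets.phiKL}, except that here $\hat{L} = \hat{l}$ is held fixed and the monotonicity is taken with respect to the threshold rather than with respect to $\hat{L}$. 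Consequently the bound-minimizing posterior is precisely the one that minimizes $K$.

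Next, since $m$ and $\ln\frac{2\sqrt m}{\delta}$ are constants, minimizing $K$ is equivalent to minimizing $KL[Q\|P] = \sum_{i=1}^H q_i \ln\frac{q_i}{p_i}$ over the probability simplex. By Gibbs' inequality (non-negativity of the KL divergence), $KL[Q\|P] \geq 0$ with equality if and only if $Q = P$. Hence the unique minimizer is $Q \equiv P$, giving $KL[P\|P] = 0$ and the smallest possible bound $r$, namely the right root of $kl(\hat{l}, r) = \tfrac{1}{m}\ln\frac{2\sqrt m}{\delta}$.

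I expect the only delicate point to be the monotonicity/invertibility step: I must verify that the right root $r$ is well defined for all feasible $Q$ and varies monotonically with $K$. This follows immediately from the shape of $kl(\hat{l}, \cdot)$ described above, so feasibility never fails and no case analysis on the support of $Q$ is needed. Everything else reduces to the standard KL-minimization argument, so the proof requires no heavy computation.
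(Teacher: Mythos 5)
Your proposal is correct and follows essentially the same route as the paper's proof: both reduce the problem, via the monotonicity of $kl(\hat{l},\cdot)$ on $r \geq \hat{l}$, to minimizing $KL[Q\Vert P]$ over the simplex, which Gibbs' inequality resolves at $Q \equiv P$. The paper makes the monotonicity explicit by computing $\partial kl_{\hat{l}_1}/\partial r = (r-\hat{l}_1)/(r(1-r))$, but this is the same observation you invoke about the shape of $kl(\hat{l},\cdot)$.
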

\begin{proof}
When all $\hat{l}_i$s are same, the averaged empirical risk, $\mathbb{E}_Q \left[ \hat{l} \right]$ is independent of the posterior $Q$ which it is averaged over, i.e.,
\begin{equation*}
\mathbb{E}_Q \left[ \hat{l} \right] = \sum_{i = 1}^H \hat{l}_iq_i =\hat{l}_1 \left( \sum_{i = 1}^H q_i \right) = \hat{l}_1 \quad \forall Q \in \Delta^H,
\end{equation*}
where $\Delta^{H} := \lbrace (q_1, \ldots, q_H) \vert \sum_{i = 1}^H q_i = 1 , q_i \geq 0 \; \forall i = 1, \ldots, H \rbrace$ represents the H-dimensional simplex. We have assumed that $\hat{l}_i = \hat{l}_1$ for all $i = 1, \ldots, H$, without loss of generality.

Hence, the bound minimization problem (6) (in paper) becomes:
\begin{subequations} \label{eqn:BklKLOP_l1}
\begin{align} 
&\min_{q_1, \ldots, q_H, r} r\\
\text{s.t.} \quad & \hat{l}_1 \ln\left( \frac{ \hat{l}_1}{r}\right) + \left(1 - \hat{l}_1 \right) \ln\left( \frac{ 1  - \hat{l}_1}{1 - r}\right) = \frac{\sum_{i = 1}^H q_i \ln \frac{q_i}{p_i} + \ln\frac{2\sqrt{m}}{\delta}}{m} \label{eqn:klKLcons_l1}\\
& r \geq \hat{l}_1 \label{eqn:upperklroot}\\
& \sum_{i = 1}^H q_i = 1 \\
& q_i \geq 0 \; \forall i = 1, \ldots, H 
\end{align} 
\end{subequations}
Here, $\hat{l}_1$ is the model parameter. The only constraint which combines the variables $r$ and $(q_1, \ldots, q_H)$ is \eqref{eqn:klKLcons_l1}, with the special structure that the left hand side is a function of $r$ alone, whereas the right hand side is a function of $(q_1, \ldots, q_H)$ alone. Note that, the RHS of \eqref{eqn:klKLcons_l1} is strictly positive for any $Q = (q_1, \ldots, q_H)$ for parameters $m \geq 2$ and $\delta \in (0, 1)$. Therefore, the roots of the LHS function will be away from $\hat{l}_1$, which implies that the inequality in \eqref{eqn:upperklroot} will always be strict.

Let us consider the nature of the function on the LHS of \eqref{eqn:klKLcons_l1}, $kl_{\hat{l}_1} \left( r \right) := \hat{l}_1 \ln\left( \frac{ \hat{l}_1}{r}\right) + \left(1 - \hat{l}_1 \right) \ln\left( \frac{ 1  - \hat{l}_1}{1 - r}\right)$ with respect to the variable $r \in [0, 1]$ for a given value of the parameter $\hat{l}_1 \in [0, 1)$. Differentiating $kl_{\hat{l}_1} \left( r \right)$ with respect to $r$, we have:
\begin{align*}
\frac{\partial kl_{\hat{l}_1}}{\partial r} &= \frac{\partial}{\partial r} \left[ \hat{l}_1 \ln\left( \frac{ \hat{l}_1}{r}\right) + \left(1 - \hat{l}_1 \right) \ln\left( \frac{ 1  - \hat{l}_1}{1 - r}\right) \right] \\
&= - \frac{\hat{l}_1}{r} + \frac{1 - \hat{l}_1}{1 - r} \\
&= \frac{r - \hat{l}_1}{r(1 - r)}.
\end{align*}
The denominator in the above is strictly positive, since the derivative is defined only for $r \in (0, 1)$. The monotone nature of the function $kl_{\hat{l}_1}$ depends on the sign of $r - \hat{l}_1$. Thus, $kl_{\hat{l}_1}(r)$ is strictly increasing when $r > \hat{l}_1$, and is strictly decreasing when $r < \hat{l}_1$.

Using this fact, minimizing $r$ on the feasible region of our optimization problem \eqref{eqn:BklKLOP_l1} is equivalent to minimizing $kl_{\hat{l}_1}(r)$ on $r \geq \hat{l}_1$. By the restriction imposed by the constraint \eqref{eqn:klKLcons_l1}, this is equivalent to minimizing $KL[Q \Vert P] = \sum_{i = 1}^H q_i \ln \frac{q_i}{p_i}$ on the simplex $\Delta^H$. We know that $\min_{Q \in \Delta^H} KL[Q \Vert P] = 0$ when $Q \equiv P$. Hence $Q \equiv P$ is the minimizer for the bound minimization problem \eqref{eqn:BklKLOP_l1} which refers to the case when all the classifiers have same empirical risk.
\end{proof}

\subsection{Convex-concave procedure for finding a local solution for minimization of $B_{\text{kl, KL}}(Q)$} \label{secn:klKLCCP}
We have seen that our optimization problem (7) (in the main paper) for finding the bound $B_{\text{kl, KL}}(Q)$ consists of a linear objective function and linear constraints, except for the constraint \eqref{eqn:klKLDCcons}, which takes the form:
\begin{eqnarray}
&kl\left(\mathbb{E}_Q [\hat{l}], r \right) = \frac{KL[Q||P] + \ln\left( \frac{2 \sqrt{m}}{\delta} \right)}{m} \label{eqn:klKLDCcons}\\
&\Leftrightarrow \left(\sum_{i = 1}^H \hat{l}_iq_i \right) \ln\left( \frac{\sum\limits_{i = 1}^H \hat{l}_iq_i}{r}\right) + \left(1 - \sum\limits_{i = 1}^H \hat{l}_iq_i \right) \ln\left( \frac{ 1  - \sum\limits_{i = 1}^H \hat{l}_iq_i}{1 - r}\right) = \frac{\sum\limits_{i = 1}^H q_i \ln \frac{q_i}{p_i} + \ln\frac{2\sqrt{m}}{\delta}}{m}
\end{eqnarray}
We know that $KL[Q||P]$ is jointly convex in both its arguments \cite{VanErvenHarremoes2014}. The right hand side of the constraint is a positive affine transformation of $KL[Q||P]$ for given system parameters $m$ and $\delta$, and hence again a convex function. The left hand side is a composition of two functions: $\mathbb{E}_Q[\hat{l}]$ (a linear function) and $kl(p, q)$ (a jointly convex function). The superposition of a convex function and an affine mapping is convex, provided that it is finite at least at one point \cite{anatoli2015notesconvexity, boyd2004convex}. Hence, it is established that $kl\left(\mathbb{E}_Q [\hat{l}], r \right)$ is convex in its arguments $(Q, r)$. This implies that the constraint \eqref{eqn:klKLDCcons} is a difference of convex (DC) function and the associated optimization problem ((7) in main paper) is a DC program.

In our bound minimization problem ((7) in main paper) for $B_{\text{kl, KL}}(Q)$, the DC constraint \eqref{eqn:klKLDCcons} is an equality constraint. We need to write it as a set of two inequality constraints to be able to use the linear approximation via supporting hyperplane as described above. Reformulating the original problem ((7) in main paper) in terms of all inequality constraints of the form $f(x) - g(x) \leq 0$, we have:

\begin{subequations} \label{eqn:BklKLOP.preCCP}
\begin{align} 
&\min_{q_1, \ldots, q_H, r} r\\
\hspace{-5mm}\text{s.t.} & \left(\sum_{i = 1}^H \hat{l}_iq_i \right) \ln\left( \tfrac{\sum\limits_{i = 1}^H \hat{l}_iq_i}{r}\right) + \left(1 - \sum\limits_{i = 1}^H \hat{l}_iq_i \right) \ln\left( \tfrac{ 1  - \sum\limits_{i = 1}^H \hat{l}_iq_i}{1 - r}\right) - \tfrac{\sum\limits_{i = 1}^H q_i \ln \tfrac{q_i}{p_i} + \ln\frac{2\sqrt{m}}{\delta}}{m} \leq 0 \label{eqn:klKLDCcons.preCCP}\\ 
&\tfrac{\sum\limits_{i = 1}^H q_i \ln \tfrac{q_i}{p_i} + \ln\frac{2\sqrt{m}}{\delta}}{m} - \left(\sum_{i = 1}^H \hat{l}_iq_i \right) \ln\left( \tfrac{\sum\limits_{i = 1}^H \hat{l}_iq_i}{r}\right) + \left(1 - \sum\limits_{i = 1}^H \hat{l}_iq_i \right) \ln\left( \tfrac{ 1  - \sum\limits_{i = 1}^H \hat{l}_iq_i}{1 - r}\right) \leq 0 \label{eqn:klKLDCcons.reverse.preCCP}\\
& \sum_{i = 1}^H \hat{l}_iq_i - r \leq 0 \label{eqn:r>EQlhat.klKL.preCCP}\\
& \sum_{i = 1}^H q_i = 1 \\
& -q_i \leq 0 \; \forall i = 1, \ldots, H 
\end{align} 
\end{subequations}

To apply the convex-concave procedure (CCP), we determine the approximations to the DC functions \eqref{eqn:klKLDCcons.preCCP} and  \eqref{eqn:klKLDCcons.reverse.preCCP}, at a point $(Q^0, r^0)$ which is feasible to \eqref{eqn:BklKLOP.preCCP}, and equivalently to (7) in the main paper. Let $\widehat{kK}_1((Q,r); (Q^0, r^0))$ denote the linear under-approximation to the function $kK_1(Q, r) := \tfrac{\sum\limits_{i = 1}^H q_i \ln \tfrac{q_i}{p_i} + \ln\frac{2\sqrt{m}}{\delta}}{m}$ in \eqref{eqn:klKLDCcons.preCCP} at $(Q^0, r^0)$.

\begin{align}
\widehat{kK}_1((Q,r); (Q^0, r^0)) &:= kK_1(Q^0, r^0) + \langle \nabla kK_1(Q^0, r^0), \left((Q - Q^0), (r  - r^0)\right) \rangle \nonumber\\
&= \frac{\sum\limits_{i = 1}^H q^0_i \ln \tfrac{q^0_i}{p_i} + \ln\frac{2\sqrt{m}}{\delta}}{m} +  \left( \sum_{i = 1}^{H} \frac{\partial  kK_1 }{\partial q_i} \bigg\vert_{q_i = q^0_i} \cdot (q_i - q^0_i)  \right) + 0 \cdot (r  - r^0) \nonumber\\
&= \frac{\sum\limits_{i = 1}^H q^0_i \ln \tfrac{q^0_i}{p_i} + \ln\frac{2\sqrt{m}}{\delta}}{m} + \left( \sum_{i = 1}^{H} \tfrac{1}{m} \left(1 + \ln\frac{q^0_i}{p_i} \right) (q_i - q^0_i) \right) + 0 \nonumber\\
&= \cancel{\frac{\sum\limits_{i = 1}^H q^0_i \ln \tfrac{q^0_i}{p_i} }{m}}
 + \frac{ \ln\frac{2\sqrt{m}}{\delta}}{m}  + \left( \bcancel{\sum_{i = 1}^{H} q_i} - \bcancel{\sum_{i = 1}^{H} q^0_i} \right) + \frac{\sum\limits_{i = 1}^{H} q_i\ln\frac{q^0_i}{p_i}}{m} - \cancel{\frac{\sum\limits_{i = 1}^{H} q^0_i\ln\frac{q^0_i}{p_i}}{m}} \nonumber\\
&= \frac{\sum\limits_{i = 1}^H q_i \ln \tfrac{q^0_i}{p_i} + \ln\frac{2\sqrt{m}}{\delta}}{m} \label{eqn:hatkK1}
\end{align}

Similarly, let $\widehat{kK}_2((Q,r); (Q^0, r^0))$ identify the linear under-approximation to the function $kK_2(Q, r) :=kl\left(\sum_{i = 1}^H \hat{l}_iq_i, r \right)$ in \eqref{eqn:klKLDCcons.reverse.preCCP} at $(Q^0, r^0)$.
\begin{align}
\widehat{kK}_2((Q,r); (Q^0, r^0)) &:= kK_2(Q^0, r^0) + \langle \nabla kK_2(Q^0, r^0), \left(Q - Q^0, r  - r^0\right) \rangle \nonumber\\
&= kl\left(\sum_{i = 1}^H \hat{l}_iq^0_i, r^0 \right) +  \left( \sum_{i = 1}^{H} \frac{\partial  kK_2 }{\partial q_i} \bigg\vert_{q_i = q^0_i} \cdot (q_i - q^0_i)  \right) + \frac{\partial  kK_2 }{\partial r}\bigg\vert_{r = r^0} \cdot (r  - r^0) \nonumber\\
&= \left(\sum_{i=1}^{H} \hat{l}_i q_i^0\right) \ln{\left(\frac{\sum_{i=1}^{H} \hat{l}_i q_i^0}{r^0}\right)} + \left(1-\sum_{i=1}^{H} \hat{l}_i q_i^0 \right) \ln{\left(\frac{1-\sum_{i=1}^{H} \hat{l}_i q_i^0}{1-r^0}\right)} \nonumber\\
	&\hspace{2cm}+ \sum_{i=1}^{H} \left[ \hat{l}_i \left( \ln{\left(\frac{\sum_{i=1}^{H} \hat{l}_i q^0_i}{r^0} \right)} - \ln{\left(\frac{1-\sum_{i=1}^{H} \hat{l}_i q^0_i}{1-r^0} \right)} \right) \right] (q_i - q^0_i) \nonumber\\
	&\hspace{3cm}+ \left[\frac{- \sum_{i=1}^{H} \hat{l}_i q^0_i }{r^0} + \frac{1- \sum_{i=1}^{H} \hat{l}_i q^0_i}{1-r^0}\right] (r - r^0) \nonumber \displaybreak[3]\\
&\hspace{-1cm}= \cancel{\left(\sum_{i=1}^{H} \hat{l}_i q_i^0\right) \ln{\left(\frac{\sum_{i=1}^{H} \hat{l}_i q_i^0}{r^0}\right)}} + \ln{\left(\frac{1-\sum_{i=1}^{H} \hat{l}_i q_i^0}{1-r^0}\right)}  \nonumber\\
&\hspace{1cm}- \bcancel{\left(\sum_{i=1}^{H} \hat{l}_i q_i^0 \right) \ln{\left(\frac{1-\sum_{i=1}^{H} \hat{l}_i q_i^0}{1-r^0}\right)}} \nonumber\\
	&\hspace{2cm}+ \sum_{i=1}^{H} \hat{l}_iq_i \left[\ln{\left(\frac{\sum_{i=1}^{H} \hat{l}_i q^0_i}{r^0} \right)} - \ln{\left(\frac{1-\sum_{i=1}^{H} \hat{l}_i q^0_i}{1-r^0} \right)} \right] \nonumber\\
	&\hspace{2cm}- \left[ \cancel{\sum_{i=1}^{H} \hat{l}_iq^0_i \ln{\left(\frac{\sum_{i=1}^{H} \hat{l}_i q^0_i}{r^0} \right)}} -  \bcancel{\sum_{i=1}^{H} \hat{l}_iq^0_i \ln{\left(\frac{1-\sum_{i=1}^{H} \hat{l}_i q^0_i}{1-r^0} \right)}} \right] \nonumber\\
	&\hspace{3cm}+ \left[\frac{r^0 - \sum_{i=1}^{H} \hat{l}_i q^0_i }{r^0 (1 - r^0)}\right] (r - r^0) \nonumber \displaybreak[3] \\
&= \ln{\left(\frac{1-\sum_{i=1}^{H} \hat{l}_i q_i^0}{1-r^0}\right)} + \sum_{i=1}^{H} \hat{l}_iq_i \left[\ln{\left(\frac{\sum_{i=1}^{H} \hat{l}_i q^0_i}{r^0} \right)} - \ln{\left(\frac{1-\sum_{i=1}^{H} \hat{l}_i q^0_i}{1-r^0} \right)} \right] \nonumber\\
	&\hspace{3cm}+ \left[\frac{r^0 - \sum_{i=1}^{H} \hat{l}_i q^0_i }{r^0 (1 - r^0)}\right] (r - r^0)  \label{eqn:hatkK2}
\end{align}

Using the linear approximations $\widehat{kK}_1((Q,r); (Q^0, r^0))$ in \eqref{eqn:hatkK1} and $\widehat{kK}_2((Q,r); (Q^0, r^0))$ in \eqref{eqn:hatkK2} for \eqref{eqn:klKLDCcons.preCCP} and \eqref{eqn:klKLDCcons.reverse.preCCP}, we can invoke the CCP procedure described in \cite{CCP2016LippBoyd} to get a local minimizer to the KL-distance based bound minimization problem ((7) in main paper), as illustrated in Section \ref{chap:optQSVM}.

\subsection{Non-convexity of bound function, $B_\text{CH,KL}(Q)$}
We check for convexity of the bound via first order convexity property. We need to show that the following holds:
\begin{equation}
B_\text{CH, KL}  \geq \langle \nabla B_\text{CH, KL}(Q), Q^{'} -Q \rangle  +
B_\text{CH, KL}(Q) \qquad \forall Q, Q^{'}
\end{equation}
\begin{lem}
The bound function $B_\text{CH, KL}(Q)$ defined in (16) in the main paper is a non-convex function and hence the associated bound minimization problem is non-convex program.
\end{lem}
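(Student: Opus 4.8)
The plan is to verify the stated first-order convexity inequality and then defeat it with an explicit pair of posteriors. First I would exploit the fact that $B_\text{CH, KL}(Q) = \mathbb{E}_Q[\hat{l}] + \sqrt{r_\text{CH}(R(Q))}$ splits into an affine part $\mathbb{E}_Q[\hat l] = \sum_i \hat l_i q_i$ and a nonlinear part, so the Hessian of $B_\text{CH, KL}$ equals that of $g(Q) := \sqrt{r_\text{CH}(R(Q))}$ and convexity is decided by $g$ alone (in particular it is independent of the empirical risks $\hat l_i$). Writing $h := \phi_{\text{CH}}^{-1}$ for the inverse of the single-variable map $x \mapsto \phi_{\text{CH}}(x) = x^2 + \tfrac{2}{9}x^4 + \tfrac{16}{135}x^6$ on $[0,\infty)$, the defining relation $\phi_{\text{CH}}(\sqrt{r_\text{CH}}) = R$ gives $g(Q) = h(R(Q))$, and $R(Q) = \tfrac{KL[Q||P] + \ln(K_{\phi_{\text{CH}}}/\delta)}{2m-1}$ is a positive affine image of the strictly convex $KL[Q||P]$, hence convex.

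Next I would record the two competing curvature effects. Since $\phi_{\text{CH}}$ is strictly increasing and strictly convex on $[0,\infty)$ with $\phi_{\text{CH}}(0)=0$, its inverse $h$ is strictly increasing and strictly \emph{concave} ($h' > 0$, $h'' < 0$), which I would obtain by implicit differentiation of $\phi_{\text{CH}}(h(R)) = R$ so as to avoid the unwieldy Cardano form of $r_\text{CH}$. Differentiating $g = h(R(Q))$ twice yields $\nabla^2 g = h'(R)\,\nabla^2 R + h''(R)\,\nabla R\,\nabla R^{\top}$, a positive-definite piece (because $\nabla^2 R = \tfrac{1}{2m-1}\operatorname{diag}(1/q_i) \succ 0$) plus a negative rank-one piece. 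Thus $B_\text{CH, KL}$ fails to be convex exactly when, at some interior $Q$, there is a simplex-tangent direction $v$ (with $\sum_i v_i = 0$) along which the rank-one term dominates, i.e. $h'(R)\,v^{\top}\nabla^2 R\,v + h''(R)(\nabla R\cdot v)^2 < 0$.

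To show such a point exists I would minimise the ratio $\tfrac{v^{\top}\nabla^2 R\, v}{(\nabla R\cdot v)^2}$ over tangent $v$; a short Cauchy--Schwarz / Rayleigh-quotient computation (using $\sum_i v_i = 0$, which cancels the constant in $\nabla R$) gives its minimum as $\tfrac{2m-1}{\operatorname{Var}_Q(\ln(q_i/p_i))}$, so non-convexity at $Q$ reduces to the scalar inequality $\tfrac{2m-1}{\operatorname{Var}_Q(\ln(q_i/p_i))} < -\tfrac{h''(R)}{h'(R)} = \tfrac{\phi_{\text{CH}}''(h)}{\phi_{\text{CH}}'(h)^2}$. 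I would satisfy this by taking a uniform prior on a sufficiently large set $\mathcal{H}$ and a spiked posterior $Q$ (most mass on one classifier, the rest spread thinly): then $\operatorname{Var}_Q(\ln(q_i/p_i))$ grows like $\ln^2 H$ while $R(Q)$ only grows like $\ln H$, pushing $h$ into the regime where the degree-six term of $\phi_{\text{CH}}$ dominates and $h \sim R^{1/6}$ is strongly concave, so the inequality holds. Equivalently---matching the first-order route announced above---I would present this $Q$ together with a nearby $Q'$ in the violating direction and check directly that $B_\text{CH, KL}(Q') < B_\text{CH, KL}(Q) + \langle \nabla B_\text{CH, KL}(Q),\, Q' - Q\rangle$.

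The main obstacle, and the reason a clean global convexity argument is unavailable, is precisely this tension between the convexity of $R(Q)$ and the concavity of the inverse $h$: neither dominates uniformly, and the outcome depends delicately on $m$, $\delta$, $H$ and on how far $Q$ sits from $P$. The calculation above shows that for the moderate $m$ and large $H$ of the experiments the concave outer map wins in spiked directions, but the algebra stays tractable only by working through the implicit relation $\phi_{\text{CH}}(h) = R$ rather than the explicit root $r_\text{CH}(R)$; substituting the Cardano expression directly would make both the gradient $\partial r_\text{CH}/\partial R$ and the second-order test far messier without changing the conclusion.
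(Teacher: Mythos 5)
Your route is genuinely different from the paper's. The paper works with the first-order convexity inequality $B_{\text{CH,KL}}(Q') \geq B_{\text{CH,KL}}(Q) + \langle \nabla B_{\text{CH,KL}}(Q), Q'-Q\rangle$, computes $\partial r_{\text{CH}}/\partial R$ from the explicit Cardano expression, reduces the inequality to $2\sqrt{r_{\text{CH}}(R(Q'))}\sqrt{r_{\text{CH}}(R(Q))} \geq \frac{\partial r_{\text{CH}}}{\partial R}\,\frac{\sum_i q_i'\ln(q_i/p_i)-\sum_i q_i\ln(q_i/p_i)}{2m-1} + 2r_{\text{CH}}(R(Q))$, and then exhibits a concrete numerical violation at $H=10$, $m=40$, $\delta=0.01$ with $Q'$ uniform and a specific non-uniform $Q$. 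You instead do a second-order analysis: writing $g(Q)=h(R(Q))$ with $h=\phi_{\text{CH}}^{-1}$ increasing and concave, decomposing $\nabla^2 g = h'(R)\nabla^2 R + h''(R)\nabla R\,\nabla R^{\top}$, and showing via a Rayleigh-quotient computation that negative curvature in a simplex-tangent direction is equivalent to the clean scalar condition $\operatorname{Var}_Q(\ln(q_i/p_i)) > (2m-1)\,\phi_{\text{CH}}'(h)^2/\phi_{\text{CH}}''(h)$. This is more informative than the paper's proof: it identifies exactly \emph{where} and \emph{why} convexity fails (large spread of $\ln(q_i/p_i)$ under $Q$ relative to $KL[Q\Vert P]$), and it sidesteps the Cardano algebra entirely via implicit differentiation. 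The paper's proof buys certainty at specific, experimentally relevant parameters with minimal analysis.

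Two soft spots to fix. First, your justification that the spiked posterior lands $h$ ``in the regime where the degree-six term of $\phi_{\text{CH}}$ dominates'' is not what actually happens: $R(Q)\sim \ln H/(2m-1)$ is small for any realistic $H$ and $m$, so $h$ sits in the quadratic regime where $-h''(R)/h'(R) \approx 1/(2R)$. This does not break the argument --- the criterion then reads $\operatorname{Var}_Q(\ln(q_i/p_i)) > 2\bigl(KL[Q\Vert P]+\ln(K_{\phi_{\text{CH}}}/\delta)\bigr)$, and with a spiked $Q$ the left side still grows like $\ln^2 H$ against $\ln H$ on the right --- but you should drop the sextic-regime claim and argue from the concavity of $h$ alone, which holds everywhere since $\phi_{\text{CH}}''>0$. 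Second, as written you only establish existence of a counterexample for $H$ sufficiently large (and with $\ln(K_{\phi_{\text{CH}}}/\delta)$ in the constant, ``sufficiently large'' may be well beyond the paper's $H=10$); to close the lemma you must actually exhibit and numerically verify the promised pair $(Q,Q')$, or carry the asymptotic bookkeeping through with explicit constants. Until one of those is done, the proof is a correct plan rather than a complete argument.
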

\begin{proof}
We first compute the derivatives of the bound function:
\begin{align*}
\frac{\partial B_\text{CH, KL}}{\partial q_i} &=\hat{l}_i  + \frac{1}{2\sqrt
{r_\text{CH}(R(Q))}} \frac{\partial r_\text{CH}}{\partial R} \frac{\partial 
R}{\partial q_i} =  \hat{l}_i  + \frac{1}{2\sqrt {r_\text{CH}(R(Q))}} \frac{\partial
r_\text{CH}}{\partial R} \frac{1+\ln\frac{q_i}{p_i}}{2m-1}
\end{align*}
where
\begin{align*}
&\hspace{-1.5cm}\frac{\partial r_\text{CH}}{\partial R} = \frac{1}{3}\left[\frac{1225}{512}
+\frac{135R}{32}  + \frac{5}{32} \sqrt{729R^2+ \frac{6615R}{8} +
\frac{208980}{256}}\right]^{\frac{-2}{3}} \left[\frac{135}{32} + \frac{5}{32} \frac{2R*729
+\frac{6615}{8}}{\sqrt{729R^2+ \frac{6615R}{8} + \frac{208980}{256}}}\right]\\
&+   \frac{1}{3}\left[\frac{1225}{512} +\frac{135R}{32}  - \frac{5}{32}
\sqrt{729R^2+ \frac{6615R}{8} + \frac{208980}{256}}\right]^{\frac{-2}{3}}
\left[\frac{135}{32} - \frac{5}{32} \frac{2R*729 +\frac{6615}{8}}{\sqrt{729R^2+
\frac{6615R}{8} + \frac{208980}{256}}}\right]
\end{align*}

Using the above two expressions, we can obtain the following:
\begin{align*}
    \langle  \nabla B_\text{CH, KL}(Q), Q^{'} -Q \rangle &= \sum_{i=1}^{H} \frac{\partial B_\text{CH, KL}}{\partial q_i}(q^{'}_i -q_i) \\
    & = \sum_{i=1}^{H}  \left[\hat{l}_i+ \frac{1}{2\sqrt {r_\text{CH}(R(Q))}} \frac{\partial
r_\text{CH}}{\partial R} \frac{1+\ln\frac{q_i}{p_i}}{2m-1}\right](q^{'}_i -q_i) \\
& = \sum_{i=1}^{H}   (q^{'}_i -q_i)\hat{l}_i+ \sum_{i=1}^{H}  \left[\hat{l}_i+
\frac{1}{2\sqrt {r_\text{CH}(R(Q))}} \frac{\partial r_\text{CH}}{\partial R}
\frac{1+\ln\frac{q_i}{p_i}}{2m-1}(q^{'}_i -q_i)\right] \\
&= \sum_{i=1}^{H}   (q^{'}_i -q_i)\hat{l}_i + \frac{1}{2\sqrt {r_\text{CH}(R(Q))}}
\frac{\partial r_\text{CH}}{\partial R} \frac{\sum_{i=1}^{H} q^{'}_i
\ln\frac{q_i}{p_i}-  \sum_{i=1}^{H} q_i \ln\frac{q_i}{p_i}}{2m-1}
\end{align*}

Thus it  sufficient to check the following conditon:
\begin{multline*}
\sum_{i=1}^{H} \hat{l}_i q^{'}_i + \sqrt{r_{\text{CH}}}(R(Q^{'})) \geq
\sum_{i=1}^{H} \hat{l}_i q^{'}_i - \sum_{i=1}^{H} \hat{l}_i q_i +  \frac{1}{2 \sqrt{r_{\text{CH}}(R,
(Q^{'})) }}\frac{\partial r_{\text{CH}}}{\partial R}\frac{\sum_{i=1}^{H} q^{'}_i \ln\frac{q_i}{p_i}-  \sum_{i=1}^{H} q_i
\ln\frac{q_i}{p_i}}{2m-1} \\
+ \sum_{i=1}^{H} \hat{l}_i q_i + \sqrt{r_{\text{CH}}(R,
(Q)) } 
\end{multline*}
which reduces  to :
\begin{align}
2\sqrt{r_{\text{CH}}(R,Q^{'})}\sqrt{r_{\text{CH}}(R,Q)} \geq \frac{\partial
r_{\text{CH}}}{\partial R} \frac{\sum_{i=1}^{H} q^{'}_i \ln\frac{q_i}{p_i}- 
\sum_{i=1}^{H} q_i \ln\frac{q_i}{p_i}}{2m-1} + 2r_{\text{CH}}(R(Q))
\end{align}
This condition is violated at the following counter example: 

$H=10, m=40, \delta =0.01, P \sim$ Unif$(\mathcal{H})$
 and $Q^{'}\sim$ Unif$(\mathcal{H})\equiv P$
 \begin{multline*}
 Q=  (0.038905393, 0.117691919,0.034856483,0.135564832,
0.039842634\\
0.134196637,0.293501960,0.191581463,0.007108130,0.006750549)
\end{multline*}
 $LHS = 2.539717$ , $RHS= 2.541530$
 
Hence, $B_{\text{CH,KL}}(Q)$ is a non-convex function of $Q$. 
But computations show that this optimization problem has a single local minimum for uniform prior $P$ on $\mathcal{H}$.
\end{proof}
 
\section{Optimal posterior for linear distance function}
\subsection{The function $\mathcal{I}^K_{\text{lin}} (m, l)$ for linear distance based bound}
\begin{figure}[ht]
    \centering
    \includegraphics[width = 0.46\textwidth]{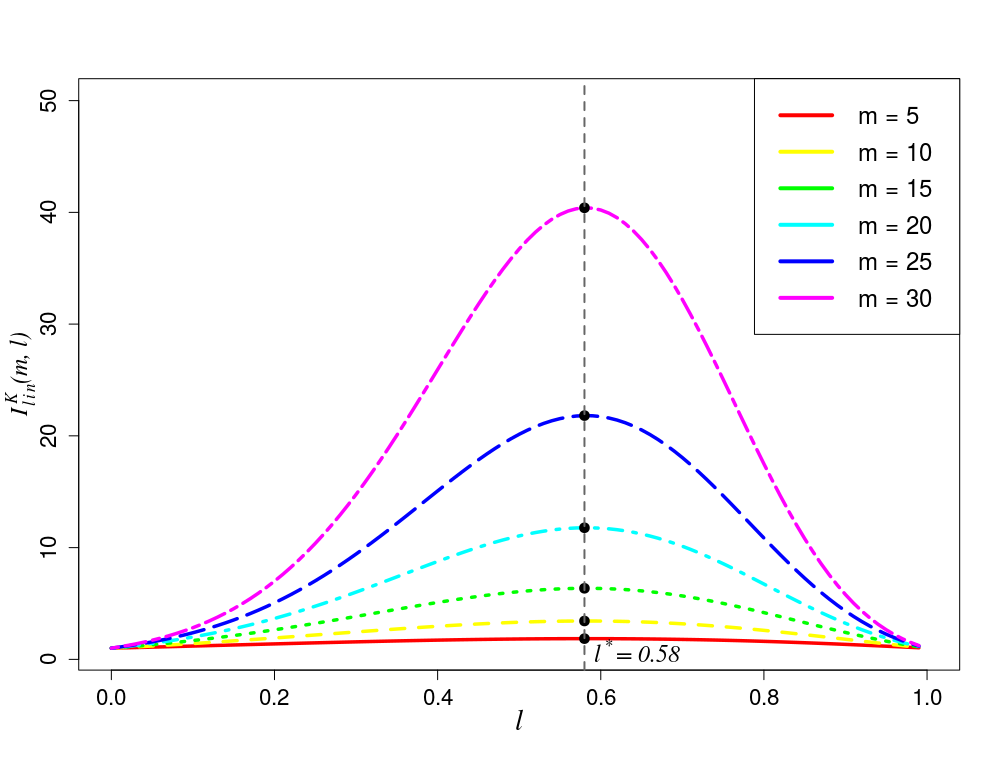}
    \includegraphics[width = 0.52\textwidth]{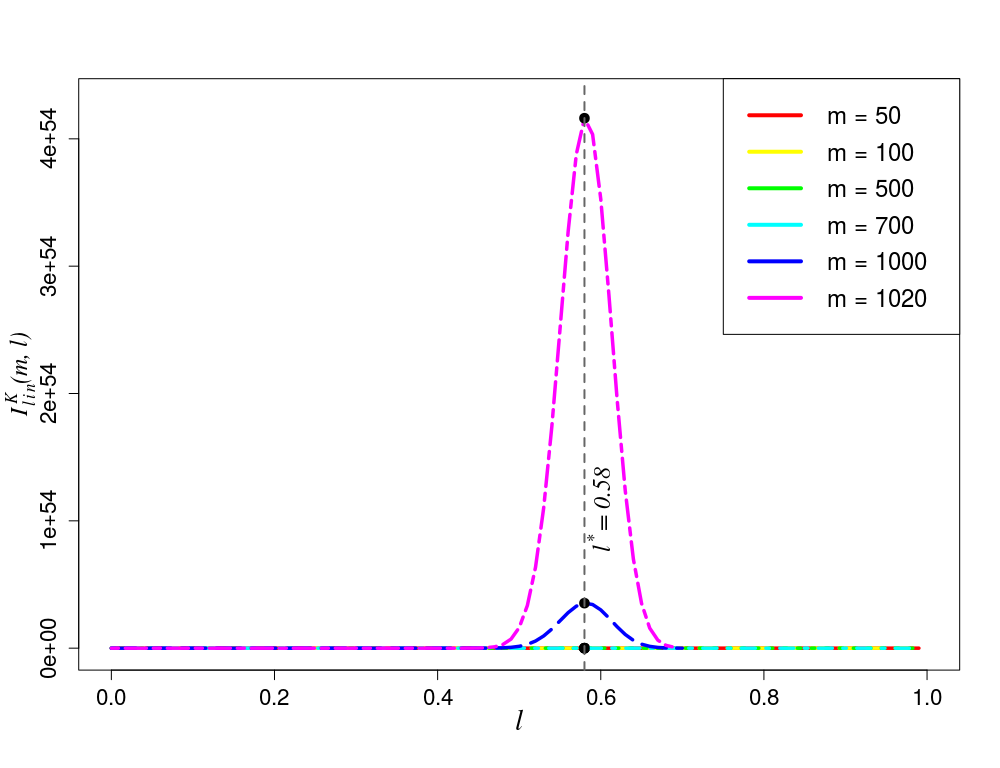}
    \caption{Plot of the function $\mathcal{I}^K_{\text{lin}} (m, l)$ for large and small sample sizes. For $m \geq 1020$, computation is difficult due to storage limitations in the range of floating point numbers. For other case, $l^{\ast} = 0.58$ as observed graphically.}  
    \label{fig:I_K_lin}
\end{figure}
 
\begin{table}[ht]
\centering
\begin{tabular}{|r|rr|}
  \hline
\makecell{Sample \\ size, $m$} & $I^K_\text{lin}(m)$ & $2\sqrt{m}$ \\ 
  \hline
  5 & 1.85 & 4.47 \\ 
   10 & 3.43 & 6.32 \\ 
   15 & 6.36 & 7.75 \\ 
   20 & 11.78 & 8.94 \\ 
   25 & 21.81 & 10.00 \\ 
   30 & 40.41 & 10.95 \\
   50 & 475.79 & 14.14 \\ 
  100 & 2.3e+05 & 20.00 \\ 
   500 & 5.9e+26 & 44.72 \\ 
  700 & 3e+37 & 52.92 \\ 
  1000 & 3.5e+53 & 63.25 \\ 
  1020 & 4.2e+54 & 63.87 \\ 
   \hline
\end{tabular}
\end{table}
For $m \geq 1020$, computation is difficult due to storage limitations in the range of floating point numbers -- gives  $I^K_\text{lin}(m)$ as NaN.

\subsection{Optimal posterior for linear distance based bound}
We can identify the minimizer for the bound minimization problem (19) (in paper) using the KKT system based on the associated Lagrangian function. 
The Lagrangian function for (19) (in paper) can be written as follows:
\begin{eqnarray}
\mathcal{L}_{\text{lin, KL}}(Q,\mu_0,\mu_i) = \sum\limits_{i=1}^{H}\hat{l}_i q_i + \frac{\sum\limits_{i=1}^{H}q_i\ln\frac{q_i}{p_i} }{m} - \mu_0 \left(\sum\limits_{i=1}^H q_i -1 \right) - \sum_{i = 1}^H \mu_i q_i
\end{eqnarray}
Differentiating Lagrange $\mathcal{L}_{\text{lin, KL}}$ with respect to primal variables $q_i$s and dual variable $\mu_0$, we get:
\begin{eqnarray*}
\frac{\partial \mathcal{L}_{\text{lin, KL}}}{\partial q_i} &=& \hat{l}_i + \frac{1 + \ln \frac{q_i}{p_i}}{m} - \mu_0 -\mu_i  \quad \forall i = 1, \ldots, H\\
\frac{\partial \mathcal{L}_{\text{lin, KL}}}{\partial \mu_0} &=& \sum_{i=1}^{H}q_i -1 
\end{eqnarray*}
We assume that $q_i > 0$ for all $i = 1, \ldots, H$, since otherwise $\ln q_i = \ln(0)$ is undefined. Even if we use fact that $\lim_{x \rightarrow 0^{+}} \ln x = - \infty$, the KKT condition will mean that the dual variable $\mu_i$ is infeasible. Therefore our assumption holds true for a stationary point.
Due to complementary slackness conditions, we have $\mu_i q_i = 0$ which implies $\mu_i = 0$ since $q^{\ast}_i > 0$ for all $i = 1, \ldots, H$. 

At optimality, posterior $Q^{\ast}_{\text{lin, KL}}$ should set the derivatives of this Lagrangian function $\mathcal{L}_{\text{lin, KL}}$ to zero. Setting the derivative of $\mathcal{L}_{\text{lin, KL}}$ with respect to $q_i$'s as zero, we get:

\begin{align}
&\frac{\partial \mathcal{L}_{\text{lin, KL}}}{\partial q_i} = 0 \quad \forall i = 1, \ldots, H\nonumber\\
\Rightarrow \; & \hat{l}_i + \frac{\left(1 + \ln \frac{q_i}{p_i} \right)}{m} - \mu_0 = 0 \nonumber\\
\Rightarrow \; & 1 + \ln \frac{q_i}{p_i} = m(\mu_0 - \hat{l}_i) \nonumber\\
\Rightarrow \; &\ln \frac{q_i}{p_i} = m(\mu_0 - \hat{l}_i) -1 \nonumber\\
\Rightarrow \; &q_i = p_i e^{m(\mu_0 - \hat{l}_i)-1} \quad \forall i = 1, \ldots, H 
\label{eqn:qLagrange_linKL}
\end{align}
And now, setting the derivative of $\mathcal{L}_{\text{lin, KL}}$ with respect to $\mu_0$ as zero, we get:

\begin{align}
&\frac{\partial \mathcal{L}_{\text{lin, KL}}}{\partial \mu_0} = 0 \nonumber\\
\Rightarrow \; &\sum_{i=1}^{H}q_i =1 \nonumber\\
\Rightarrow \; &\sum_{i=1}^H p_ie^{m(\mu_0 - \hat{l}_i)-1} =1 \nonumber\\
\Rightarrow \; &e^{m\mu_0 - 1}\left( \sum_{i=1}^H p_ie^{-m\hat{l}_i}\right) =1 \nonumber\\
\Rightarrow \; &e^{m\mu_0 -1} = \frac{1}{\sum_{i=1}^{H}p_i e^{-m\hat{l}_i}} \label{eqn:Lagrangemultiplier_linKL} \\[3mm]
\Rightarrow \; & \mu_0 = \frac{1 -\ln \left(\sum_{i=1}^{H}p_ie^{-m\hat{l}_i}\right)}{m}
\end{align}  
Therefore, combining \eqref{eqn:qLagrange_linKL} and \eqref{eqn:Lagrangemultiplier_linKL}, we get the following expression for a KKT point solution to our bound minimization problem:
\begin{equation}
q_i^{KKT} = p_ie^{m(\mu_0 - \hat{l}_i)-1} = e^{m\mu_0 -1}(p_ie^{-m\hat{l}_i})= \frac{p_ie^{-m\hat{l}_i}}{\sum_{i=1}^{H}p_i e^{-m\hat{l}_i}} >0 ~ \forall i = 1, \ldots, H. 
\end{equation} 

This implies that $q^{KKT}_i \propto p_ie^{-m\hat{l}_i}$. That is, a classifier will be weighed negatively exponentially to the number of misclassfications it makes on the training sample. For a given prior distribution $P$, the optimal posterior $Q_{\text{lin, KL}}$ is a Boltzmann distribution.

\begin{theorem}
When the prior is a uniform distribution on the set $\mathcal{H}$ of classifiers, the optimal posterior $Q_{\text{lin, KL}}^{\ast}$ for the bound minimization problem (19) (in paper) has full support. That is, all the classifiers in $\mathcal{H}$ will have strictly positive posterior weight at optimality.
\end{theorem}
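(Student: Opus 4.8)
The plan is to leverage the two structural results already established—the ordered-subset reduction of Theorem \ref{thm:increasing.subsets.phiKL} together with its Corollary, and the closed-form optimizer of Theorem \ref{thm:OptQ_linKL}—and thereby reduce the full-support question to a one-dimensional monotonicity check. First I would specialize to the uniform prior $p_i = 1/H$ and invoke the Corollary to Theorem \ref{thm:increasing.subsets.phiKL}: the globally optimal posterior over the whole simplex must coincide with the best posterior supported on one of the nested ordered subsets $\mathcal{H}'$ of size $H' \in \lbrace 1, \ldots, H \rbrace$, where classifiers are ranked by non-decreasing $\hat{l}_i$. This collapses an exponential search over candidate supports into a comparison indexed by the single integer $H'$.

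Next I would write down, for each $H'$, the minimizing posterior supplied by Theorem \ref{thm:OptQ_linKL}, namely $q^{\ast}_i(H') = e^{-m\hat{l}_i}/\sum_{j=1}^{H'} e^{-m\hat{l}_j}$ on the first $H'$ classifiers and $0$ thereafter, and substitute it into $B_{\text{lin, KL}}$. The core computation is that, because $\ln(q_i/p_i) = \ln(q_i H) = -m\hat{l}_i + \ln H - \ln S_{H'}$ with $S_{H'} := \sum_{j=1}^{H'} e^{-m\hat{l}_j}$, the $-m\hat{l}_i$ contribution inside the KL term cancels exactly against the averaged-empirical-risk term $\sum_i \hat{l}_i q^{\ast}_i$. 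I expect the bound to collapse to
\begin{equation*}
B_{\text{lin, KL}}(Q^{\ast}_{\text{lin, KL}}(H')) = \frac{\ln H - \ln S_{H'}}{m},
\end{equation*}
up to the additive constant $\ln(\mathcal{I}^{K}_{\text{lin}}(m)/\delta)/m$, which is independent of both $H'$ and the chosen support and hence plays no role in the comparison.

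Finally I would observe that passing from $H'$ to $H'+1$ augments the partial sum $S_{H'}$ by the strictly positive term $e^{-m\hat{l}_{H'+1}} > 0$, so $S_{H'}$ is strictly increasing and therefore $B_{\text{lin, KL}}(Q^{\ast}_{\text{lin, KL}}(H'))$ is strictly decreasing in $H'$. The minimum over $H' \in \lbrace 1, \ldots, H \rbrace$ is thus attained uniquely at $H' = H$, at which point every classifier receives weight $q^{\ast}_{i, \text{lin, KL}} = e^{-m\hat{l}_i}/\sum_{j=1}^{H} e^{-m\hat{l}_j} > 0$; this is precisely the full-support claim.

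There is no deep obstacle here, since the result is essentially a corollary of the two cited theorems plus monotonicity. The one step demanding care is the cancellation in the substitution: one must track that the averaged-risk term and the $-m\hat{l}_i$ piece of the entropy term annihilate, so that the bound depends on $H'$ only through $\ln S_{H'}$—this is also the step that renders the monotonicity transparent. The only conceptual point worth stating explicitly is that the ordered-subset reduction legitimizes comparing these finitely many candidates, guaranteeing that the full-support interior posterior genuinely dominates every sparse (boundary) posterior rather than merely the ordered ones.
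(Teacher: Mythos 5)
Your proposal is correct and follows essentially the same route as the paper: restrict to ordered subsets via Theorem \ref{thm:increasing.subsets.phiKL} and its Corollary, substitute the closed-form optimizer from Theorem \ref{thm:OptQ_linKL}, observe the cancellation that collapses the bound to $\bigl(\ln H - \ln \sum_{i=1}^{H'} e^{-m\hat{l}_i}\bigr)/m$, and conclude by monotonicity of the partial sums that the minimum occurs at $H' = H$. The only (harmless) cosmetic difference is that you carry the additive constant $\ln(\mathcal{I}^{K}_{\text{lin}}(m)/\delta)/m$ explicitly, which the paper drops from the objective.
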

\begin{proof}
Using the result of Theorem 6 (in paper), it is sufficient to compare the bound values corresponding to the best posteriors for ordered subsets of $\mathcal{H}$, ranked by non-decreasing $\hat{l}_i$ values, to determine the optimal posterior for (19) (in paper). Using Theorem 5 (in paper), the optimal posterior $Q_{lin, KL}^{\ast}(H')$ on an ordered subset of classifiers of size $H' \in [H]$ is given as:
\begin{equation*}
q_{i, lin, KL}^{\ast} (H') = 
\begin{cases}
\frac{e^{-m\hat{l}_i}}{\sum_{i=1}^{H'} e^{-m\hat{l}_i}} ~ &\forall i = 1, \ldots, H' \\
0 ~ &\forall i = H' + 1, \ldots, H,
\end{cases}
\end{equation*}
and the optimal objective value is:
\begin{align*}
B_{lin, KL}(Q_{lin, KL}^{\ast}(H')) &= \sum_{i = 1}^{H} \hat{l}_iq^{\ast}_{i, lin, KL} + \frac{\sum_{i =1}^{H} q^{\ast}_{i, lin, KL} \ln (q^{\ast}_{i, lin, KL} H)}{m} \\
&= \sum_{i = 1}^{H'} \hat{l}_i \left( \frac{e^{-m\hat{l}_i}}{\sum_{i=1}^{H'} e^{-m\hat{l}_i}} \right) + \frac{\sum\limits_{i =1}^{H'} \frac{e^{-m\hat{l}_i}}{\sum_{i=1}^{H'} e^{-m\hat{l}_i}} \cdot \ln \left( \frac{H e^{-m\hat{l}_i}}{\sum_{i=1}^{H'} e^{-m\hat{l}_i}}\right)}{m} \\
&=  \frac{\sum_{i = 1}^{H'} \hat{l}_ie^{-m\hat{l}_i}}{\sum_{i=1}^{H'} e^{-m\hat{l}_i}}  + \frac{\sum\limits_{i =1}^{H'} \left( \frac{e^{-m\hat{l}_i}}{\sum_{i=1}^{H'} e^{-m\hat{l}_i}} \left[-m\hat{l}_i + \ln \left( \frac{H}{\sum_{i=1}^{H'} e^{-m\hat{l}_i}}\right) \right] \right)}{m} \\
&=  \frac{\sum_{i = 1}^{H'} \hat{l}_ie^{-m\hat{l}_i}}{\sum_{i=1}^{H'} e^{-m\hat{l}_i}}  + \frac{-\cancel{m} \cdot \sum\limits_{i =1}^{H'} \left( \frac{e^{-m\hat{l}_i}}{\sum_{i=1}^{H'} \hat{l}_i e^{-m\hat{l}_i}} \right)}{\cancel{m}} + \frac{\sum\limits_{i =1}^{H'} \left( \frac{e^{-m\hat{l}_i}}{\sum_{i=1}^{H'} e^{-m\hat{l}_i}} \ln \left( \frac{H}{\sum_{i=1}^{H'} e^{-m\hat{l}_i}}\right) \right)}{m} \\
&=  \cancel{\frac{\sum_{i = 1}^{H'} \hat{l}_ie^{-m\hat{l}_i}}{\sum_{i=1}^{H'} e^{-m\hat{l}_i}}} - \cancel{\frac{\sum_{i = 1}^{H'} \hat{l}_ie^{-m\hat{l}_i}}{\sum_{i=1}^{H'} e^{-m\hat{l}_i}}} + \ln \left( \frac{H}{\sum_{i=1}^{H'} e^{-m\hat{l}_i}}\right) \cdot \frac{\left( \frac{\cancel{\sum_{i =1}^{H'} e^{-m\hat{l}_i}}}{\cancel{\sum_{i=1}^{H'} e^{-m\hat{l}_i}}} \right)}{m} \\
&= \frac{\ln H - \ln \left( \sum_{i =1}^{H'} e^{-m\hat{l}_i} \right)}{m}
\end{align*}
Since $e^{-x} > 0 $ for all $x \in \mathbb{R}$, the sum $\sum_{i =1}^{H'} e^{-m\hat{l}_i}$ is an increasing function of $H' \in [H]$. Using the monotone increasing property of natural logarithm function, the bound function, $B_{lin, KL}(Q_{lin, KL}^{\ast}(H'))$ is a decreasing function of $H' = 1, \ldots, H$. Therefore the least bound value is achieved when all the classifiers are assigned strictly positive weights, that is, when the optimal posterior has full support.
\end{proof}

\section{Optimal posterior for squared distance function}
\begin{lem}
For a given sample size, $m$, $l^\ast = 0.5$ is the maximizer of $\mathcal{I}^K_{\text{sq}}(m,l) := \sum_{k=0}^{m}{m \choose k} l^k (1-l)^{m-k}e^{m\left(\frac{k}{m}-l\right)^2} $ for $l \in [0, 1]$.
\end{lem}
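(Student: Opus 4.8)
The plan is to exploit a reflection symmetry of the summand about $l=\tfrac12$ and then upgrade it to a genuine (and unique) maximum by a sign analysis of a single derivative. Writing $g(l):=\mathcal{I}^{K}_{\text{sq}}(m,l)$ and using $m\bigl(\tfrac{k}{m}-l\bigr)^2=(k-ml)^2/m$, I would first record the identity $g(l)=g(1-l)$, obtained by the index substitution $k\mapsto m-k$: this fixes $\binom{m}{k}$, swaps $l^k(1-l)^{m-k}$ with $l^{m-k}(1-l)^k$, and leaves $(k-ml)^2$ invariant once the accompanying reflection $l\mapsto 1-l$ is taken into account. Because of this symmetry it suffices to prove that $g$ is strictly increasing on $(0,\tfrac12)$; together with continuity and the endpoint values $g(0)=g(1)=1<g(\tfrac12)$, this pins the unique global maximizer at $l^\ast=\tfrac12$.

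Next I would differentiate. With $a_k(l)=\binom{m}{k}l^k(1-l)^{m-k}$ and $w_k(l)=e^{(k-ml)^2/m}$ one has $a_k'=a_k\tfrac{k-ml}{l(1-l)}$ and $w_k'=-2(k-ml)w_k$, so the derivative factors cleanly:
\begin{equation*}
g'(l)=\left[\frac{1}{l(1-l)}-2\right]S(l),\qquad S(l):=\sum_{k=0}^m a_k(l)\,(k-ml)\,w_k(l).
\end{equation*}
The bracket equals $\tfrac{2l^2-2l+1}{l(1-l)}=\tfrac{2(l-\frac12)^2+\frac12}{l(1-l)}$, which is strictly positive on $(0,1)$. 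Hence the entire problem reduces to the single claim $S(l)>0$ for $l\in(0,\tfrac12)$ (and, by the antisymmetry $S(1-l)=-S(l)$, automatically $S<0$ on $(\tfrac12,1)$ with $S(\tfrac12)=0$).

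To control $S$ I would linearize the quadratic exponent by a Gaussian (Hubbard--Stratonovich) representation, $e^{(k-ml)^2/m}=\sqrt{m/(4\pi)}\int_{-\infty}^{\infty}e^{-mt^2/4+t(k-ml)}\,dt$. Substituting, the inner sum becomes a centered binomial moment generating function $\beta(l,t)^m$ with $\beta(l,t)=l\,e^{t(1-l)}+(1-l)e^{-tl}$, and differentiating in $t$ yields, for a positive constant $C$,
\begin{equation*}
S(l)=C\,l(1-l)\int_{-\infty}^{\infty}e^{-mt^2/4}\,\beta(l,t)^{m-1}\bigl(e^{t(1-l)}-e^{-tl}\bigr)\,dt .
\end{equation*}
Folding the integral onto $t>0$ collapses the integrand to $e^{-mt^2/4}(e^t-1)\bigl[\beta(l,t)^{m-1}e^{-tl}-\beta(l,-t)^{m-1}e^{-t(1-l)}\bigr]$; for $t>0$ and $l<\tfrac12$ the factor $e^t-1$ is positive, while the bracket is strictly positive because both $e^{-tl}>e^{-t(1-l)}$ and $\beta(l,t)\ge\beta(l,-t)>0$. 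The latter sub-lemma I would prove from $D(t):=\beta(l,t)-\beta(l,-t)$ satisfying $D(0)=0$ and $D'(t)=2l(1-l)\bigl[\cosh(t(1-l))-\cosh(tl)\bigr]>0$ for $t>0,\ l<\tfrac12$ (since $t(1-l)>tl\ge0$ and $\cosh$ is increasing on $[0,\infty)$). Thus the integrand is positive for every $t>0$, giving $S(l)>0$ as needed.

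The main obstacle is exactly this positivity of $S$ on $(0,\tfrac12)$, and it is delicate because the corresponding statement is \emph{false pointwise in $t$}: for large $t$ the base $\beta(l,t)^m$ grows like $e^{mt(1-l)}$, which is largest for small $l$ rather than for $l=\tfrac12$, so any argument that ignores the Gaussian damping $e^{-mt^2/4}$ is doomed. The Hubbard--Stratonovich device is what makes the competition tractable: after folding, the whole comparison rests on the two elementary monotonicities above (in particular the sub-lemma $\beta(l,t)\ge\beta(l,-t)$), which is the technical heart of the proof. A purely combinatorial route through the right-skewness of the binomial is conceivable but considerably messier, so I would present the Gaussian-representation version.
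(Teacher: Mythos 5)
Your proposal is correct, and it is in fact strictly stronger than what the paper does. The paper's own argument only verifies the first-order condition: it differentiates $\mathcal{I}^K_{\text{sq}}(m,l)$ term by term, shows by the $k\mapsto m-k$ cancellation that the derivative vanishes at $l=\tfrac12$, and then explicitly concedes that ``the second order derivative test is not conclusive'' before falling back on a plot of $\mathcal{I}^K_{\text{sq}}(m,\cdot)$ for several values of $m$ to assert that the stationary point is the global maximum. Your route closes exactly that gap. The reflection symmetry $g(l)=g(1-l)$ and the factorization $g'(l)=\bigl[\tfrac{1}{l(1-l)}-2\bigr]S(l)$ are both correct (I checked $a_k'=a_k\tfrac{k-ml}{l(1-l)}$, valid also at $k=0,m$ for $l\in(0,1)$, and the strict positivity of the bracket via $1-2l(1-l)=2(l-\tfrac12)^2+\tfrac12$), so the whole lemma does reduce to $S>0$ on $(0,\tfrac12)$. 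The Gaussian representation, the identity $\sum_k a_k(k-ml)e^{t(k-ml)}=m\,\beta(l,t)^{m-1}\,l(1-l)\bigl(e^{t(1-l)}-e^{-tl}\bigr)$, the folding onto $t>0$ using $e^{-t(1-l)}-e^{tl}=-(e^t-1)e^{-t(1-l)}$, and the sub-lemma $\beta(l,t)\ge\beta(l,-t)$ via $D'(t)=2l(1-l)\bigl[\cosh(t(1-l))-\cosh(tl)\bigr]>0$ all check out, and the degenerate case $m=1$ (where $\beta^{m-1}\equiv 1$) is still covered by $e^{-tl}>e^{-t(1-l)}$. Together with $g(0)=g(1)=1<g(\tfrac12)$ this gives a genuine proof of global (indeed unique) maximality, which the paper does not have. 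What each approach buys: the paper's computation is short and suffices to identify the candidate $l^\ast=\tfrac12$, but as a proof it is incomplete; your argument costs one integral-representation trick and two elementary monotonicity checks, and in exchange delivers the full statement rigorously. It would be a worthwhile replacement for the paper's appendix argument.
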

\begin{proof}
$\mathcal{I}^K_{\text{sq}}(m,l)$ is a real valued function on the interval $[0, 1]$, hence we can identify its maximizer(s) via derivative test. We need to show that $\left.\frac{d}{dl}\mathcal{I}^K_{\text{sq}}(m,l)\right|_{l=0.5} =0$ and $\left.\frac{d^2}{dl^2}\mathcal{I}^K_{\text{sq}}(m,l)\right|_{l=0.5} \leq 0$.

Considering individual terms in the sum $\mathcal{I}^K_{\text{sq}}(m,l)$, we observe that except for $k=0$ and $k=m$, all other terms involve product of powers of both $l$ and $1-l$.
\begin{multline}
\mathcal{I}^K_{\text{sq}}(m,l) = {m \choose 0} l^0 (1-l)^{m}e^{m(0-l)^2} + {m \choose m} l^m (1-l)^{0}e^{m(1-l)^2} \\
+ \sum\limits_{k=1}^{m-1}{m \choose k} l^k (1-l)^{m-k}e^{m\left(\frac{k}{m}-l\right)^2}
\end{multline}

First derivative:
\begin{align*}
    \frac{d}{dl}\mathcal{I}^K_{\text{sq}}(m,l) &=  \frac{d}{dl}(1-l)^{m}e^{ml^2} + \frac{d}{dl}l^m e^{m(1-l)^2} + \frac{d}{dl}\Bigg(\sum\limits_{k=1}^{m-1}{m \choose k} l^k (1-l)^{m-k}e^{m\left(\frac{k}{m}-l\right)^2}\Bigg)\\
    &= -m(1-l)^{m-1}e^{ml^2} + (1-l)^{m}e^{ml^2} 2ml + ml^{m-1} e^{m(1-l)^2} - l^m e^{m(1-l)^2} 2m(1-l)\\
    &\quad + \sum\limits_{k=1}^{m-1}{m \choose k} \Bigg[kl^{k-1} (1-l)^{m-k} - (m-k)(1-l)^{m-k-1}l^k \\
    &\qquad - l^k (1-l)^{m-k} 2m\left(\frac{k}{m}-l\right)\Bigg]e^{m\left(\frac{k}{m}-l\right)^2}
\end{align*}

\begin{align*}
    \left.\frac{d}{dl}\mathcal{I}^K_{\text{sq}}(m,l)\right|_{l=0.5} &= \bcancel{-m(0.5)^{m-1}e^{m(0.5)^2}} + \cancel{(0.5)^{m}e^{m(0.5)^2} m} \\
    &\quad + \bcancel{m(0.5)^{m-1} e^{m(0.5)^2}} - \cancel{(0.5)^m e^{m(0.5)^2} m}\\
    &\quad + \sum\limits_{k=1}^{m-1}{m \choose k} \Bigg[k(0.5)^{k-1} (0.5)^{m-k} - (m-k)(0.5)^{m-k-1}(0.5)^k \\
    &\qquad - (0.5)^k (0.5)^{m-k} 2m\left(\frac{k}{m}-0.5\right)\Bigg]e^{m\left(\frac{k}{m}-0.5\right)^2} \\
    &= 0 + \sum\limits_{k=1}^{m-1}{m \choose k} (0.5)^{m-1} \Bigg( \underbrace{k- (m-k) -k + 0.5m}_{0}\Bigg)e^{m\left(\frac{k}{m}-0.5\right)^2} \\
    &= \sum\limits_{k=1}^{m-1}{m \choose k} (0.5)^{m} \Bigg( \frac{k}{m} - 0.5\Bigg)e^{m\left(\frac{k}{m}-0.5\right)^2}= 0
\end{align*}
~\\
The second order derivative test is not conclusive, 
but we can refer to the adjoining Figure \ref{fig:I_K_sq} where we have plotted the function $\mathcal{I}^K_{\text{sq}} (m, l) = \sum_{k=0}^{m}{m \choose k} l^k (1-l)^{m-k}e^{m\left(\frac{k}{m}-l\right)^2}$ for different sample size, $m$. 
but we observe in the graph that, for each $m$, $\mathcal{I}^K_{\text{sq}} (m, l)$ is a non-monotone function of $l \in [0, 1]$ which attains its maximum at $l = 0.5$. Hence the proof.

Thus, we have
\begin{align*}
\mathcal{I}^K_{\text{sq}} (m) &= \sup_{l \in [0,1]} \mathcal{I}^K_{\text{sq}} (m, l)\\
&= \sup_{l \in [0,1]}\sum\limits_{k=0}^{m}{m \choose k} l^k (1-l)^{m-k}e^{m\left(\frac{k}{m}-l\right)^2} \\
&= \sum\limits_{k=0}^{m}{m \choose k} (0.5)^m e^{m\left(\frac{k}{m}-0.5\right)^2}
\end{align*}

\begin{table}[ht]
\centering
\begin{tabular}{|r|rr|}
  \hline
\makecell{Sample \\ size, $m$} & $I^K_\text{sq}(m)$ & $2\sqrt{m}$ \\ 
  \hline
 10 & 1.39 & 6.32 \\ 
   50 & 1.41 & 14.14 \\ 
  100 & 1.41 & 20.00 \\ 
  200 & 1.41 & 28.28 \\ 
  500 & 1.41 & 44.72 \\ 
  1000 & 1.41 & 63.25 \\ 
   \hline
\end{tabular}
\end{table}
For $m \geq 1020$, computation is difficult due to storage limitations in the range of floating point numbers -- gives  $I^K_\text{sq}(m)$ as NaN.

\begin{figure}[htpb]
\centering
\includegraphics[width=0.7\textwidth]{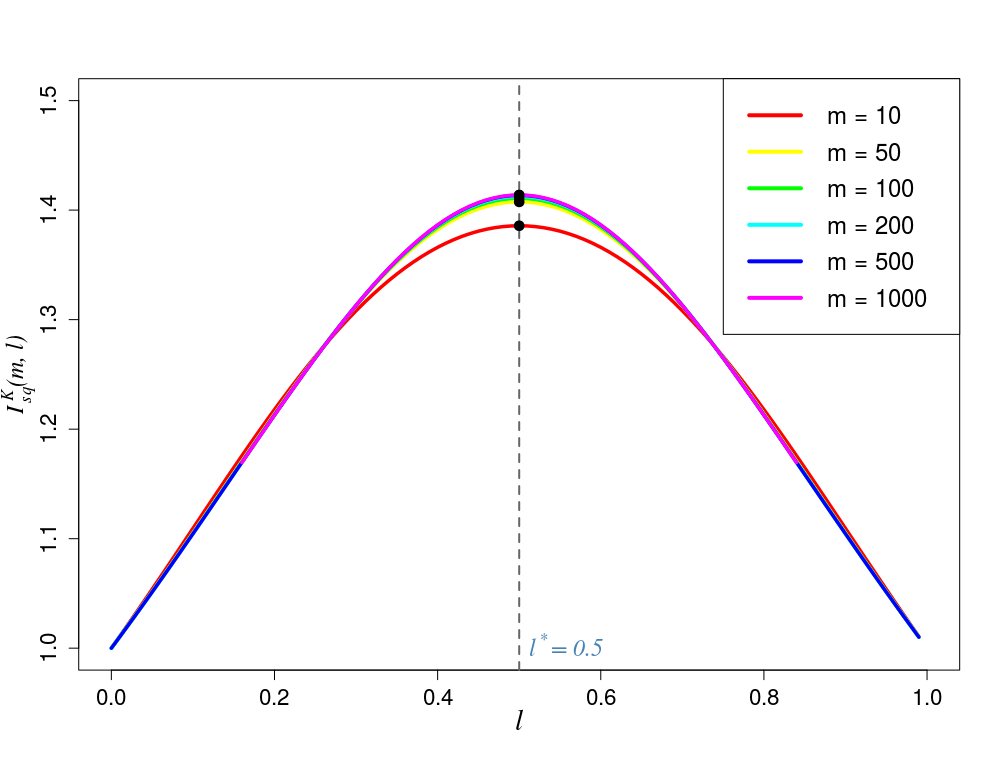}
\caption[Plot of the function $\mathcal{I}^K_{\text{sq}} (m, l)$ for PAC-Bayesian bound with squared distance function and KL divergence measure]{Plot of the threshold function $\mathcal{I}^K_{\text{sq}} (m, l) = \sum\limits_{k=0}^{m}{m \choose k} l^k (1-l)^{m-k}e^{m\left(\frac{k}{m}-l\right)^2}$ as a function of the true risk $l \in [0, 1]$ for different values of the sample size, $m$ represented by different curves in the above graph. We observe that the function $\mathcal{I}^K_{\text{sq}} (m, l)$ is concave in its domain and symmetric about $l = 0$. We are interested in the quantity $\mathcal{I}^K_{\text{sq}} (m) = \sup_{l \in [0,1]} \mathcal{I}^K_{\text{sq}} (m, l)$ as a function of $m$ which we identify graphically (and mark it by a $\bullet$ on each curve)}
\label{fig:I_K_sq}
\end{figure}
\end{proof}

\subsection{Convexity of the bound function, $B_{\text{sq, KL}}(Q)$}
We use the first order condition to verify convexity of our bound function. For convexity, we need the following condition to hold for any pair of distributions $Q$ and $Q'$ (that are absolutely continuous with respect to the prior distribution $P$) on classifier space $\mathcal{H}$:
\begin{equation}
B_{\text{sq, KL}}(Q') \geq B_{\text{sq, KL}}(Q) + \langle\nabla B_{\text{sq, KL}}(Q), Q'-Q\rangle \quad \forall Q, Q' \label{eqn:1ConvexBsqKL}
\end{equation}
Our classifier space $\mathcal{H}$ is a finite set, $\mathcal{H} = \lbrace h_i\rbrace_{i =1}^H$. So, any distribution on $\mathcal{H}$ is a discrete distribution which can be represented as $Q = \left( q_1, \ldots, q_H \right)$. To find the gradient $\nabla B_{\text{sq, KL}}$, we compute the (first) derivative of $B_{\text{sq, KL}}$ with respect to variable $q_i, i \in \lbrace 1, \ldots, H\rbrace$:
\begin{align*}
\frac{\partial B_{\text{sq, KL}}(Q)  }{\partial q_i} &=  \hat{l}_i + 
\frac{1}{2\sqrt{m}} \frac{1}{\sqrt{\sum_{i=1}^{H} q_i \ln\frac{q_i}{p_i} + \ln\frac{2\sqrt{m}}{\delta}}} \cdot \frac{\partial}{\partial q_i} \left(\sum_{j=1}^{H} q_j\ln\frac{q_j}{p_j}\right) \\[3mm]
&= \hat{l}_i +\frac{1}{2\sqrt{m}} \frac{1 + \ln\frac{q_i}{p_i}}{\sqrt{\sum_{i=1}^{H}q_i\ln\frac{q_i}{p_i} + \ln \frac{2\sqrt{m}}{\delta}}}
\end{align*}

Consider the following inner product:
\begin{align*}
\langle \nabla  B_{\text{sq, KL}}(Q), Q'-Q\rangle
&=\sum_{i=1}^{H} \left[ \frac{\partial B_{\text{sq, KL}}(Q)}{\partial q_i} \cdot (q'_i - q_i) \right] \\
&= \sum_{i=1}^{H} \left( \hat{l_i} + \frac{1}{2\sqrt{m}} \frac{1 + \ln\frac{q_i}{p_i}}{\sqrt{\sum_{i=1}^{H} q_i\ln\frac{q_i}{p_i} + \ln \frac{2\sqrt{m}}{\delta}}} \right) (q'_i - q_i) \\
&= \sum_{i=1}^{H} \hat{l}_iq'_i - \sum_{i=1}^{H} \hat{l}_iq_i + \frac{1}{2\sqrt m}  \frac{\sum_{i=1}^{H}q_i^{'}( 1+ \ln\frac{q_i}{p_i}) -
\sum_{i=1}^{H}q_i( 1+ \ln\frac{q_i}{p_i})}{\sqrt{\sum_{i=1}^{H} q_i\ln\frac{q_i}{p_i} + \ln \frac{2\sqrt{m}}{\delta}}} 
\end{align*}
To  check the  first order condition we need  to  verify the inequality \eqref{eqn:1ConvexBsqKL}:
\begin{align}
&B_{\text{sq, KL}}(Q') \geq B_{\text{sq, KL}}(Q) + \langle\nabla B_{\text{sq, KL}}(Q), Q'-Q\rangle \nonumber\\
\Rightarrow \; &\sum_{i=1}^{H} \hat{l}_iq'_i  +   \sqrt{\frac{ \sum_{i=1}^{H} q'_i \ln\frac{q'_i}{p_i} + \ln \frac{2\sqrt{m}}{\delta}}{m}} 
\geq 
\sum_{i=1}^{H} \hat{l}_iq^{'}_i - \sum_{i=1}^{H} \hat{l}_iq_i + \frac{1}{2\sqrt m} \frac{\sum_{i=1}^{H}q'_i \ln\frac{q_i}{p_i} - \sum_{i=1}^{H}q_i \ln\frac{q_i}{p_i}}{\sqrt{\sum_{i=1}^{H} q_i\ln\frac{q_i}{p_i} + \ln \frac{2\sqrt{m}}{\delta}}} \nonumber\\
&\hspace{8cm} + \sum_{i=1}^{H} \hat{l}_iq_i + \sqrt {\frac{\sum_{i=1}^{H} q_i\ln\frac{q_i}{p_i} + \ln \frac{2\sqrt{m}}{\delta}}{m}} \nonumber\\
\Rightarrow \; &\sqrt {\sum_{i=1}^{H} q'_i\ln\frac{q'_i}{p_i} + \ln
\frac{2\sqrt{m}}{\delta}}  \geq     \frac{\sum_{i=1}^{H}q'_i \ln\frac{q_i}{p_i} -\sum_{i=1}^{H}q_i \ln\frac{q_i}{p_i} + 2 \left(\sum_{i=1}^{H} q_i\ln\frac{q_i}{p_i} + \ln
\frac{2\sqrt{m}}{\delta} \right)}{2\sqrt{\sum_{i=1}^{H} q_i\ln\frac{q_i}{p_i} + \ln \frac{2\sqrt{m}}{\delta}}} \nonumber
\end{align}
\begin{multline}
\Rightarrow \; 2\left(\sqrt {\sum_{i=1}^{H} q'_i\ln\frac{q'_i}{p_i} + \ln\frac{2\sqrt{m}}{\delta}}\right) \left(\sqrt{\sum_{i=1}^{H} q_i\ln\frac{q_i}{p_i} +
\ln \frac{2\sqrt{m}}{\delta}}\right) \\
\geq \left(  \sum_{i=1}^{H}q^{'}_i \ln\frac{q_i}{p_i} + \ln \frac{2\sqrt{m}}{\delta} \right) +\left(  \sum_{i=1}^{H}q_i \ln\frac{q_i}{p_i} + \ln \frac{2\sqrt{m}}{\delta} \right) \label{eqn:nonconvexityBsqKL}
\end{multline}
A theoretical proof could not be obtained which shows that the above condition holds for any pair of distributions $Q$ and $Q'$ that are absolutely continuous with respect to $P$ for any set of system parameters: $P, m, \delta, H$. 

The bound function $B_{\text{sq, KL}}(Q)$ is non-convex if there exists a pair of distributions $Q$ and $Q'$ for given system parameters such that the above condition is violated. For different combinations of the parameter values $m, \delta, H$, with uniform and non-uniform prior, $P$ and randomly chosen distributions $Q$ and $Q'$ that are absolutely continuous with respect to $P$, we were unable to get a counter-example for this condition. 

Our computations illustrate that $B_{\text{sq, KL}}(Q)$ has a single local minimum for uniform prior on $\mathcal{H}$. This lead us to investigate quasi-convexity of this bound function.



\subsection{Quasiconvexity of the bound function, $B_{\text{sq,KL}}(Q)$}
We are interested in checking whether $B_{\text{sq, KL}}(Q)$ is strictly quasi-convex. If so, we can  claim that a local optimal solution will be a global optimal solution \cite{bazaraa2013nonlinear}.

$B_{\text{sq,KL}}(Q)$ is defined on the simplex $\Delta^H$ which is a non-empty convex set in $\mathbb{R}^H$. This function is a sum of two terms:
\begin{equation}
B_{\text{sq,KL}}(Q) = E_{Q} [\hat{l}] +  \sqrt{\frac{KL[Q || P] + \ln \frac{\mathcal{I}^K_{\text{sq}}(m)}{\delta}}{m}}.
\end{equation}
The first term, $E_{Q} [\hat{l}]$, is a linear function of $Q$. The second term is the square root of a positive affine transformation of convex function $KL[Q||P]$, where $KL[Q||P]$ is a convex function of $Q$. Also, convexity implies (strict) quasi-convexity. Thus, we have that for a given prior $P$, for each $Q \neq Q'$, that are absolutely continuous with respect to the prior distribution $P$, such that $KL[Q || P]  \neq KL[Q' || P]  $, the following holds for all $\alpha \in (0, 1)$: 
\begin{equation*}
\frac{KL[(\alpha Q + (1-\alpha)Q') || P] + \ln \frac{\mathcal{I}^K_{\text{sq}}(m)}{\delta}}{m}  \nonumber \\
<  \max \left \lbrace \frac{KL[Q || P] + \ln \frac{\mathcal{I}^K_{\text{sq}}(m)}{\delta}}{m} ,\frac{KL[Q'|| P] + \ln \frac{\mathcal{I}^K_{\text{sq}}(m)}{\delta}}{m} \right \rbrace.
\end{equation*}
We know that square root function is strictly increasing in its argument, which implies that:
\begin{multline} \label{eqn:quasiconvex_sqrtKL}
\sqrt{\frac{KL[(\alpha Q + (1-\alpha)Q') || P] + \ln \frac{\mathcal{I}^K_{\text{sq}}(m)}{\delta}}{m}}  \\
<  \max \left \lbrace \sqrt{\frac{KL[Q || P] + \ln \frac{\mathcal{I}^K_{\text{sq}}(m)}{\delta}}{m}} , \sqrt{\frac{KL[Q'|| P] + \ln \frac{\mathcal{I}^K_{\text{sq}}(m)}{\delta}}{m}} \right \rbrace. 
\end{multline}
Thus, we can claim that $\sqrt{\frac{KL[Q || P] + \ln \frac{\mathcal{I}^K_{\text{sq}}(m)}{\delta}}{m}}$ is a (strictly) quasiconvex function of $Q$. Thus, both the components of $B_{\text{sq,KL}}(Q)$ are quasiconvex, but their sum need not be quasiconvex.

Note, in the remaining of the analysis, ``for any $Q, Q'$ " implies ``for any $Q, Q'$ that are absolutely continuous with respect to $P$". This condition is required for $KL[Q||P]$ to be defined.

To claim $B_{\text{sq,KL}}(Q)$ is quasiconvex, we need to show that for each $Q \neq Q'$, that are absolutely continuous with respect to the prior distribution $P$, such that $B_{\text{sq,KL}}(Q) \neq B_{\text{sq,KL}}(Q') $, the following holds: 
\begin{equation*}
    B_{\text{sq,KL}}[\alpha Q + (1-\alpha)Q'] < \max \lbrace B_{\text{sq,KL}}(Q),B_{\text{sq,KL}}(Q') \rbrace \quad \forall \alpha \in (0,1).
\end{equation*}
That is equivalent to showing:
\begin{multline}
E_{\alpha Q + (1-\alpha)Q'} [\hat{l}] + \sqrt{\frac{KL[(\alpha Q + (1-\alpha)Q') || P] + \ln \frac{\mathcal{I}^K_{\text{sq}}(m)}{\delta}}{m}}  \\
<  \max \left \lbrace E_{Q} [\hat{l}] +  \sqrt{\frac{KL[Q || P] + \ln \frac{\mathcal{I}^K_{\text{sq}}(m)}{\delta}}{m}} , E_{Q'} [\hat{l}] +  \sqrt{\frac{KL[Q'|| P] + \ln \frac{\mathcal{I}^K_{\text{sq}}(m)}{\delta}}{m}} \right \rbrace
\end{multline}
We assume that  $B_{\text{sq,KL}}(Q') < B_{\text{sq,KL}}(Q)$. This implies that we need to show that $ B_{\text{sq,KL}}(\alpha Q + (1-\alpha)Q') < B_{\text{sq,KL}}(Q)$. We consider 4 possible cases as follows: 
\begin{enumerate}
\item[Case I]: If $E_{Q'}[\hat{l}] \leq E_{Q}[\hat{l}]$ and $KL[Q'||P] < KL[Q||P]$, then we have:
\begin{equation}
E_{\alpha Q + (1-\alpha)Q'}[\hat{l}] = \alpha E_{Q}[\hat{l}] +(1-\alpha)E_{Q'}[\hat{l}] \leq E_{Q}[\hat{l}] \quad \text{ for each } \alpha \in (0, 1).
\end{equation}
We know that $\sqrt{\frac{KL[Q || P] + \ln \frac{\mathcal{I}^K_{\text{sq}}(m)}{\delta}}{m}}$ is quasiconvex using \eqref{eqn:quasiconvex_sqrtKL}, and we have assumed $KL[Q'||P] < KL[Q||P]$. Hence, the following holds for any $Q, Q'$ for each $\alpha \in (0, 1)$:
\begin{equation}
\sqrt{\frac{KL[(\alpha Q + (1-\alpha)Q') || P] + \ln \frac{\mathcal{I}^K_{\text{sq}}(m)}{\delta}}{m}} <  \sqrt{\frac{KL[Q || P] + \ln \frac{\mathcal{I}^K_{\text{sq}}(m)}{\delta}}{m}}. 
\end{equation} 
Adding the above two inequalities, we get that for any $Q, Q'$ for each $\alpha \in (0, 1)$:
\begin{eqnarray}
B_{\text{sq,KL}}(\alpha Q + (1-\alpha)Q') &<& B_{\text{sq,KL}}(Q) \nonumber \\
&=& \max \lbrace B_{\text{sq,KL}}(Q),B_{\text{sq,KL}}(Q') \rbrace
\end{eqnarray}
Hence quasiconvexity holds under Case I.

\item[Case II]: If $E_{Q'}[\hat{l}] < E_{Q}[\hat{l}]$ and $KL[Q||P] = KL[Q'||P]$, then we have:
\begin{equation}
E_{\alpha Q + (1-\alpha)Q'}[\hat{l}] = \alpha E_{Q}[\hat{l}] +(1-\alpha)E_{Q'}[\hat{l}] < E_{Q}[\hat{l}] \quad \text{ for each } \alpha \in (0, 1).
\end{equation}
Since $\sqrt{\frac{KL[Q || P] + \ln \frac{\mathcal{I}^K_{\text{sq}}(m)}{\delta}}{m}}$ is quasiconvex using \eqref{eqn:quasiconvex_sqrtKL}, and also $KL[Q||P] = KL[Q'||P]$ by assumption, we can claim the following for any $Q, Q'$ for each $\alpha \in (0, 1)$:
\begin{equation}
\sqrt{\frac{KL[(\alpha Q + (1-\alpha)Q') || P] + \ln \frac{\mathcal{I}^K_{\text{sq}}(m)}{\delta}}{m}} <  \sqrt{\frac{KL[Q || P] + \ln \frac{\mathcal{I}^K_{\text{sq}}(m)}{\delta}}{m}}. 
\end{equation} 
Adding the above two inequalities, it is clear that for any $Q, Q'$ for each $\alpha \in (0, 1)$:
\begin{eqnarray}
B_{\text{sq,KL}}(\alpha Q + (1-\alpha)Q') &<& B_{\text{sq,KL}}(Q) \nonumber \\
&=& \max \lbrace B_{\text{sq,KL}}(Q),B_{\text{sq,KL}}(Q') \rbrace
\end{eqnarray}
Hence quasiconvexity holds in Case II as well.

\item[Case III]: If $E_{Q}[\hat{l}] \leq E_{Q'}[\hat{l}]$ and $KL[Q||P] > KL[Q'||P]$, such that $ B_{\text{sq,KL}}(Q') < B_{\text{sq,KL}}(Q)$. 

This implies that:
\begin{align*}
E_{Q'}[\hat{l}] + \sqrt{\frac{KL[Q' || P] + \ln \frac{\mathcal{I}^K_{\text{sq}}(m)}{\delta}}{m}} < E_{Q}[\hat{l}] + \sqrt{\frac{KL[Q|| P] + \ln \frac{\mathcal{I}^K_{\text{sq}}(m)}{\delta}}{m}} \\
\Leftrightarrow E_{Q'} [\hat{l}] < E_{Q}[\hat{l}] + \underbrace{\sqrt{\frac{KL[Q|| P] + \ln \frac{\mathcal{I}^K_{\text{sq}}(m)}{\delta}}{m}} -  \sqrt{\frac{KL[Q'|| P] + \ln \frac{\mathcal{I}^K_{\text{sq}}(m)}{\delta}}{m}}}_{\geq 0 \text{ because } KL[Q||P] > KL[Q'||P]}
\end{align*}
Now, consider the bound function at the convex combination $\alpha Q + (1 - \alpha) Q'$:
\begin{align*}
\hspace{-5mm}
B_{\text{sq,KL}}(\alpha Q + (1 - \alpha) Q') &= E_{\alpha Q + (1-\alpha)Q'} [\hat{l}] + \sqrt{\frac{KL[(\alpha Q + (1-\alpha)Q') || P] + \ln \frac{\mathcal{I}^K_{\text{sq}}(m)}{\delta}}{m}} \\
&< E_{Q'}[\hat{l}] + \sqrt{\frac{KL[Q || P] + \ln \frac{\mathcal{I}^K_{\text{sq}}(m)}{\delta}}{m}} \\
&< E_{Q}[\hat{l}] + 2\sqrt{\frac{KL[Q || P] + \ln \frac{\mathcal{I}^K_{\text{sq}}(m)}{\delta}}{m}} -  \sqrt{\frac{KL[Q'|| P] + \ln \frac{\mathcal{I}^K_{\text{sq}}(m)}{\delta}}{m}} \\
&< B_{\text{sq, KL}}(Q) + \underbrace{\sqrt{\frac{KL[Q|| P] + \ln \frac{\mathcal{I}^K_{\text{sq}}(m)}{\delta}}{m}} -  \sqrt{\frac{KL[Q'|| P] + \ln \frac{\mathcal{I}^K_{\text{sq}}(m)}{\delta}}{m}}}_{\geq 0 \text{ because } KL[Q||P] > KL[Q'||P]}
\end{align*}
For quasi-convexity to hold, we need to show that $B_{\text{sq,KL}}(\alpha Q + (1 - \alpha) Q')< B_{\text{sq,KL}}(Q)$ for any pair $Q, Q'$ and for any $\alpha \in (0, 1)$.

\item[Case IV]: If $E_{Q}[\hat{l}] \geq E_{Q'}[\hat{l}]$ and $KL[Q||P] < KL[Q'||P]$, such that $ B_{\text{sq,KL}}(Q') < B_{\text{sq,KL}}(Q)$.
This implies that: 
\begin{align*}
    E_{Q,}[\hat{l}] + \sqrt{\frac{KL[Q' || P] + \ln \frac{\mathcal{I}^K_{\text{sq}}(m)}{\delta}}{m}} < E_{Q}[\hat{l}] + \sqrt{\frac{KL[Q|| P] + \ln \frac{\mathcal{I}^K_{\text{sq}}(m)}{\delta}}{m}} \\
    \Leftrightarrow \sqrt{\frac{KL[Q'|| P] + \ln \frac{\mathcal{I}^K_{\text{sq}}(m)}{\delta}}{m}} < \underbrace{E_{Q}[\hat{l}] - E_{Q'}[\hat{l}]}_{\geq 0 \text{ under Case IV assumption}}  + \sqrt{\frac{KL[Q|| P] + \ln \frac{\mathcal{I}^K_{\text{sq}}(m)}{\delta}}{m}}
\end{align*}
Now, consider the bound function at the convex combination $\alpha Q + (1 - \alpha) Q'$:
\begin{align*}
B_{\text{sq,KL}}(\alpha Q + (1 - \alpha) Q') &= E_{\alpha Q + (1-\alpha)Q'} [\hat{l}] + \sqrt{\frac{KL[(\alpha Q + (1-\alpha)Q') || P] + \ln \frac{\mathcal{I}^K_{\text{sq}}(m)}{\delta}}{m}} \\
&< E_{Q}[\hat{l}] + \sqrt{\frac{KL[Q' || P] + \ln \frac{\mathcal{I}^K_{\text{sq}}(m)}{\delta}}{m}} \\
&< 2E_{Q}[\hat{l}] - E_{Q'}[\hat{l}]+ \sqrt{\frac{KL[Q|| P] + \ln \frac{\mathcal{I}^K_{\text{sq}}(m)}{\delta}}{m}} \\
&< B_{\text{sq, KL}}(Q) + \underbrace{E_{Q}[\hat{l}] - E_{Q'}[\hat{l}]}_{\geq 0 \text{ under Case IV assumption}}
\end{align*}
For quasi-convexity to hold, we need to show that $B_{\text{sq,KL}}(\alpha Q + (1 - \alpha) Q')< B_{\text{sq,KL}}(Q)$ for any pair $Q, Q'$ and for any $\alpha \in (0, 1)$.
\end{enumerate}

Quasi-convexity of $B_{\text{sq, KL}}(Q)$ could not be guaranteed under Cases III and IV above. But based on the computational results that we have for minimization of $B_{\text{sq, KL}}(Q)$, we observe that it has single local minimum in case of uniform prior $P$. This observation propels us to make the following claim:

\begin{conj}
The bound function, $B_{\text{sq, KL}}(Q) = \sum_{i=1}^{H} \hat{l}_iq_i + \sqrt {\frac{\sum_{i=1}^{H} q_i\ln\frac{q_i}{p_i} + \ln \frac{2\sqrt{m}}{\delta}}{m}}$ is quasi-convex when $P$ is uniform prior on $\mathcal{H}$.
\end{conj}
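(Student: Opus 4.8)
The plan is to establish quasiconvexity through the second-order characterization for $C^2$ functions. On the open simplex $\mathrm{int}(\Delta^H)$ it suffices, by Crouzeix's criterion for (strict) quasiconvexity, to show that for every $Q$ and every feasible direction $d\neq 0$ (i.e.\ $\sum_{i=1}^H d_i = 0$),
\begin{equation*}
\langle \nabla B_{\text{sq, KL}}(Q), d\rangle = 0 \;\Longrightarrow\; d^\top \nabla^2 B_{\text{sq, KL}}(Q)\, d > 0.
\end{equation*}
Since quasiconvexity on a convex set is equivalent to quasiconvexity of every one-dimensional restriction, this says that along each line every stationary point of the restriction is a strict local minimum. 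I would write $W(Q) := KL[Q\Vert P] + \ln\tfrac{2\sqrt m}{\delta} > 0$, so $B_{\text{sq, KL}}(Q) = \mathbb E_Q[\hat l] + \sqrt{W(Q)/m}$, and use the uniform prior $p_i = 1/H$ throughout. The easy directions (where $\mathbb E_Q[\hat l]$ and $KL$ decrease together) never give a vanishing directional derivative and recover the paper's already-settled Cases I--II; the content is in the remaining, genuinely coupled directions.

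The key computation is this. Because $\mathbb E_Q[\hat l]$ is affine it contributes nothing to the Hessian, so $\nabla^2 B_{\text{sq, KL}} = \nabla^2\sqrt{W/m}$ and the quadratic form itself does not depend on $\{\hat l_i\}$ at all --- the risks enter only through the \emph{location} of the stationary directions. With the uniform prior and $\sum_i d_i = 0$, the directional derivatives of $KL[\cdot\Vert P]$ are $\kappa'_d := \sum_i d_i \ln q_i$ and $\kappa''_d := \sum_i d_i^2/q_i$, and differentiating $\sqrt{W/m}$ gives
\begin{equation*}
d^\top \nabla^2 B_{\text{sq, KL}}(Q)\, d \;=\; \frac{2\,W(Q)\,\kappa''_d - (\kappa'_d)^2}{4\sqrt m\; W(Q)^{3/2}}.
\end{equation*}
The stationarity condition $\langle\nabla B_{\text{sq, KL}}, d\rangle = 0$ reads $\sum_i d_i\hat l_i = -\kappa'_d/\bigl(2\sqrt{m\,W(Q)}\bigr)$, i.e.\ $(\kappa'_d)^2 = 4m\,W(Q)\bigl(\sum_i d_i\hat l_i\bigr)^2$. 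Since the prefactor is positive it is irrelevant for the sign, and substituting this relation to eliminate $(\kappa'_d)^2$ shows the numerator equals $2W(Q)\bigl[\kappa''_d - 2m(\sum_i d_i\hat l_i)^2\bigr]$, so the whole problem collapses to the single clean inequality
\begin{equation}
\sum_{i=1}^H \frac{d_i^2}{q_i} \;>\; 2m\Bigl(\sum_{i=1}^H d_i\hat l_i\Bigr)^2 \qquad\Longleftrightarrow\qquad 2\,W(Q)\,\kappa''_d > (\kappa'_d)^2, \tag{$\star$}
\end{equation}
to be verified at every stationary pair $(Q,d)$. As a sanity check I would first note that for $H=2$ the condition $(\star)$ reduces, after eliminating the direction, to $2W(Q)> v(1-v)\bigl(\ln\tfrac{v}{1-v}\bigr)^2$ with $v=q_1$; writing $x=\ln\tfrac{v}{1-v}$ the right side equals $x^2/(4\cosh^2(x/2))$, whose maximum over $x$ is below $\tfrac12$, whereas $2W(Q)\ge 2\ln\tfrac{2\sqrt m}{\delta}\ge 2\ln 2$, so the inequality holds with room to spare.

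To attack $(\star)$ in general I would exploit the structure of stationary points. Maximizing the Rayleigh-type quotient gives $\max_{d:\sum d_i = 0}(\kappa'_d)^2/\kappa''_d = \mathrm{Var}_Q(\ln q) := \sum_i q_i(\ln q_i)^2 - (\sum_i q_i\ln q_i)^2$, so $B_{\text{sq, KL}}$ fails to be \emph{convex} exactly where $\mathrm{Var}_Q(\ln q) > 2W(Q)$ (a nonempty set, consistent with the paper's inability to prove convexity); quasiconvexity only needs that these ``concave directions'' never satisfy the gradient equation. The decisive observation is that, by the fixed-point equation \eqref{eqn:qFP_sqKL}, a stationary posterior is a Gibbs distribution $q_i^\ast \propto e^{-\beta \hat l_i}$ with $\beta = 2\sqrt{m\,W(Q^\ast)}$; hence $\ln q_i^\ast$ is affine in $\hat l_i$, and requiring $(\star)$ for \emph{all} $d$ at $Q^\ast$ becomes the single condition $\mathrm{Var}_{Q^\ast}(\hat l)\le \tfrac1{2m}$. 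The intended route is to bound the variance of a large-$\beta$ Gibbs distribution: stationarity forces $\beta\ge 2\sqrt{m\ln(2\sqrt m/\delta)}$, which grows with $m$ and concentrates $Q^\ast$ on the smallest risks, giving $\mathrm{Var}_{Q^\ast}(\hat l) = O\bigl(1/(m\,W)\bigr)$, whence the bound $\tfrac1{2m}$ follows from $W\ge\ln 2$.

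The hard part will be making this variance estimate uniform and rigorous. Cauchy--Schwarz alone is too lossy: near the uniform distribution $\kappa'_d$ is small (so stationarity forces $\sum_i d_i\hat l_i$ small) while $\mathrm{Var}_Q(\ln q)$ can be as large as $\Theta((\ln H)^2)$, so the bound $(\kappa'_d)^2 \le \kappa''_d\,\mathrm{Var}_Q(\ln q)$ does not suffice and the gradient constraint must be used essentially. Equivalently, one must control the variance of the tilted distribution $q_i^\ast\propto e^{-\beta\hat l_i}$ over \emph{all} admissible profiles $\{\hat l_i\}\subset\{0,\tfrac1m,\dots,1\}$, including bimodal ones in which a large energy gap coexists with the granularity $1/m$; the worst case is an interior gap of size $g\sim 2/\beta$, for which the two-level estimate gives $\mathrm{Var}_{Q^\ast}(\hat l)$ at most of order $4/(e^2\beta^2)$ and hence $2m\,\mathrm{Var}_{Q^\ast}(\hat l)$ below $2/(e^2 W)<1$. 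Converting this case analysis into one inequality valid for every profile is exactly the step that the authors could confirm only computationally, which is why the statement is recorded as a conjecture.
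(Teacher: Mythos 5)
Note first that the paper itself records this statement only as a conjecture: its own treatment (Appendix F.2) works from the first-order definition of quasiconvexity, splits into four cases according to the relative ordering of $\mathbb{E}_Q[\hat{l}]$ versus $\mathbb{E}_{Q'}[\hat{l}]$ and of $KL[Q\Vert P]$ versus $KL[Q'\Vert P]$, settles Cases I--II, and leaves Cases III--IV open, falling back on computation. Your route is genuinely different --- a second-order, Crouzeix-type pseudoconvexity criterion --- and your algebra up to the reduction is correct: the quadratic form of $\sqrt{W/m}$ along a direction $d$ with $\sum_i d_i = 0$ is $(2W\kappa''_d - (\kappa'_d)^2)/(4\sqrt{m}\,W^{3/2})$, and eliminating $(\kappa'_d)^2$ via the stationarity relation does collapse everything to the inequality $(\star)$, i.e.\ $\sum_i d_i^2/q_i > 2m(\sum_i d_i\hat{l}_i)^2$ at every pair $(Q,d)$ with $\langle\nabla B_{\text{sq,KL}}(Q), d\rangle = 0$. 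This isolates the difficulty more sharply than the paper's case analysis, and your identification of the non-convexity region via $\mathrm{Var}_Q(\ln q) > 2W(Q)$ is a genuine addition (the paper could neither prove convexity nor exhibit a counterexample).

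There is, however, a structural gap beyond the one you acknowledge. The Crouzeix sufficient condition must be verified at \emph{every} $Q$ in the interior of the simplex: at a generic, non-critical $Q$ the directions orthogonal to the gradient form a codimension-one subspace of the tangent space, and $(\star)$ must hold for all of them. Your ``decisive observation'' invokes the fixed-point equation \eqref{eqn:qFP_sqKL}, which characterizes only the critical points of $B_{\text{sq,KL}}$, where the entire projected gradient vanishes; only there is $\ln q_i$ affine in $\hat{l}_i$ and only there does ``$(\star)$ for all $d$'' reduce to $\mathrm{Var}_{Q}(\hat{l}) \le 1/(2m)$ for a Gibbs distribution. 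At non-critical $Q$ the Gibbs form is simply unavailable, $\mathrm{Var}_Q(\ln q)$ can be of order $(\ln H)^2$, and the admissible directions are cut out only by one linear equation, so your program there is not carried out at all. What the critical-point analysis would deliver, even if the uniform variance estimate were made rigorous (and, as you concede, it currently is not --- the two-level worst-case bound is heuristic and does not cover all risk profiles), is the weaker statement that every stationary point is a strict local minimum, not quasiconvexity. So, like the paper's argument, yours stops short of a proof --- appropriately, since the statement is a conjecture --- but the restriction of the key step to critical points should be flagged as a missing idea rather than merely a missing estimate.
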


We seek an optimal posterior for $B_{\text{sq, KL}}(Q)$ which minimizes this bound. We use the partial KKT system to derive the fixed point equation of this bound minimization problem.

\subsection{The posterior based on fixed point scheme, $Q^{FP}_{\text{sq,KL}}$}
We can identify the minimizer for the bound minimization problem (24) (in paper) using the KKT system based on the associated Lagrangian function. 
The Lagrangian function for (24) (in paper) can be written as follows:
\begin{multline}
\mathcal{L}_{\text{sq, KL}}(Q,\mu_0,\mu_i) = \sum\limits_{i=1}^{H}\hat{l}_i q_i + \sqrt{\frac{\sum\limits_{i=1}^{H}q_i\ln\frac{q_i}{p_i} + \ln \left( \frac{\mathcal{I}^{K}_{\text{sq}}(m)}{\delta} \right) }{m}} - \mu_0 \left(\sum\limits_{i=1}^H q_i -1 \right) - \sum_{i = 1}^H \mu_i q_i
\end{multline}
Here, $\mu_0 \in \mathbb{R}$ is the Lagrange multiplier for the sum of the posterior weights, and $\mu_i$ is the Lagrange multiplier for the positivity of posterior weight, $q_i$ for all $i = 1, \ldots, H$.

\begin{theorem}
The bound minimization problem (24) (in paper) for the bound $B_{\text{sq, KL}}(Q) =\sum\limits_{i=1}^{H}\hat{l}_i q_i + \sqrt{\frac{\sum\limits_{i=1}^{H}q_i\ln\frac{q_i}{p_i} + \ln \left( \frac{\mathcal{I}^{K}_{\text{sq}}(m)}{\delta} \right) }{m}}$ has a stationary point which can be obtained as the solution to the following fixed point equation:
\begin{equation}
q_i = \frac{p_ie^{ \left(-2\sqrt{m}\hat{l}_i \sqrt{\sum_{i=1}^{H}q_i\ln\frac{q_i}{p_i} + \ln \frac{\mathcal{I}^{K}_{\text{sq}}(m)}{\delta}}\right)}}{\displaystyle\sum\limits_{i =1}^{H} p_i e^{ \left(-2\sqrt{m}\hat{l}_i \sqrt{\sum_{i=1}^{H}q_i\ln\frac{q_i}{p_i} + \ln \left( \frac{\mathcal{I}^{K}_{\text{sq}}(m)}{\delta} \right)} \right)}} 
\end{equation}
\end{theorem}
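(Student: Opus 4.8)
The plan is to characterise a stationary point of the bound minimisation problem \eqref{eqn:BsqKLOP} through the first-order KKT conditions of the stated Lagrangian $\mathcal{L}_{\text{sq, KL}}$, following the same template that worked for the KL-distance bound. Abbreviating the quantity under the radical as $A(Q) := \sum_{i=1}^H q_i \ln\frac{q_i}{p_i} + \ln\frac{\mathcal{I}^{K}_{\text{sq}}(m)}{\delta}$, so that $B_{\text{sq, KL}}(Q) = \mathbb{E}_Q[\hat{l}] + \sqrt{A(Q)/m}$, I would first argue that every stationary point satisfies $q_i > 0$ for all $i$: if some $q_j = 0$ then $\ln q_j = -\infty$, and the stationarity equation then forces the associated multiplier $\mu_j$ to be dual-infeasible, exactly as in the KL-distance derivation. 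Hence each constraint $q_i \geq 0$ is inactive, and complementary slackness gives $\mu_i = 0$ for all $i = 1, \ldots, H$, leaving only the normalisation multiplier $\mu_0$.

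With the $\mu_i$ eliminated, I would use the partial derivative already recorded in the convexity analysis of $B_{\text{sq, KL}}$,
\[
\frac{\partial \mathcal{L}_{\text{sq, KL}}}{\partial q_i} = \hat{l}_i + \frac{1}{2\sqrt{m}}\,\frac{1 + \ln\frac{q_i}{p_i}}{\sqrt{A(Q)}} - \mu_0,
\]
and set it to zero. Solving the resulting relation for $\ln\frac{q_i}{p_i}$ gives
\[
\ln\frac{q_i}{p_i} = 2\sqrt{m}\,\sqrt{A(Q)}\,(\mu_0 - \hat{l}_i) - 1,
\]
so that $q_i = p_i\,\exp\!\big(2\sqrt{m}\sqrt{A(Q)}\,\mu_0 - 1\big)\,\exp\!\big(-2\sqrt{m}\,\hat{l}_i\sqrt{A(Q)}\big)$. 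The first exponential factor is independent of $i$ and thus serves as a normalising constant. Imposing $\sum_{i=1}^H q_i = 1$ determines it as $\big(\sum_{j=1}^H p_j e^{-2\sqrt{m}\hat{l}_j\sqrt{A(Q)}}\big)^{-1}$, and substituting back yields precisely the self-normalised equation \eqref{eqn:qFP_sqKL}. I would stress that this is an implicit rather than closed-form solution: the term $A(Q)$ under the radical depends on the very weights $q_i$ it helps define, so \eqref{eqn:qFP_sqKL} is a genuine fixed-point map $Q \mapsto T(Q)$ whose fixed points coincide with the KKT points.

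The main obstacle I anticipate is not the algebra, which is routine once the KL-distance argument is available, but the legitimacy of performing the KKT analysis on the open domain $int(\Delta^H)$. Specifically, one must confirm that $A(Q) > 0$ throughout, so that $\sqrt{A(Q)}$ and its derivative are well defined; this follows since $KL[Q||P] \geq 0$ and $\ln\frac{\mathcal{I}^{K}_{\text{sq}}(m)}{\delta} > 0$ for the relevant range of $m$ and $\delta$. One must also make the $\mu_j \to -\infty$ boundary argument fully rigorous, ruling out minimisers on faces of the simplex where some $q_j = 0$. Finally, because convexity of $B_{\text{sq, KL}}$ is not established, the KKT conditions are only necessary, so the statement correctly asserts a stationary point rather than a global minimiser.
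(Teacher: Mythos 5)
Your proposal is correct and follows essentially the same route as the paper's own derivation: the positivity argument for the $q_i$ via dual infeasibility of $\mu_j$, complementary slackness to drop the $\mu_i$, stationarity in $q_i$ to obtain $\ln\frac{q_i}{p_i} = 2\sqrt{m}\sqrt{A(Q)}(\mu_0-\hat{l}_i)-1$, and normalisation to eliminate $\mu_0$, yielding the implicit self-normalised fixed-point equation. Your added remarks on $A(Q)>0$ and on the KKT conditions being only necessary are consistent with, and slightly more explicit than, what the paper states.
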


\begin{proof}
Differentiating Lagrange $\mathcal{L}_{\text{lin, KL}}$ with respect to primal variables $q_i$s and dual variable $\mu_0$, we get:
\begin{align}
\frac{\partial \mathcal{L}_{\text{sq, KL}}}{\partial q_i} &= \hat{l}_i + \frac{1}{2\sqrt{m} \sqrt{\sum_{i=1}^{H}q_i\ln\frac{q_i}{p_i} + \ln \left( \frac{\mathcal{I}^{K}_{\text{sq}}(m)}{\delta} \right)}} \cdot \left(1 + \ln \frac{q_i}{p_i} \right) \nonumber\\
&\hspace{7cm} - \mu_0 -\mu_i  \quad \forall i = 1, \ldots, H \label{eqn:KKT1sqKL}\\
\frac{\partial \mathcal{L}_{\text{sq, KL}}}{\partial \mu_0} &= \sum_{i=1}^{H}q_i -1 \label{eqn:mu0KKTsqKL}
\end{align}

We assume that $q_i > 0$ for all $i = 1, \ldots, H$, since otherwise $\ln q_i = \ln(0)$ is undefined. Even if we use fact that $\lim_{x \rightarrow 0^{+}} \ln x = - \infty$, the KKT condition \eqref{eqn:KKT1sqKL} will mean that the dual variable $\mu_i$ is infeasible. Therefore our assumption holds true for a stationary point. Due to complementary slackness conditions, we have $\mu_i q_i = 0$ which implies $\mu_i = 0$ since $q_i > 0$ for all $i = 1, \ldots, H$  (\textit{by assumption}). 

At optimality, the derivatives of the Lagrange function $\mathcal{L}_{\text{sq, KL}}$ should be set to zero. From \eqref{eqn:KKT1sqKL}, we have:
\begin{align}
&\hat{l}_i + \frac{1}{2\sqrt{m} \sqrt{\sum_{i=1}^{H}q_i\ln\frac{q_i}{p_i} + \ln \left( \frac{\mathcal{I}^{K}_{\text{sq}}(m)}{\delta} \right)}} \cdot \left(1 + \ln \frac{q_i}{p_i} \right)- \mu_0 = 0 \quad \forall i = 1, \ldots, H \nonumber\\
\Rightarrow \; &1 + \ln \frac{q_i}{p_i} = 2\sqrt{m}(\mu_0 - \hat{l}_i)  \sqrt{\sum_{i=1}^{H}q_i\ln\frac{q_i}{p_i} + \ln \left( \frac{\mathcal{I}^{K}_{\text{sq}}(m)}{\delta} \right)}  \quad \forall i = 1, \ldots, H  \nonumber\\
\Rightarrow \; & q_i = p_i e^{ \left(2\sqrt{m}(\mu_0 - \hat{l}_i)  \sqrt{\sum_{i=1}^{H}q_i\ln\frac{q_i}{p_i} + \ln \left( \frac{\mathcal{I}^{K}_{\text{sq}}(m)}{\delta} \right)} \right) - 1}  \quad \forall i = 1, \ldots, H  \label{eqn:qderive_sqKL}
\end{align}
Setting the derivative at \eqref{eqn:mu0KKTsqKL} to zero, we have:
\begin{align}
&\sum_{i= 1}^{H} q_i = 1 \nonumber\\
\Rightarrow\; &\sum_{i =1}^{H} p_i e^{ \left(2\sqrt{m}(\mu_0 - \hat{l}_i)  \sqrt{\sum_{i=1}^{H}q_i\ln\frac{q_i}{p_i} + \ln \left( \frac{\mathcal{I}^{K}_{\text{sq}}(m)}{\delta} \right)} \right) - 1} = 1\nonumber\\
\Rightarrow & e^{ \left(2\sqrt{m}\mu_0 \sqrt{\sum_{i=1}^{H}q_i\ln\frac{q_i}{p_i} + \ln \left( \frac{\mathcal{I}^{K}_{\text{sq}}(m)}{\delta} \right)} \right) - 1} \left( \sum_{i =1}^{H} p_i e^{ \left(-2\sqrt{m}\hat{l}_i \sqrt{\sum_{i=1}^{H}q_i\ln\frac{q_i}{p_i} + \ln \left( \frac{\mathcal{I}^{K}_{\text{sq}}(m)}{\delta} \right)} \right)} \right) = 1\nonumber\\
\Rightarrow &e^{ \left(2\sqrt{m}\mu_0 \sqrt{\sum_{i=1}^{H}q_i\ln\frac{q_i}{p_i} + \ln \left( \frac{\mathcal{I}^{K}_{\text{sq}}(m)}{\delta} \right)} \right) - 1} = \frac{1}{\displaystyle\sum\limits_{i =1}^{H} p_i e^{ \left(-2\sqrt{m}\hat{l}_i \sqrt{\sum_{i=1}^{H}q_i\ln\frac{q_i}{p_i} + \ln \left( \frac{\mathcal{I}^{K}_{\text{sq}}(m)}{\delta} \right)} \right)}} \label{eqn:mu0_sqKL} \\
\Rightarrow \; & \mu_0 = \frac{1 - \ln \left(\sum\limits_{i =1}^{H} p_i e^{ \left(-2\sqrt{m}\hat{l}_i \sqrt{\sum_{i=1}^{H}q_i\ln\frac{q_i}{p_i} + \ln \left( \frac{\mathcal{I}^{K}_{\text{sq}}(m)}{\delta} \right)} \right)} \right)}{2\sqrt{m} \sqrt{\sum_{i=1}^{H}q_i\ln\frac{q_i}{p_i} + \ln \left( \frac{\mathcal{I}^{K}_{\text{sq}}(m)}{\delta} \right)}}
\end{align}
Combining the above two equations \eqref{eqn:qderive_sqKL} and \eqref{eqn:mu0_sqKL}, we get the following equation in variable $q_i$s:
\begin{equation}
q_i = \frac{p_ie^{ \left(-2\sqrt{m}\hat{l}_i \sqrt{\sum_{i=1}^{H}q_i\ln\frac{q_i}{p_i} + \ln \frac{\mathcal{I}^{K}_{\text{sq}}(m)}{\delta}}\right)}}{\displaystyle\sum\limits_{i =1}^{H} p_i e^{ \left(-2\sqrt{m}\hat{l}_i \sqrt{\sum_{i=1}^{H}q_i\ln\frac{q_i}{p_i} + \ln \left( \frac{\mathcal{I}^{K}_{\text{sq}}(m)}{\delta} \right)} \right)}} 
\label{eqn:optQ_sqKL}
\end{equation}
Note that the right hand side involves an implicit function of variable $q_i$s. Hence the above is a fixed point equation. 
It can be easily verified from \eqref{eqn:optQ_sqKL} that all $q^{FP}_{i, \text{sq, KL}} > 0$ and they sum up to 1. Hence $q^{FP}_{i, \text{sq, KL}} $ is a feasible solution to the bound minimization problem (24) (in paper). Also, it is derived using the KKT conditions, hence it is a stationary point.
\end{proof}

\section{Optimal PAC-Bayesian posteriors for a finite set of SVM classifiers \label{chap:optQSVM}}

Support vector machines (SVMs) are convex classification algorithms with a regularization parameter $\lambda > 0,$ which controls the trade off between the training error and learner complexity. We want to recommend values of parameter $\lambda$ corresponding to classifiers with `good' generalization performance. To do this, we use the PAC-Bayesian framework. The PAC-Bayesian optimal posterior yields a stochastic SVM that has a tight upper bound on the averaged true risk. A stochastic SVM makes predictions by choosing a $\lambda$ value randomly from a prefixed set of values according to the governing distribution, determining the classifier corresponding to this $\lambda$ value and using this classifier to predict the label of an unknown example. Since PAC-Bayesian posterior is determined on a fixed set of classifiers, we determine beforehand our SVM classifiers for the values in the set of regularization parameter values. A stochastic SVM is preferred over a deterministic SVM since the former is robust to sample biases as illustrated in Table \ref{tab:top30perclambda_mamm} and performs well on an average with high probability, as shown here. 

We report the solver outputs and fixed point (FP) solutions for bound minimization problems arising from different combinations of the distance functions, $\phi$s with KL-divergence measure. While some of them are convex and have a closed form expression for the global optimum, others are non-convex and have a fixed point characterization, which converges to a local minimizer. We observe that fixed point scheme always converges to a local/global minimizer even when the solver fails to solve the bound minimization problem. 

We first describe about the datasets that we have considered for our computations, the scheme used to generate classifiers and compute risk values and then compare the optimal PAC-Bayesians posteriors obatined using the FP scheme and the solver for the different distance functions.

\subsection{Datasets categorization and computation scheme} \label{secn:datacategory.compscheme}
We did the computations on some real datasets with binary classes from UCI repository \cite{UCI:2017}. The details about the number of features, number of examples and class distribution of these datasets are listed in Table \ref{tab:dataset.details}. Care was taken to include datasets with various attributes -- small to moderate number of examples (306 examples to 5463 examples) and small to moderate number of features (3 features to 57 features). We have datasets with various combinations -- small number of features with small number of examples (Bupa and Haberman), small number of features with moderate number of examples (Banknote and Mammographic), moderate number of features with small number of examples (Wdbc and Ionosphere) and moderate number of features with moderate number of examples (Spambase and Waveform). There is an even distribution of balanced datasets, that is, datasets with almost same number of positive and negative examples (Bupa, Mammographic, Banknote and Waveform) and imbalanced datasets (Spambase, Wdbc, Mushroom, Ionosphere, Haberman). These datasets span a variety ranging from almost linearly separable (Banknote, Mushroom and Wave datasets) to moderately inseparable (Wdbc, Mammographic and Ionosphere datasets) to inseparable data (Spambase, Bupa and Haberman datasets).

\begin{table}[]
\centering
\small
\setlength{\tabcolsep}{0.1em}
\begin{tabular}{|c|c|c|c|c|c|c|}
\hline
\textbf{Dataset} & \makecell{ \textbf{Number of} \\ \textbf{features}, $n$} & \makecell{\textbf{Number of } \\ \textbf{examples}}& \textbf{Pos/Neg} & \makecell{ \textbf{Training} \\ \textbf{set size}, $m$} & \makecell{ \textbf{Validation} \\ \textbf{set size}, $v$} & \makecell{ \textbf{Test} \\ \textbf{set size}, $t$} \\ \hline
\textbf{Spambase} & 57 & 4601 & 2788/1813 & 1840 & 1840 & 921 \\ \hline
\textbf{Bupa} & 6 & 345 & 176/169 & 138 & 138 & 69 \\ \hline
\textbf{Mammographic} & 5 & 830 & 427/403 & 332 & 332 & 166 \\ \hline
\textbf{\makecell{Wdbc}} & 30 & 569 & 357/212 & 227 & 227 & 115 \\ \hline
\textbf{Banknote} & 4 & 1372 & 610/762 & 548 & 549 & 275 \\ \hline
\textbf{Mushroom} & 22 (\textit{116 \footnote{after one-hot encoding for categorical features}})  & 5643 \footnote{after removing the rows with missing values from the data}& 3488/2155 & 2257 & 2257 & 1129 \\ \hline
\textbf{Ionosphere} & 34 & 351 & 225/126 & 140 & 140 & 71 \\ \hline
\textbf{Waveform} & 40 & 3308 \footnote{number of examples when class `0' is removed} & 1653/1655 & 1323 & 1323 & 662 \\ \hline
\textbf{Haberman} & 3 & 306 & 225/81 & 122 & 122 & 62\\ \hline
\end{tabular}
\caption[Details of various UCI datasets used for computational experiments]{Details of various UCI datasets used for computational experiments. We list the number of features $n$, total number of examples with distribution into positive and negative classes for each dataset. We also give the number of examples in training, validation and test sets, according to the random partition created by 0.4:0.4:0.2 ratio of the total dataset size.}
\label{tab:dataset.details}
\end{table}

We consider a finite set of SVM regularization parameter values $\Lambda = \lbrace \lambda_i \rbrace_{i = 1}^{H}$, of the values of the regularization parameter, say, between $0$ and an upper bound $\lambda_0 > 0$, since small values of $\lambda_i$'s are preferable. $H$ denotes the number of regularization parameter values used for training the SVMs. We took the set $\Lambda = \lbrace 0.1, 0.11, \ldots, 20\rbrace$ at a granularity of 0.01. The smallest $\lambda$ value in the set is taken to be strictly positive and slightly away from zero. This is because, for very small $\lambda$ values, the corresponding SVMs tend to be in proximity by due to continuity property of SVM classifier with respect to the regularization parameter ($\lambda$), and hence have same/similar error rates. In fact, for infinitesimally small values, say $\lambda < 10^{-8}$, the SVM QP may encounter numerical instabilities. Similarly, very large values of $\lambda$ should be avoided since they yield `bad' classifiers with considerably high error rates.

Each of these datasets was partitioned such that 80\% of the examples formed a composition of training set and validation set (in equal proportion) used for constructing the set $\mathcal{H} = \lbrace h(\lambda_i) | \lambda_i \in \Lambda \; \forall i = 1, \ldots, H \rbrace$ of SVM classifiers and remaining 20\% used for computing their test error rates. The training set size ($m$), validation set size ($v$) and test set size ($t$) for the datasets are given in Table \ref{tab:dataset.details}. Typically, $m:v:t = 0.4:0.4:0.2$. The role of the validation set is to compute the empirical risk $\hat{l}_i$ of the SVM $h(\lambda_i) \in \mathcal{H}$ which will be used for deriving the PAC-Bayesian bound. Training error cannot be considered as empirical risk for a classifier in our set up since PAC-Bayesian theorem requires that the classifiers should be fixed and should not rely on training examples \cite{begin2016pac}. This is needed to define a sample independent, classifier set independent right hand side threshold $\mathcal{I}^{K}_{\phi}(m)$ for the PAC-Bayesian bound which holds uniformly for all samples. 

The classifier set,  $\mathcal{H} = \lbrace h(\lambda_i) | \lambda_i \in \Lambda \; \forall i = 1, \ldots, H \rbrace$, consists of RBF kernel SVMs generated from these datasets with regularization parameter values in the set $\Lambda= \lbrace 0.1, 0.11, \ldots, 20\rbrace$ chosen above. We follow the scheme provided in \cite{begin2016pac, thiemann2016quasiconvexPACB} to generate our classifier set $\mathcal{H}$. A common test set of size $t$ is kept aside beforehand. The remaining subset of the dataset constitutes a training and validation set composition with $m + v$ examples. Each classifier $h(\lambda_i) \in \mathcal{H}$ is trained on $m$ training examples subsampled from this composite set and validated on the remaining $v$ examples. Overlaps between training sets of different classifiers are allowed. Same is true for their validation sets. Any two validation sets have a difference of at least one example, which means that the validation errors on these sets are i.i.d random variables. The validation error of SVM $h(\lambda_i) \in \mathcal{H}$ is taken to be its empirical risk value, $\hat{l}_i$.

Depending on the dataset, these SVMs have different ranges and degrees of variation in their empirical risk values. Generally, these empirical risk values show an increasing trend as the value of $\lambda$ increases, but the rate of growth differs from dataset to dataset. Some datasets show steady increase with stabilized values (Banknote, Haberman, Mushroom and Wave), while others have steep increase and haphazard values (Bupa, Ionosphere and Spambase). Gradual increment might be accompanied by lot of variation (Mammographic dataset) and stabilzed nature may not hold for the whole range of $\lambda$ (Wdbc dataset, with low, stable values for $\lambda \leq 15$ and a heavy variation for $\lambda > 15$) This phenomenon can be captured by variance of the empirical values across its range, but the variance of the empirical risk values across the subintervals of $\Lambda$ is equally important to quantify the rate of increase. For a visual illustration of the variance in the empirical risk values and test error rates of the SVMs that we have constructed on the different UCI datasets, please refer to Figure \ref{fig:emprisk.testerr.BBHIM} and Figure \ref{fig:emprisk.testerr.MSWW}. 


\paragraph{Computational Framework}
SVM QP (with RBF kernels) was implemented using \texttt{ksvm} function in \texttt{kernlab} package \cite{kernlab} in { \em R (version 3.1.3 (2015-03-09))}. The Gaussian width parameter is estimated  by \texttt{kernlab} using \texttt{sigest} function which estimates the 0.1 and 0.9 quantile of distance between the points in the data.

The optimization problem for finding the optimal posterior that minimizes the PAC-Bayesian bound was implemented in AMPL Interface and solved using \texttt{Ipopt} software package {\em (version 3.12 (2016-05-01))} \cite{ipopt}. All the computations were done on a machine equipped with 12 Intel Xeon 2.20 GHz cores and 64 GB RAM.

\begin{figure}
    \includegraphics[width = 0.48\textwidth]{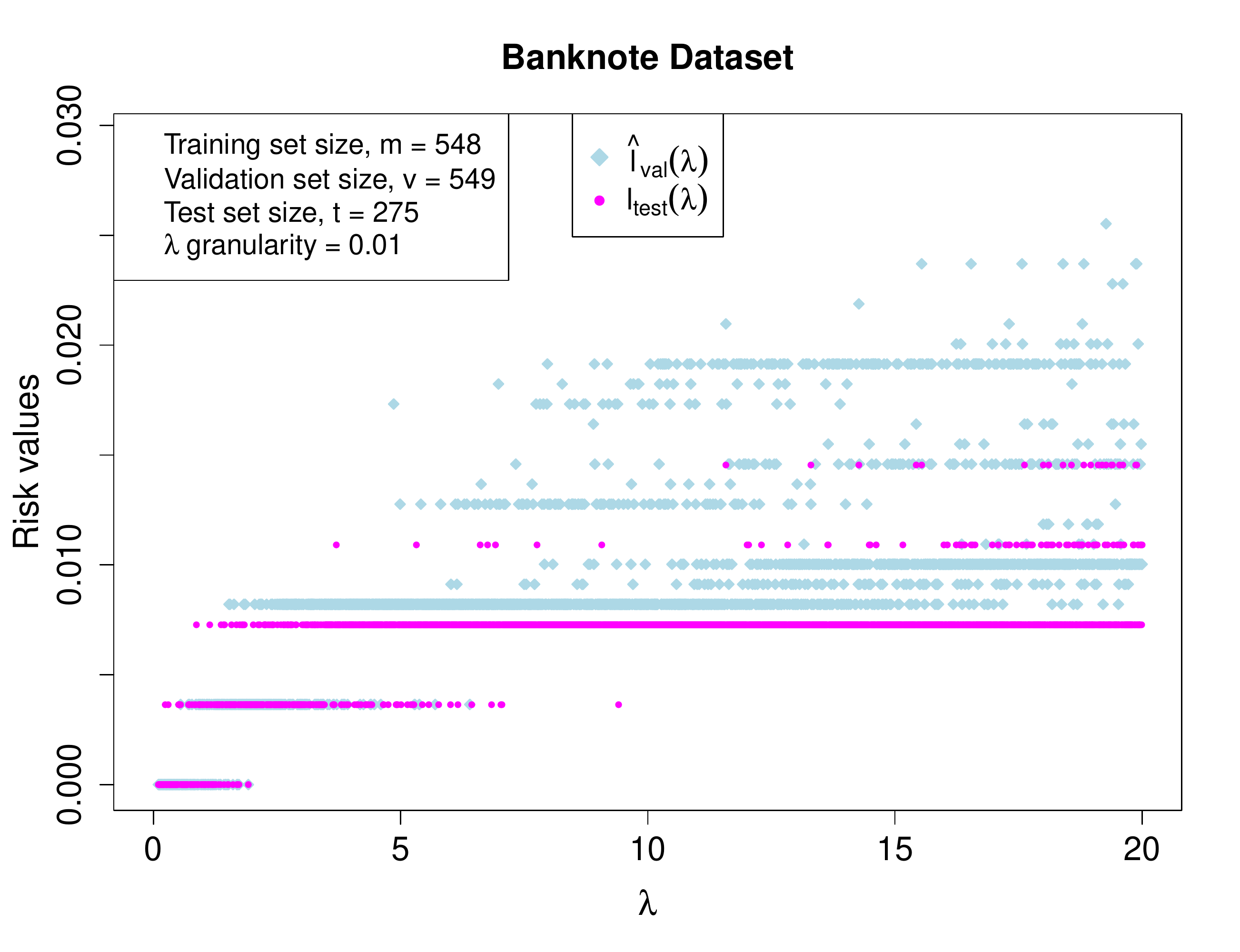}
    \includegraphics[width = 0.48\textwidth]{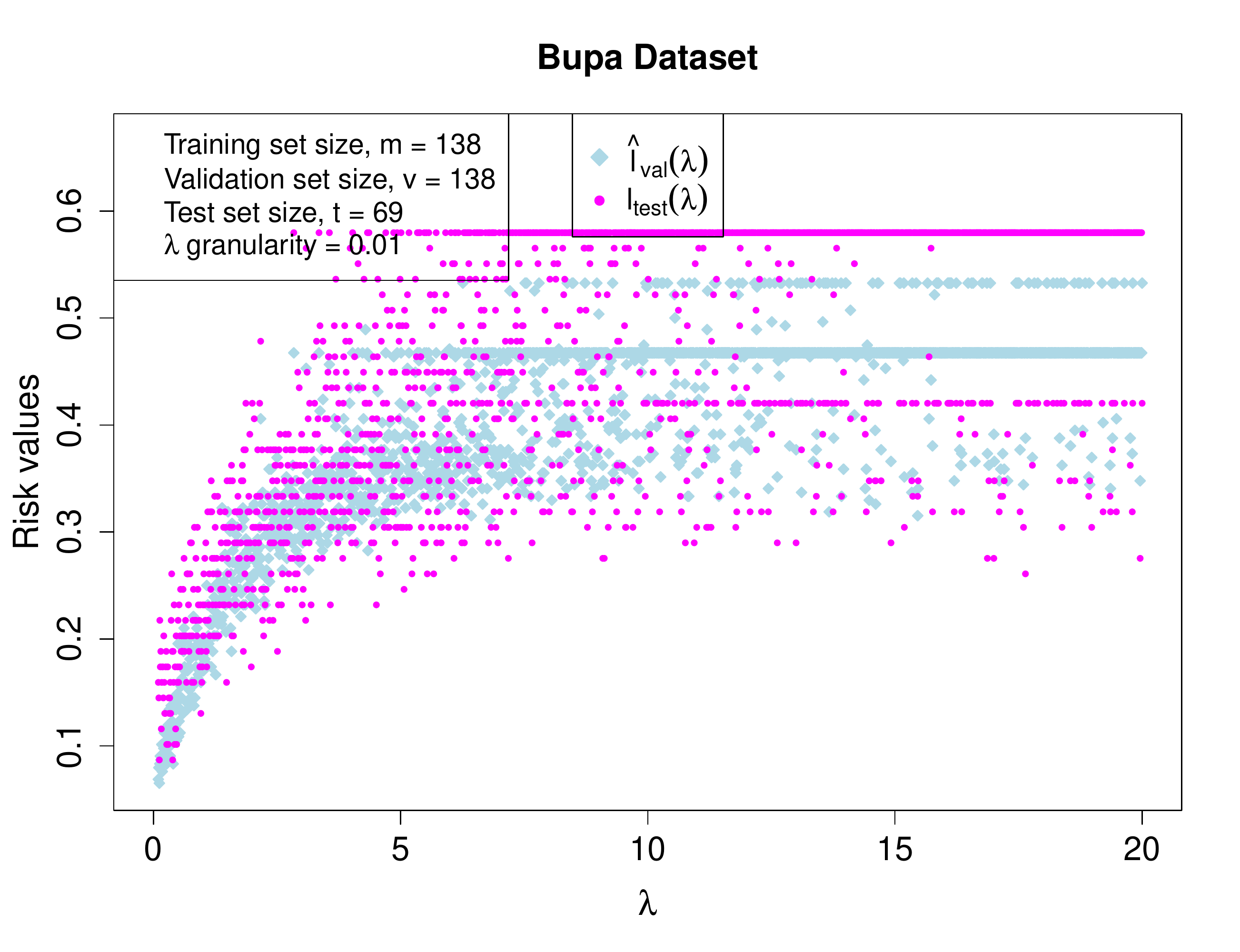}
    \includegraphics[width = 0.48\textwidth]{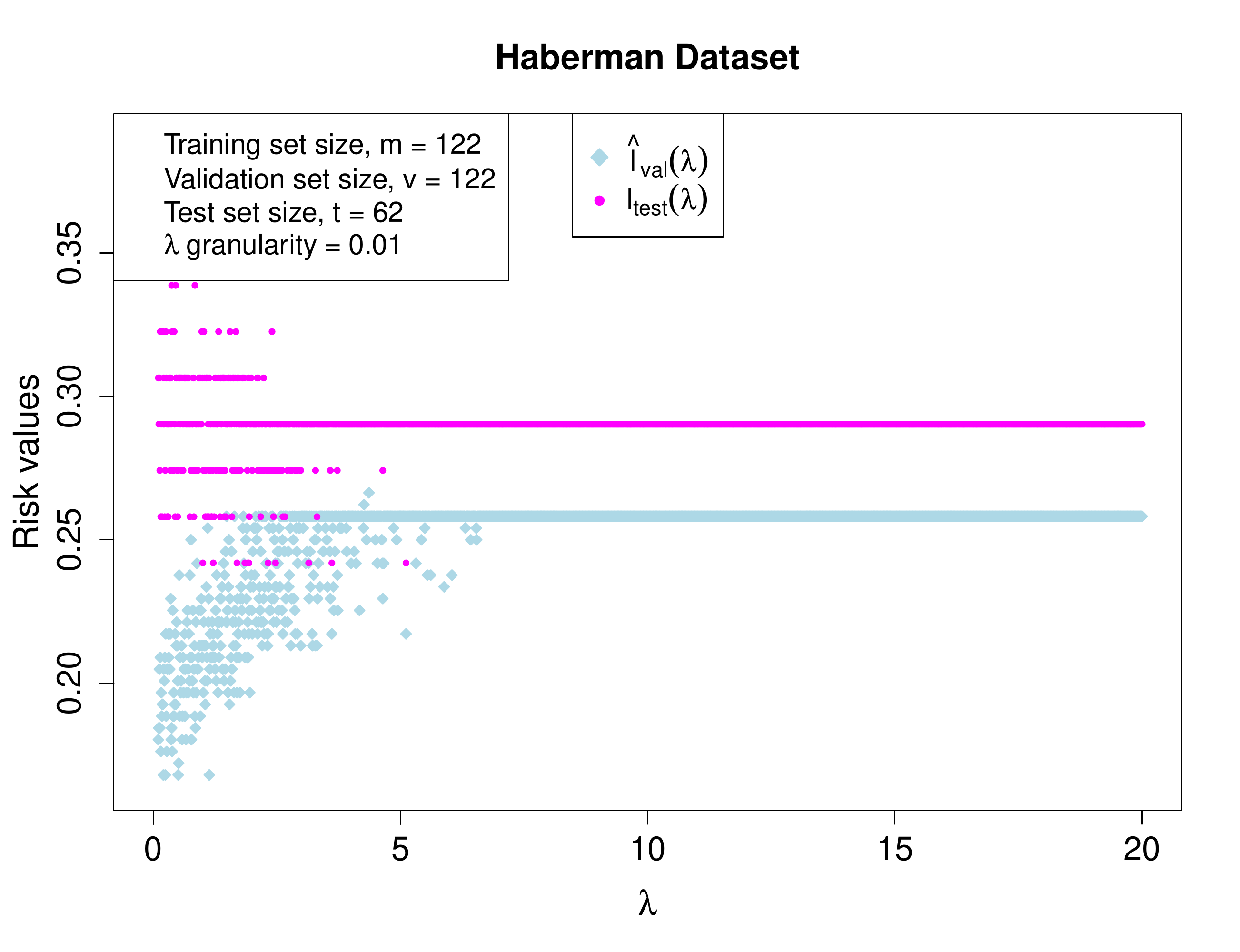}
    \includegraphics[width = 0.48\textwidth]{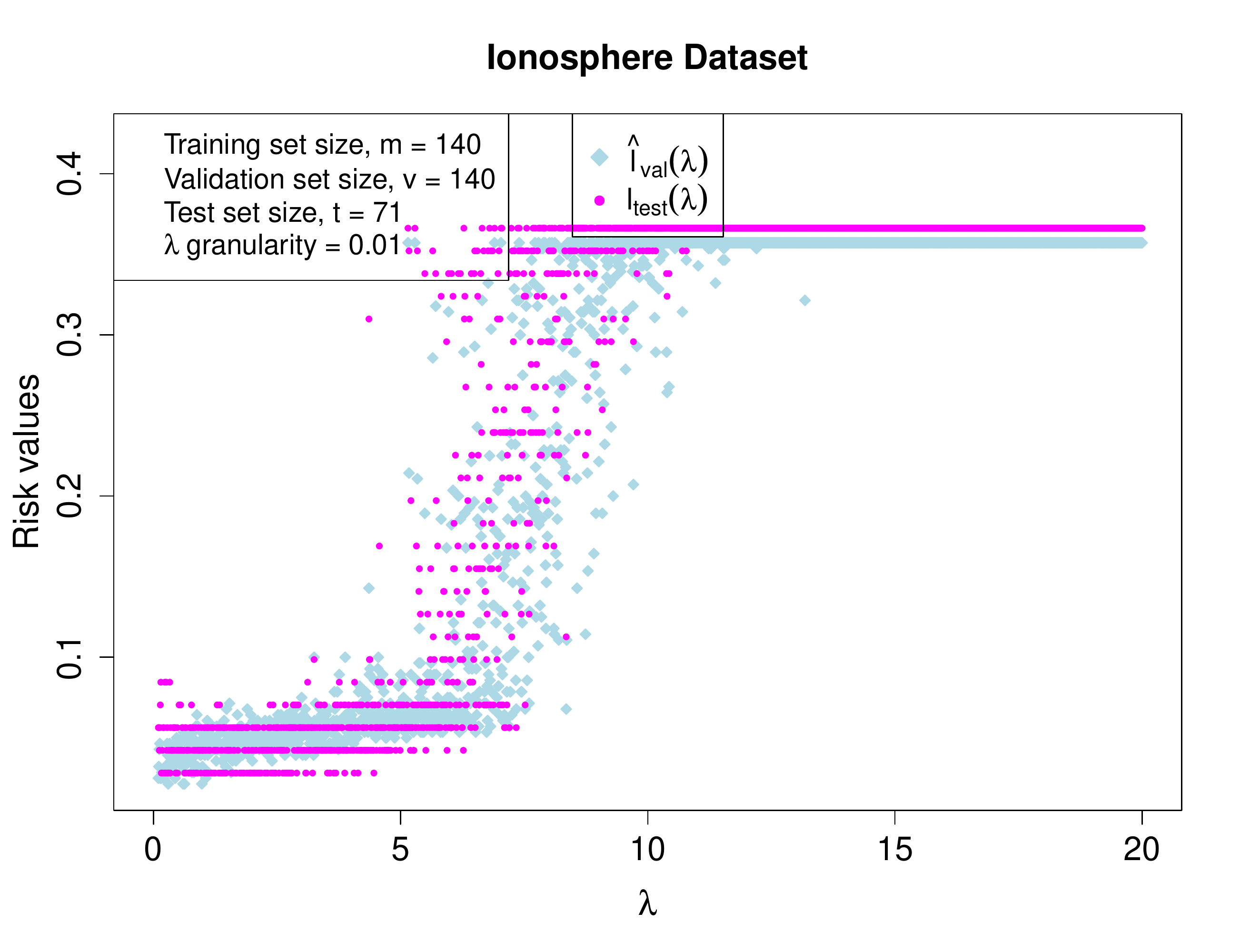}
    \includegraphics[width = 0.48\textwidth]{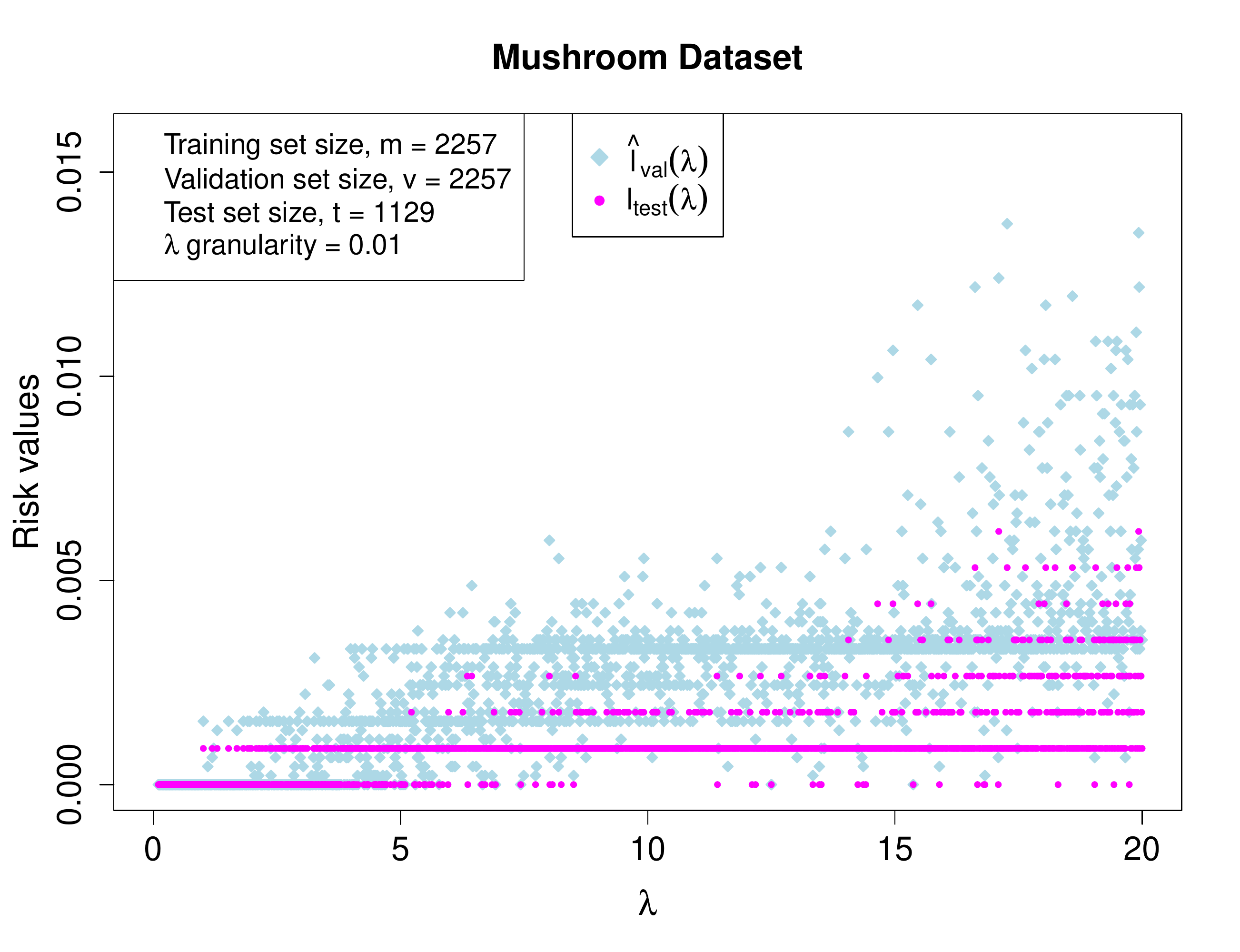}
\caption[Empirical risk values and test error rates of SVMs  on Banknote, Bupa, Haberman, Ionosphere and Mushroom datasets]{Empirical risk values (validation errors) $\hat{l}_{\text{val}}(\lambda)$ and test errors $\hat{l}_{\text{test}}(\lambda)$ of SVMs corresponding to regularization parameter $\lambda \in \Lambda = \lbrace 0.0, 0.11, \ldots, 20 \rbrace$ trained on 40\% of the dataset and validated on the other 40\% of the dataset, with test errors computed on the remaining 20\% of the dataset as described in Section \ref{secn:datacategory.compscheme}.} \label{fig:emprisk.testerr.BBHIM}
\end{figure}

\begin{figure}[]
    \includegraphics[width = 0.48\textwidth]{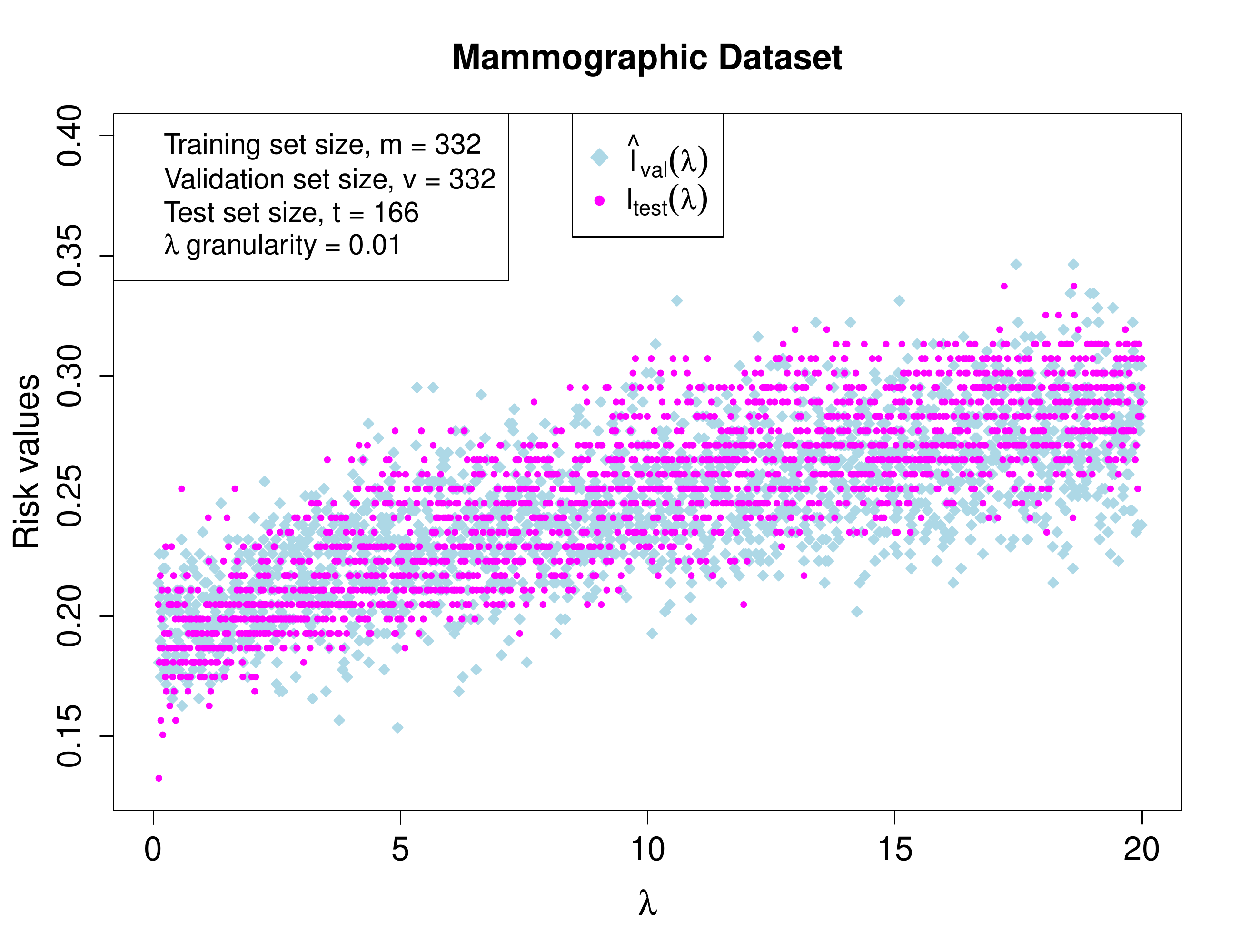}    
    \includegraphics[width = 0.48\textwidth]{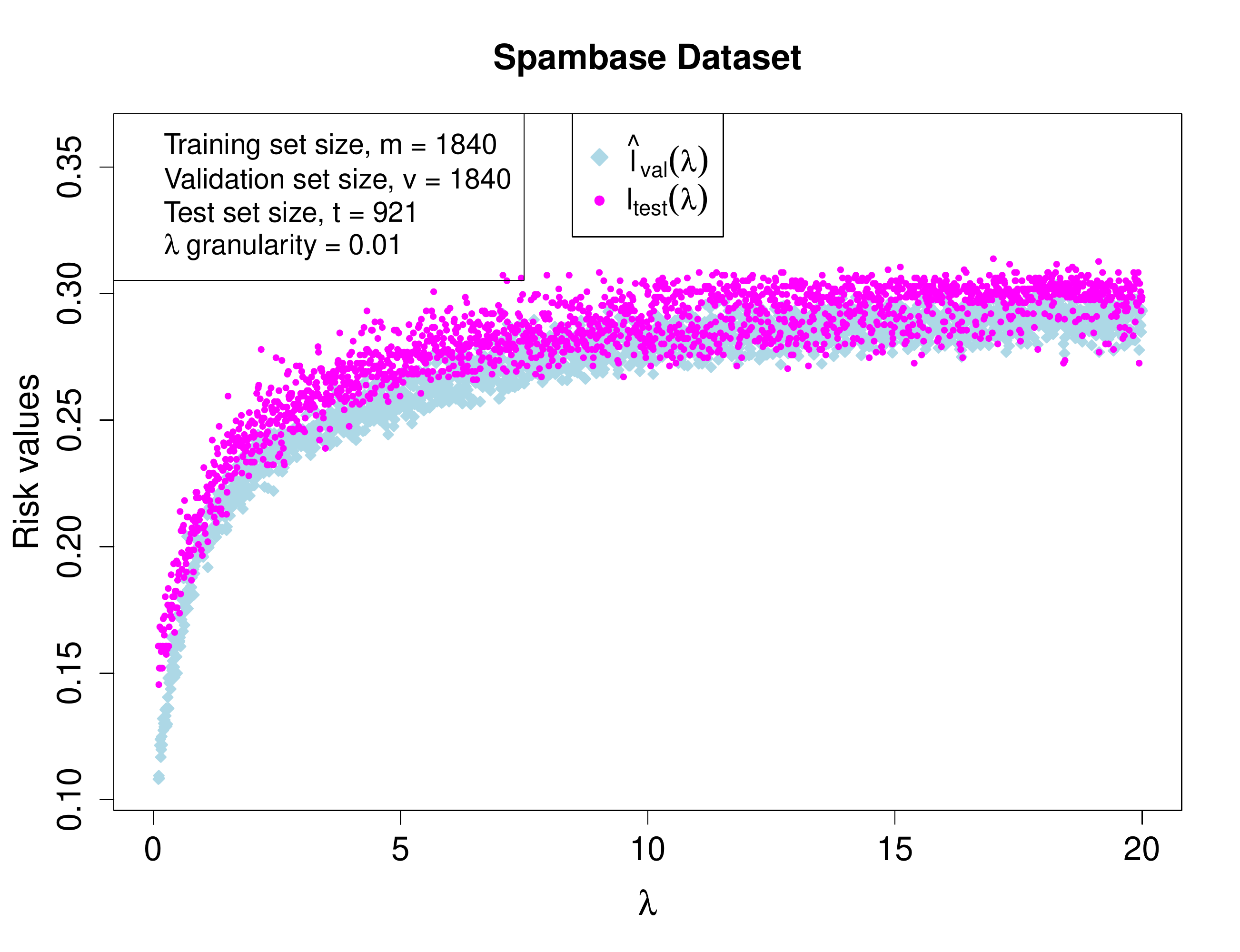}
    \includegraphics[width = 0.48\textwidth]{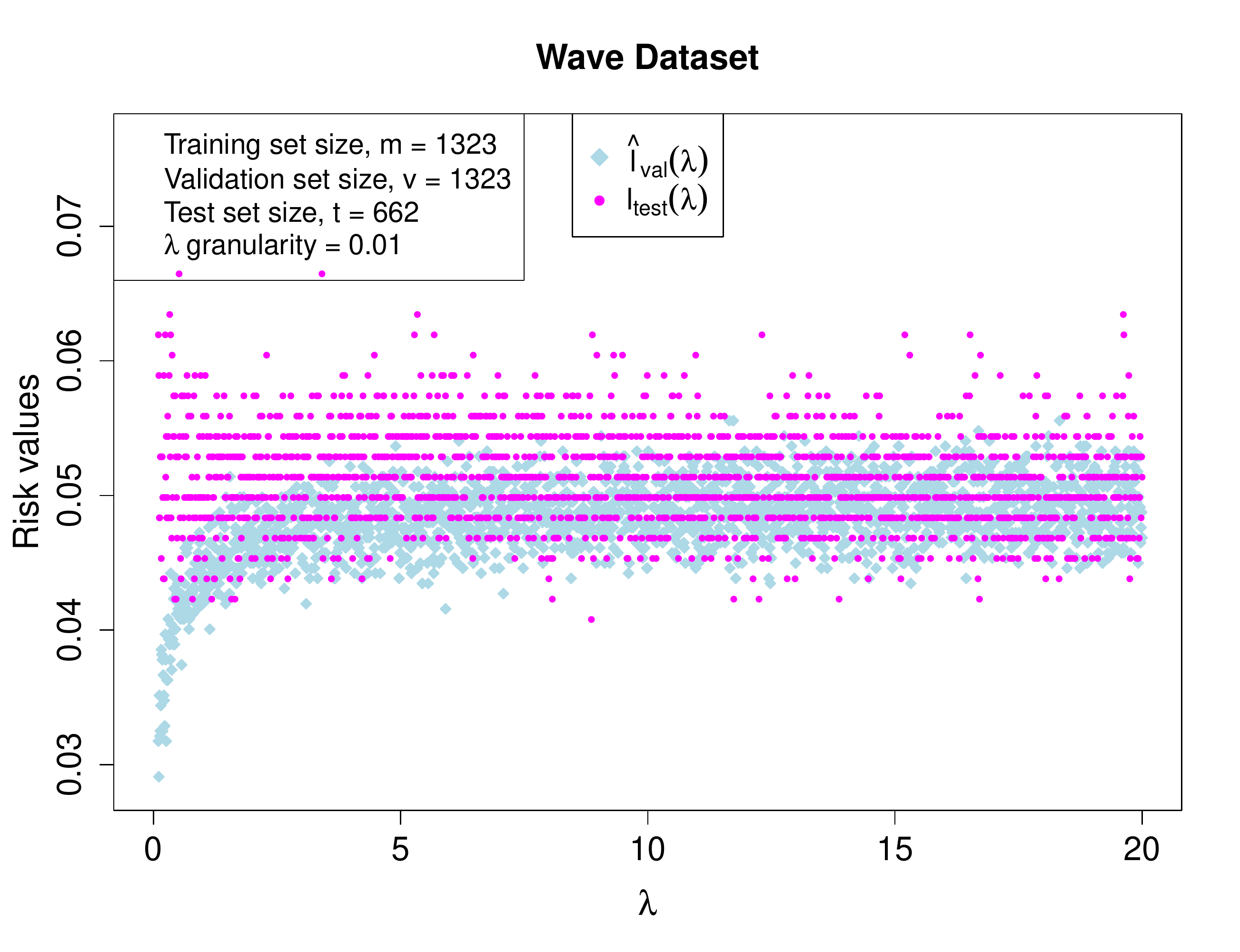}
    \includegraphics[width = 0.48\textwidth]{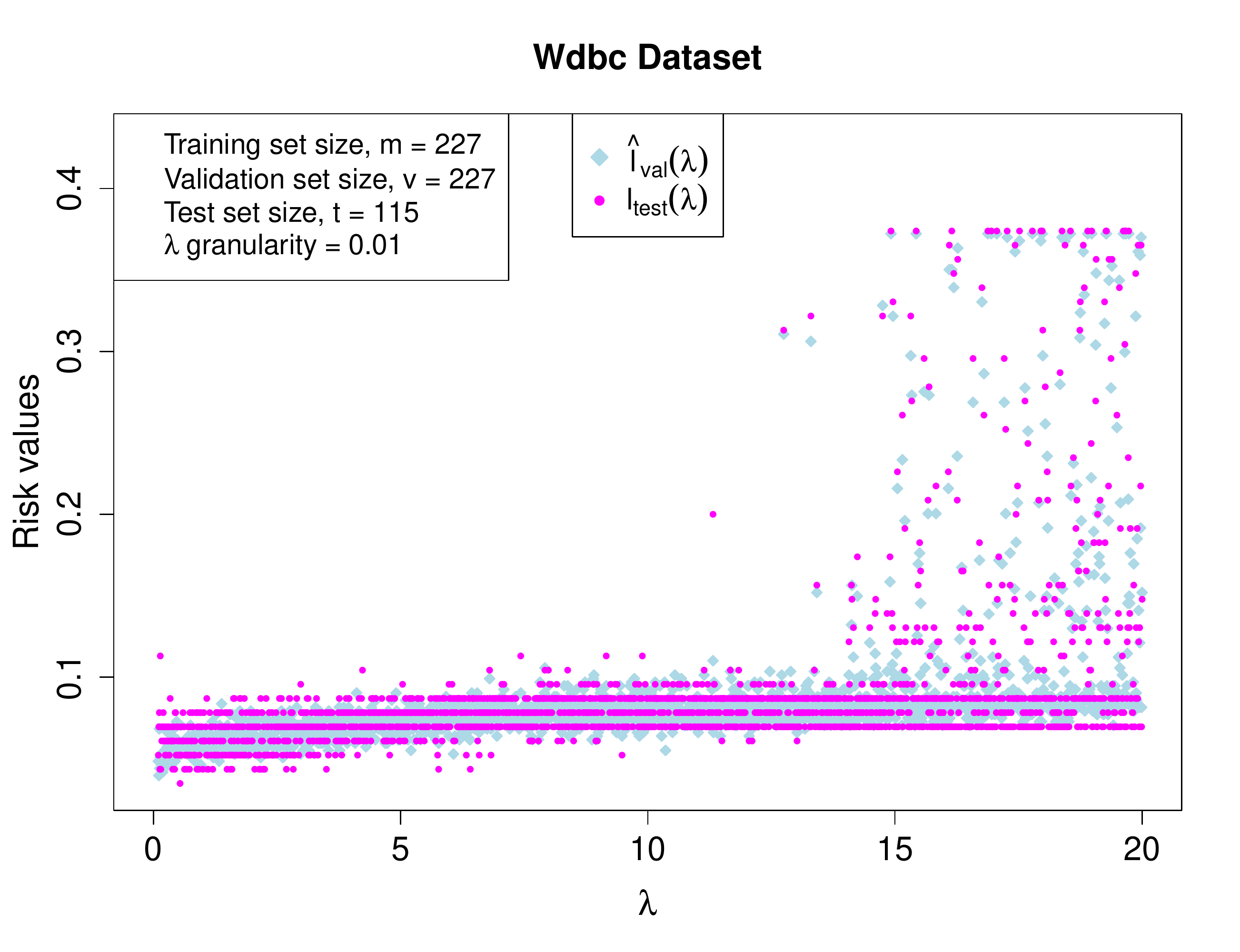}
    \caption[Empirical risk values and test error rates of SVMs on Mammographic, Spambase, Wave and Wdbc datasets]{Empirical risk values (validation errors) $\hat{l}_{\text{val}}(\lambda)$ and test errors $\hat{l}_{\text{test}}(\lambda)$  of SVMs trained on 40\% of the dataset and validated on the other 40\% of the dataset, with test errors computed on the remaining 20\% of the dataset as described in Section \ref{secn:datacategory.compscheme}.} \label{fig:emprisk.testerr.MSWW}
\end{figure}

\section{Comparing various PAC-Bayesian optimal posteriors}
In our analysis with finite classifier set, we have determined optimal posterior minimizing the PAC-Bayesian bounds formed from combinations of different distance functions and divergence measures. These are illustrated in the previous section. Mainly, five distance functions (between the averaged empirical risk and averaged true risk of a stochastic classifier) were considered: KL-divergence as distance function, its Pinsker's approximation and a sixth degree polynomial approximation; linear and squared distances.  

The posterior weight, $q^{\ast}_{i, \phi, \text{div}}$, is negatively proportional to the empirical risk, $\hat{l}_i$, of the classifier in the support set, but the constant of proportionality is different in the two classes. The optimal posteriors corresponding to the class derived using KL-divergence measure exhibit exponentially decreasing weights as the empirical risk increases, and generally have full support (entire classifier set). 

To quantify the level of concentration that these posteriors have on their supports, we use Herfindahl-Hirschman Index (HHI) \cite{Hirschman1945HHI, HHIwiki}, which is perhaps the most widely used measure of economic concentration.  It is defined as the sum of the squares of the market shares of the firms within the industry (sometimes limited to the 50 largest firms), where the market shares are expressed as fractions. For probability distributions, HHI is equivalent to their $\ell_2$-norm.

In our computations, we observe that the posteriors $Q^{\ast}_{\phi, \text{KL}}$ have high HHI, which indicates that they have more concentration around the low values of $\hat{l}_i$s even though they have full support. They display a greedy behaviour towards classifiers (regularization parameter values) yielding low sample errors.  This explains why such posteriors have a good test set performance. This behaviour hints at an underlying regularization done by the divergence function that we use in the PAC-Bayesian bound.

\begin{table}[]
{\footnotesize
\begin{center}
\setlength{\tabcolsep}{0.08em}
\begin{tabular}{|c|c c c c c||c c c c c|}
\hline
\textbf{Dataset} & \multicolumn{5}{c||}{\textbf{PAC-Bayesian Bound}, $B^{\ast}_{\phi, \text{KL}}$ } & \multicolumn{5}{c|}{\textbf{Average Test Error}, $T_{\phi, \text{KL}}$} \\
\cline{2-11}
 & $B^{\ast}_{\text{lin, KL}}$ & $B^{\ast}_{\text{sq, KL}}$ & $B^{\ast}_{\text{P, KL}}$ & $B^{\ast}_{\text{CH, KL}}$ &$B^{\ast}_{\text{kl, KL}}$ & $T_{\text{lin, KL}}$ & $T_{\text{sq, KL}}$ & $T_{\text{P, KL}}$ & $T_{\text{CH, KL}}$ & $T_{\text{kl, KL}}$
\\ \hline
Spambase & NaN & 0.20289 & 0.17671 & 0.18279 & \textbf{0.15737}\textcolor{magenta}{$\star$} & \textbf{0.10206} & \textbf{0.10353} & \textbf{0.10277} &\textbf{ 0.10263} & \textbf{0.10231}\textcolor{magenta}{$\star$}\\\hline
Bupa & \textbf{0.29382} & \makecell{ 0.40292 \\ \textit{0.36536} } & 0.31596 & 0.32896 & \textbf{0.27439}\textcolor{magenta}{$\star$} & \textbf{0.14139} & \makecell{ \textbf{0.15103} \\ \textit{0.15400}} & \textbf{0.14425} & \textbf{0.14269} & \textbf{0.13738}\textcolor{magenta}{$\star$}\\\hline
\makecell{Mammographic} & 0.31857 & \makecell{0.35706 \\\textit{ 0.32592} } &\textbf{ 0.30442} & 0.31596 & \textbf{0.28583}\textcolor{magenta}{$\star$} & \textbf{0.13805} & \makecell{\textbf{0.13934} \\  \textit{\textbf{0.13847}}} & \textbf{0.14015} & \textbf{0.14008} & \textbf{0.13978}\textcolor{magenta}{$\star$} \\ \hline
Wdbc & 0.20369 & \makecell{ 0.25657 \\ \textit{0.21754 }} & 0.19908 & 0.21318 & \textbf{0.14237} &\textbf{ 0.03315} & \makecell{ \textbf{0.03168} \\ \textit{\textbf{0.03168}}} & \textbf{0.03192} &\textbf{ 0.03209} &\textbf{0.03351}\\ \hline
Banknote & 0.13371 & \makecell{0.12752 \\ \textit{0.09855} } & 0.09094 & 0.10241 & \textbf{0.01758} & 0.00030 & \makecell{ 0.00103 \\ \textit{0.00112 }} & 0.00087 &0.00081  &\textbf{5.5e-05}\\ \hline
Mushroom & NaN & 0.06388 & 0.04521 & 0.05226 & \textbf{0.00415} & 2.29e-05 & 6.31e-05 & 5.8e-05 & 5.61e-05 & \textbf{1.1e-05}\\ \hline
Ionosphere & 0.20024 & \makecell{ 0.28773 \\ \textit{ 0.24171}} & 0.21540 & 0.23470 & \textbf{0.13208} & \textbf{0.07174} & \makecell{ \textbf{ 0.07236} \\ \textbf{0.07247}} & \textbf{0.07212} & \textbf{0.07202} & \textbf{0.07059}\\ \hline
Waveform & NaN & 0.12990 & 0.10529 & 0.11355 & \textbf{0.07254} & \textbf{0.05138} & \textbf{0.05231} & \textbf{0.05219} & \textbf{0.05212} & \textbf{0.05152}\\ \hline
Haberman & \textbf{0.37065} & \makecell{ 0.47695 \\ \textit{0.43052}} & \textbf{0.39487} & \textbf{0.40945} & \textbf{0.37762}\textcolor{magenta}{$\star$} & \textbf{0.29485} & \makecell{ \textbf{0.28140} \\ \textit{\textbf{0.28000} }} & \textbf{0.29101} & \textbf{0.29341} & \textbf{0.26900}\textcolor{magenta}{$\star$}\\ 
\hline
\end{tabular}
\end{center}
}
\caption[PAC-Bayesian bounds and averaged test error rates for $Q^{\ast}_{\phi,\text{KL}}$]{\textbf{PAC-Bayesian bounds and averaged test error rates for $Q^{\ast}_{\phi,\text{KL}}$} We compare the bound values $B^{\ast}_{\phi,\text{KL}}$ and average test error rates $T_{\phi,\text{KL}}$ of the optimal posteriors due to five distance functions, $\phi$:  KL-divergence $kl$, its Pinsker's approximation $\phi_\text{P}$ and a sixth degree polynomial approximation $\phi_\text{CH}$; linear $\phi_{\text{lin}}$ and squared distances $\phi_{\text{sq}}$ for $H=500$ SVM classiifers. For large sample size ($m \geq 1028$), the constant $\mathcal{I}^K_\text{lin}(m)$ cannot be computed due to storage limitations for floating point numbers and in that case, $B^{\ast}_{\text{lin,KL}}$ is denoted by NaN. $B^\ast_\text{sq, KL}$ and corresponding $Q^\ast_\text{sq, KL}$ were determined using two values:  $2\sqrt{m}$ (in regular font face) and $\mathcal{I}^K_\text{sq}(m)$ (in italicized font face). $\mathcal{I}^K_\text{sq}(m)$ cannot be computed for datasets with high sample size ($m \geq 1028$) due to storage limitation on floating point numbers. Hence we have only one bound value for such datasets (namely Spambase, Mushroom and Waveform) which is computed using $2\sqrt{m}$. \textcolor{magenta}{$\star$} refers to values obtained using fixed point(FP) equation because the solver \texttt{Ipopt} does not converge to a solution. Lowest 10\% bound values and test error rates for each dataset are denoted in bold face. KL-distance has the tightest bound and lowest 10\% error rate for almost all the datasets, but is computationally expensive and has multiple local minima. Between the approximations $\phi_\text{P}$ and $\phi_\text{CH}$, the latter has lower test error values but a slightly complicated bound evaluation. $\phi_\text{sq}$ is ranked lowest on bound values and test error rates, followed by $\phi_\text{CH}$ and $\phi_\text{P}$. $\phi_\text{sq}$ and $\phi_\text{P}$ are related by a scaling ($\phi_\text{P} = 2\phi_\text{sq}$). $\phi_\text{P}$ provides a lower bound value than that of $\phi_\text{sq}$, but both have comparable test set performances with differences of at most 3\%. The global solution for $\phi_\text{lin}$ has second lowest bound value for all the datasets considered (except for the ones where $m \geq 1028$, namely, Spambase, Mushroom and Waveform, where the bound $B^{\ast}_{\text{lin}, KL}$ cannot be computed) and also has the lowest 10\% test error rates for most of the datasets. All the five distance functions have good generalization performance (lowest 10\% test error values) on most of the datasets considered, except for Bupa dataset and two almost separable datasets, Banknote and Mushroom, where $\phi_\text{lin}$ and $\phi_\text{kl}$ do better than other three $\phi$s.\label{tab:phiKLCompare}}
\end{table}

\begin{table}[]
{\footnotesize
\begin{center}
\setlength{\tabcolsep}{0.1em}
\begin{tabular}{|c|c c c c c||c c c c c|}
\hline
\textbf{Dataset} & \multicolumn{5}{c||}{\textbf{PAC-Bayesian Bound}, $B^{\ast}_{\phi, \text{KL}}$ } & \multicolumn{5}{c|}{\textbf{Average Test Error}, $T_{\phi, \text{KL}}$} \\
\cline{2-11}
 & $B^{\ast}_{\text{lin, KL}}$ & $B^{\ast}_{\text{sq, KL}}$ & $B^{\ast}_{\text{P, KL}}$ & $B^{\ast}_{\text{CH, KL}}$ &$B^{\ast}_{\text{kl, KL}}$ & $T_{\text{lin, KL}}$ & $T_{\text{sq, KL}}$ & $T_{\text{P, KL}}$ & $T_{\text{CH, KL}}$ &$T_{\text{kl, KL}}$
\\ \hline
Spambase & NaN & 0.20046 & 0.17361 & 0.17958 & \textbf{0.15332}\textcolor{magenta}{$\star$} & \textbf{0.15684} & \textbf{0.15392} & \textbf{0.15423} & \textbf{0.15434} & \textbf{0.15487}\textcolor{magenta}{$\star$} \\ \hline
Bupa & 0.27005 & \makecell{0.38167 \\ \textit{0.34547}} & 0.29265 & 0.30537 & \textbf{0.23851}\textcolor{magenta}{$\star$} & \textbf{0.13207} & \makecell{0.145801 \\ \textit{0.14873}} & 0.13631 & 0.13382 & \textbf{0.11998}\textcolor{magenta}{$\star$} \\ \hline
\makecell{Mammographic} & 0.29518 & \makecell{0.34187 \\ \textit{0.31290}}& 0.28790 & 0.29659 & \textbf{0.26063}\textcolor{magenta}{$\star$} & \textbf{0.20462} &\makecell{\textbf{0.21120} \\ \textit{0.21386}} &\textbf{0.20716} & \textbf{0.20628} & \textbf{0.20519}\textcolor{magenta}{$\star$} \\ \hline
Wdbc & 0.20706 & \makecell{0.26000 \\ \textit{0.22122}}& 0.20236 & 0.21646 & \textbf{0.14759}\textcolor{magenta}{$\star$} & \textbf{0.06489} & \makecell{\textbf{0.06901} \\ \textit{0.07052}}& \textbf{0.06650} & \textbf{0.06584} & \textbf{0.06541}\textcolor{magenta}{$\star$}\\ \hline
Banknote & 0.13647 & \makecell{0.13225 \\ \textit{0.10343}}& 0.09538 & 0.10672 & \textbf{0.02051} & 0.00161 & \makecell{0.00561 \\ \textit{0.00592}} & 0.00500 & 0.00469 & \textbf{0.00037} \\ \hline
Mushroom & NaN & 0.06584 & 0.04702 & 0.05399 & \textbf{0.00489} & \textbf{8.92{e-05}} & 0.00066 & 0.00057 & 0.00053 & \textbf{1.39{e-05}}\\ \hline
Ionosphere & 0.20816 & \makecell{0.30151 \\ \textit{0.25884}} & 0.22508 & 0.24011 & \textbf{0.14707}\textcolor{magenta}{$\star$} & \textbf{0.04494} & \makecell{\textbf{0.04781} \\ \textit{0.04899}}& \textbf{0.04393}
& \textbf{0.04553} & \textbf{0.04359}\textcolor{magenta}{$\star$} \\ \hline
Waveform & NaN & 0.12875 & 0.10335 & 0.11103 & \textbf{0.06338} & 0.05847 & \textbf{0.05175} & \textbf{0.05276} & \textbf{0.05345} & 0.05792 \\ \hline
Haberman & \textbf{0.37277} & \makecell{ 0.48385 \\ \textit{0.43977}} & \textbf{0.39769} & 0.41178 & \textbf{0.37998}\textcolor{magenta}{$\star$} & \textbf{0.29157} & \makecell{\textbf{0.29069} \\ \textit{0.29007}} & \textbf{0.29163} & \textbf{0.29162} & \textbf{0.28997}\textcolor{magenta}{$\star$} \\ 
\hline
\end{tabular}
\end{center}
}
\caption{\textbf{Earlier computations for comparison}. We obtain improved bound values and test error rates for optimal posteriors $Q^{\ast}_{\phi, \text{KL}}$ in Table \ref{tab:phiKLCompare} above by restricting the base classifiers generated using $\Lambda = \lbrace 10^{-5}, 5 \rbrace$ obtained as a combination of arithmetic-geometric progression for the interval $(10^{-5}, 0.1)$ and linearly spaced values between 0.1 and 5 at a granularity of 0.01. The test error rates for Spambase (improved from 0.15 to 0.10 across 5 distance functions), Mammographic, Wdbc and Banknote datasets decreased significantly. For other datasets, the test error rates are comparable.}
\end{table}

\begin{sidewaystable}[]
{ \setlength{\tabcolsep}{0.1em}
\footnotesize
\begin{tabular}{|c|c|c|c|c|c|c|c|c|c|c|c|}
\hline
\diagbox{\textbf{Dataset}}{\textbf{H}} & \multicolumn{2}{c|}{\textbf{50}} & \multicolumn{2}{c|}{\textbf{200}} & \multicolumn{2}{c|}{\textbf{500}} & \multicolumn{2}{c|}{\textbf{1000}} & \multicolumn{2}{c|}{\textbf{1990}} \\  \cline{2-11}
(Validation  set size, $v$)& $B^{FP}_{\text{kl, KL}}$ &  $B^{solver}_{\text{kl, KL}}$  &  $B^{FP}_{\text{kl, KL}}$ &  $B^{solver}_{\text{kl, KL}}$ & $B^{FP}_{\text{kl, KL}}$ &  $B^{solver}_{\text{kl, KL}}$ & $B^{FP}_{\text{kl, KL}}$ &  $B^{solver}_{\text{kl, KL}}$  & $B^{FP}_{\text{kl, KL}}$ &  $B^{solver}_{\text{kl, KL}}$ \\ \hline
\makecell{Spambase \\ $(v = 1840)$ } & 0.14726 & 0.147260  & 0.149424 & 0.149424  & 0.15157 & 0.270042(\textbf{E}) & 0.152023 & 0.294836(\textbf{E})  & 0.153324 & 0.314523(\textbf{E})
 \\ \hline
 \makecell{Bupa \\ $( v = 138)$ }  & 0.208330 & 0.208333    & 0.220062 & 0.220065   & 0.227504 &  0.437317(\textbf{E})  & 0.232998 & 0.508671(\textbf{E})   & 0.238509 & 0.576823(\textbf{E})
 \\ \hline
 \makecell{Mammographic \\ $( v = 332)$ } & 0.241706 & 0.241680  & 0.249234 & 0.249235 & 0.253854 & 0.253847  & 0.257411 & 0.302582(\textbf{E})   &0.260632 & 0.335105(\textbf{E})
 \\ \hline
\makecell{Wdbc \\ $( v = 227)$ }  & 0.127827 & 0.127827  & 0.134727 & 0.134714 & 0.139659 & 0.139655  & 0.14363 & 0.143656   & 0.147595 & 0.187134(\textbf{E})
 \\ \hline
\makecell{Banknote \\ $( v = 549)$ } & 0.015278 & 0.015278 & 0.016358 & 0.016356 & 0.018065 & 0.018065   & 0.232998 & 0.513805(\textbf{E})  & 0.238509 & 0.573999(\textbf{E})
 \\ \hline
\makecell{Mushroom \\ $( v = 2257)$ } & 0.004050 & 0.004050 &  0.004050 & 0.004050 & 0.004150  & 0.004150 &0.004517 &0.004517 &0.004882 &0.004883 
 \\ \hline
\makecell{Ionosphere \\ $( v = 140)$ } & 0.119248 & 0.122997(\textbf{M}) & 0.129552 & 0.141167(\textbf{M}) & 0.13658 & 0.136579 & 0.141938 & 0.275581(\textbf{E}) & 0.147074 & 0.404999(\textbf{E}) 
 \\ \hline
\makecell{Waveform \\ $( v = 1323)$ } & 0.058419  & 0.058424 & 0.060210 & 0.060206 & 0.061562 & 0.06157 & 0.062467 & 0.062473 & 0.063376 & 0.063387 
 \\ \hline
\makecell{Haberman \\ $( v = 122)$ } & 0.342978 & 0.350085(\textbf{M}) & 0.356983 & 0.356982 & 0.366412 & 0.407535(\textbf{E}) & 0.373351 & 0.421606(\textbf{E}) & 0.379982 & 0.427411(\textbf{E})  \\ \hline
\end{tabular}}
\caption[Bound values for kl-KL case]{\small \textbf{Bound values for kl-KL case}: Comparing the PAC-Bayesian bounds for the case of KL divergence as distance function with KL divergence measure between prior and posterior distributions. $B^{KKT}_{\text{kl, KL}}$ represents the bound due to the fixed point solution obtained by solving the partial KKT system for the bound minimization problem (7) in the main paper. $B^{solver}_{\text{kl, KL}}$ is the value of the bound output by the solver for the same case. We observe that the bound values are exactly same for certain cases. The fixed point equation always converges to a solution, even when the \texttt{Ipopt} solver is not able to identify a locally optimal solution and throws up an error. These cases are denoted by `\textbf{E}' (Unknown Error) or `\textbf{R}' (Restoration Phase Failed) or `\textbf{M}' (Maximum Number of Iterations Exceeded), as output by the solver.}
\label{tab:Bnd.klKL}
\end{sidewaystable}

\begin{table}[ht]
\centering 
{ \footnotesize
\setlength{\tabcolsep}{0.25em}
\begin{tabular}{ |c | c c c| c c c| }
\hline
\textbf{Dataset} & \multicolumn{3}{c|}{\textbf{PAC-Bayesian Bound}} & \multicolumn{3}{c|}{\textbf{Average Test Error}} \\
\cline{2-7}
  & $B^{FP}_{\text{kl, KL}}$ & Range($B^{CCP}_{\text{kl, KL}}$) & Mean($B^{CCP}_{\text{kl, KL}}$) & $T^{FP}_{\text{kl, KL}}$ & Range($T^{CCP}_{\text{kl, KL}}$) & Mean($T^{CCP}_{\text{kl, KL}}$)
\\ 
\hline
Spambase  & 0.14726 & [0.16632, 0.19290] & 0.18257$\pm$ 0.00301
 &  0.15465 & [0.16412, 0.18537] & 0.17578 $\pm$ 0.00235 \\
Bupa &  0.20833 & [0.23380, 0.26191] & 0.24741 $\pm$ 0.00412 & 0.12502 & [0.14943, 0.18810] & 0.16754 $\pm$ 0.00599 \\
\makecell{Mammographic} & 0.24171 & [0.24760, 0.25558] & 0.25190 $\pm$ 0.00116 & 0.20566 & [0.20665, 0.21793] & 0.21209 $\pm$ 0.00195\\
Wdbc & 0.12782 & [0.13061, 0.13659] &  0.13320 $\pm$ 0.00085 & 0.06630 & [0.05925, 0.07212] &  0.06492 $\pm$ 0.00183 \\
Banknote & 0.01528 & NA & NA & 0.00036 & NA & NA\\
Mushroom & 0.00405 & NA & NA & 0 & NA & NA\\
Ionosphere & 0.11925 & [0.12284, 0.13132] & 0.12631$\pm$ 0.00119 & 0.04409 & [0.03889, 0.05328] & 0.04562 $\pm$ 0.00214\\
Waveform & 0.05842 & [0.06353, 0.06711] & 0.06525 $\pm$ 0.00061 & 0.05749 & [0.05003, 0.05451] & 0.05213 $\pm$ 0.00073\\
Haberman & 0.34298 & [0.34857, 0.36011] & 0.35417 $\pm$ 0.00175 & 0.29257 & [0.28524, 0.30430] & 0.29346 $\pm$ 0.00286\\ 
\hline
\end{tabular}
}
\caption[Comparing bound values and test error rates of FP based posterior with CCP based posterior for kl-KL case]{We compare the bound values and test error rates of the optimal posterior obtained via Fixed Point (FP) scheme and the posterior based on Convex-Concave Procedure (CCP) minimizing the PAC-Bayesian bound $B_{\text{kl, KL}}$ based on KL-distance function with KL-divergence measure. The CCP based posteriors are identified by the bound minimization model described in Section \ref{secn:klKLCCP}. The bound values and test error rates for FP scheme based solution are denoted by $B^{FP}_{\text{kl, KL}}$ and $T^{FP}_{\text{kl, KL}}$. Similarly, the bound values and test error rates of the CCP based posterior are denoted by $B^{CCP}_{\text{kl, KL}}$ and $T^{CCP}_{\text{kl, KL}}$. For computations, we consider SVM classifiers generated on nine datasets from UCI repository \cite{UCI:2017} using the scheme in Section \ref{secn:datacategory.compscheme} for $H=50$ values in $\Lambda = \lbrace 0.1, 0.11, \ldots, \rbrace$. We run the CCP procedure for 1000 different initializations of posterior $Q^0$ (as done in \cite{CCP2016LippBoyd}). The range, mean and standard deviation of the bound values and average test error rates of the CCP based posteriors obtained by these 1000 initializations are tabulated above. We notice that $B^{FP}_{\text{kl, KL}}$ is always better than $B^{CCP}_{\text{kl, KL}}$ and $T^{FP}_{\text{kl, KL}}$ is comparable with mean value of $T^{CCP}_{\text{kl, KL}}$ for different datasets considered. This might be because FP scheme identifies the global minimum for kl-KL based bound minimization problem, whereas CCP converges to a local solution or a stationary point. `NA' denotes the cases where the CCP cannot provide linear approximation to $kl(\mathbb{E}_Q[\hat{l}], r)$ because a subgradient cannot be determined when $\mathbb{E}_Q[\hat{l}]$ takes the boundary value zero. Such cases usually occur for almost separable datasets -- Banknote and Mushroom, where the quantity $\mathbb{E}_Q[\hat{l}] = 0$ for any distribution $Q$ since all $\hat{l}_i$s take value zero for $i = 1, \ldots, 50$.}
\label{tab:Bnd.TestErr.FPnCCP.klKL}
\end{table}

\subsection{Comparison of posterior on full support with that on subset support}
We have shown that linear distance based bound $B_{\text{lin, KL}}(Q)$ has full support when prior is uniform. For other four distance functions, $\phi$s (squared distance, KL-distance, Pinsker's approximation and sixth degree polynomial approximation), we analyze their support set by computations on UCI datasets. For uniform prior on classifier set $\mathcal{H}$, we compare the local minimizers of $B_{\phi, \text{KL}}(Q)$ on $H$-dimensional simplex (allowing for subset support), with the one computed on interior of $H$-dimensional simplex (full support). $H$ denotes the classifier set size and $H^{\ast}$ denotes the optimal support size. We observe that datasets with low and moderate variation in empirical risk values have full support, $H = H^{\ast}$, whereas those with high variation have a smaller support but can be approximated by optimal posterior determined on a full support as reported in Tables \ref{tab:subsetvsfullsupport.sqKL} - \ref{tab:subsetvsfullsupport.CHKL}.

\begin{sidewaystable}
\setlength{\tabcolsep}{0.24em}
{\footnotesize
\begin{tabular}{|c|c|c|c|c|c|c|c|c|c|c|}
\hline
\makecell{Dataset \\ (Validation \\ set size, $v$)} & \makecell{\# Classifiers \\ $H$} & \makecell{Optimal \\ Support \\ Size, $H^{\ast}$} & $B^{solver}_{\text{sq, KL}}(H)$ & $B^{FP}_{\text{sq, KL}}(H^{\ast})$ & $T^{solver}_{\text{sq, KL}}(H)$ & $T^{FP}_{\text{sq, KL}}(H^{\ast})$ & $\Vert Q^{solver}_{\text{sq, KL}}(H) - Q^{FP}_{\text{sq, KL}}(H^{\ast})\Vert_2$ & Time $(H)$ & Time $(H^{\ast})$ \\
\hline
\makecell{Spambase \\ $(v = 1840)$}& 500 & 31 & 0.19625 & 0.19625 & 0.15390 & 0.15389 & 7.66 e-05 & 2.409 s &  0.112 s \\
\hline
\makecell{Bupa \\ $(v = 138)$} & 1000 & 275 & 0.33576 & 0.33576 & 0.14955 & 0.14954 & 0.000103 & 18.58 s & 0.398 s\\
\hline
\makecell{Mammographic \\ $(v = 332)$ }& 1000 & 1000 & 0.30587 & 0.30587 & 0.21461 & 0.21461 & 0.000907 & 13.223 s & 0.407 s \\
\hline
\makecell{Wdbc \\ $(v = 227)$ } & 1990 & 1922 & 0.22121 & 0.22121 & 0.70523 & 0.70523 & 0.000391 & 174.024 s & 0.842 s \\
\hline
\makecell{Banknote \\ $(v = 549)$} & 200 & 200 & 0.09646 & 0.09646 & 0.001756 & 0.001757 & 3.232 e-05 & 0.166 s & 0.04 s\\
\hline
\makecell{Mushroom \\ $(v = 2257)$} & 1990 & 1990 & 0.06584 & 0.06584 & 0.00066 & 0.00066 & 0.000675 & 63.025 s & 0.755 s \\
\hline
\makecell{Ionosphere \\ $(v = 140)$} & 200 & 200 & 0.22720 & 0.22720 & 0.43921  & 0.43291 & 1.146 e-05 & 0.164 s & 0.033 s \\
\hline
\makecell{Waveform \\ $(v = 1323)$} & 1000 & 1000 & 0.12685 & 0.12685 & 0.05200 & 0.05200  & 0.000311 & 10.655 s & 0.276 s \\
\hline
\makecell{Haberman \\ $(v = 122)$} & 500 & 500 & 0.41943 & 0.41943 & 0.28989 & 0.28989 & 0.000737 & 2.048 s & 0.093 s \\
\hline
\end{tabular}
}
\caption[Comparing minimizers of $B_{\text{sq,KL}}(Q)$ on full support versus optimal subset support for uniform prior]{For uniform prior on classifier set $\mathcal{H}$, we compare the (local) minimizer, $Q^{FP}_{\text{sq, KL}}(H^{\ast})$, of the bound function, $B_{\text{sq,KL}}(Q)$ on the whole of $H$-dimensional simplex (allowing for subset support), with the (local) minimizer, $Q^{solver}_{\text{sq, KL}}(H)$, computed on the interior of the $H$-dimensional simplex (full support). $H$ denotes the size of the classifier set considered and $H^{\ast}$ denotes the size of the support set for $Q^{FP}_{\text{sq,KL}}$, the local minimum for (24) in the main paper. We call $H^{\ast}$ as the `optimal support size'. $Q^{FP}_{\text{sq, KL}}(H^{\ast})$ was determined via a linear search among the optimal posteriors with support on the increasing ordered subsets of $\mathcal{H}$. $B^{solver}_{\text{sq,KL}}(H)$ and $B^{FP}_{\text{sq,KL}}(H^{\ast})$ denote the bound values of the two posteriors; and $T^{solver}_{\text{sq,KL}}(H)$ and $T^{FP}_{\text{sq,KL}}(H^{\ast})$ denote their average test error rates. All the computations were done with $\delta = 0.01$. We notice that, in many datasets (for example, Mammographic, Ionosphere and Banknote), $H^{\ast} = H$, indicating that the local minimizer indeed has full support. These datasets have low to moderate variance in the empirical risk values. Whereas the ones with notably high variation in empirical risk values (namely, Spambase, Bupa and Wdbc) have optimizers on a much smaller support size ($H^{\ast} \ll H$). Yet the difference in the bound values and test error rates of the full support posterior, $Q^{solver}_{\text{sq, KL}}(H)$ and the local minimizer $Q^{FP}_{\text{sq, KL}}(H^{\ast})$ is negligible ($O(10^{-6})$). If we compare the posteriors themselves, the $\ell_2$-norm of the difference between them is very small ($O(10^{-4})$). This suggests that the posteriors lie within a small neighbourhood. Considering the computation time taken, we observe that $Q^{FP}_{\text{sq, KL}}(H)$ with full support can be computed within a fraction of a second, even with the linear search, whereas computation for $Q^{solver}_{\text{sq, KL}}(H^{\ast})$ can take anywhere between 0.164 seconds to 174.024 seconds depending on the model parameters -- $H, S_m, \delta, \lbrace l_i \rbrace_{i = 1}^{H}$. We can obtain a very close approximation to $Q^{FP}_{\text{sq, KL}}(H^{\ast})$ by considering a full support posterior obtained by minimizing $B_{\text{sq, KL}}(Q)$ in the interior of the $H$-dimensional probability simplex.}
\label{tab:subsetvsfullsupport.sqKL}
\end{sidewaystable}


\begin{sidewaystable}
\setlength{\tabcolsep}{0.24em}
{\footnotesize
\begin{tabular}{|c|c|c|c|c|c|c|c|c|c|c|}
\hline
\makecell{Dataset \\ (Validation \\ set size, $v$)} & \makecell{\# Classifiers \\ $H$} & \makecell{Optimal \\ Support \\ Size, $H^{\ast}$} & $B^{solver}_{\text{kl, KL}}(H)$ & $B^{FP}_{\text{kl, KL}}(H^{\ast})$ & $T^{solver}_{\text{kl, KL}}(H)$ & $T^{FP}_{\text{kl, KL}}(H^{\ast})$ & $\Vert Q^{solver}_{\text{kl, KL}}(H) - Q^{FP}_{\text{kl, KL}}(H^{\ast})\Vert_2$ & Time $(H)$ & Time $(H^{\ast})$ \\
\hline
\makecell{Spambase \\ $(v = 1840)$}& 500 & 3 & 0.26885 (\textbf{E}) & 0.15115 & 0.24152 (\textbf{E})& 0.15480 & 1.995252 (\textbf{E}) & 0.035 s (\textbf{E}) & 0.403 s\\
\hline
\makecell{Bupa \\ $(v = 138)$} & 1000 & 19 & 0.51500 (\textbf{E}) & 0.23295 & 0.38550 (\textbf{E})& 0.12097 & 1.971075 (\textbf{E}) & 0.064 s (\textbf{E}) & 1.207 s \\
\hline
\makecell{Mammographic \\ $(v = 332)$ }& 1000 & 40 & 0.30370 (\textbf{E}) & 0.25731 & 0.22828 (\textbf{E}) & 0.20505 & 1.929077 (\textbf{E}) & 0.067 s (\textbf{E}) & 1.479 s \\
\hline
\makecell{Wdbc \\ $(v = 227)$ } & 1990 & 613 & 0.18527 (\textbf{E}) & 0.14745 & 0.08870 (\textbf{E})& 0.06536 & 1.757514 (\textbf{E}) & 0.185 s (\textbf{E}) & 4.207 s\\
\hline
\makecell{Banknote \\ $(v = 549)$} & 200 & 110 & 0.01635 & 0.01635 & 0.00037 & 0.00037 & 1.952 e-05 & 0.066 s & 0.063 s \\
\hline
\makecell{Mushroom \\ $(v = 2257)$} & 1990 & 336 & 0.00488 & 0.00488 & 1.399 e-05& 1.318 e-05 & 0.003724 & 13.4.57 s & 2.126 s \\
\hline
\makecell{Ionosphere \\ $(v = 140)$} & 200 & 186 & 0.12955 & 0.12952 & 0.04378 & 0.04378 & 0.002737 & 12.132 s & 0.182 s \\
\hline
\makecell{Waveform \\ $(v = 1323)$} & 1000 & 7 & 0.06247 & 0.06240 & 0.05785 & 0.05794 & 0.024940 & 35.99 s & 1.064 s\\
\hline
\makecell{Haberman \\ $(v = 122)$} & 500 & 180 & 0.40832 (\textbf{E}) & 0.36638 & 0.28768  (\textbf{E})& 0.29161 & 1.718098 (\textbf{E}) & 0.037 s (\textbf{E})& 0.482 s\\
\hline
\end{tabular}
}
\caption[Comparing minimizers of $B_{\text{kl,KL}}(Q)$ on full support versus optimal subset support for uniform prior]{For uniform prior on classifier set $\mathcal{H}$, we compare the (local) minimizer, $Q^{FP}_{\text{kl, KL}}(H^{\ast})$, of the bound function, $B_{\text{kl,KL}}(Q)$ on the whole of $H$-dimensional simplex (allowing for subset support), with the (local) minimizer, $Q^{solver}_{\text{kl, KL}}(H)$, computed on the interior of the $H$-dimensional simplex (full support). $H$ denotes the size of the classifier set considered and $H^{\ast}$ denotes the size of the support set for $Q^{FP}_{\text{kl,KL}}$, the local minimum for (7) in the main paper. We call $H^{\ast}$ as the `optimal support size'. $Q^{FP}_{\text{kl, KL}}(H^{\ast})$ was determined via a linear search among the optimal posteriors with support on the increasing ordered subsets of $\mathcal{H}$. $B^{solver}_{\text{kl,KL}}(H)$ and $B^{FP}_{\text{kl,KL}}(H^{\ast})$ denote the bound values of the two posteriors; and $T^{solver}_{\text{kl,KL}}(H)$ and $T^{FP}_{\text{kl,KL}}(H^{\ast})$ denote their average test error rates. All the computations were done with $\delta = 0.01$. We notice that, if $H$ is large, the solver does not converge to a solution in many datasets (for example, Mammographic, Bupa and Haberman) which have moderate to considerable variance in the empirical risk values. These cases are denoted by the symbol `\textbf{E}'. Whereas the almost separable datasets with low variation in empirical risk values (namely, Banknote, Mushroom, Ionosphere and Waveform) have optimizers on a much smaller support size ($H^{\ast} \ll H$). Yet the difference in the bound values and test error rates of the full support posterior, $Q^{solver}_{\text{kl, KL}}(H)$ and the local minimizer $Q^{FP}_{\text{kl, KL}}(H^{\ast})$ is negligible ($O(10^{-5})$). If we compare the posteriors themselves, the $\ell_2$-norm of the difference between them is very small ($O(10^{-3})$). This suggests that the posteriors lie within a small neighbourhood. Considering the computation time taken, we observe that $Q^{FP}_{\text{kl, KL}}(H)$ with full support can be computed within fractions of a upto 4.207 seconds, even with the linear search. Whereas computation for $Q^{solver}_{\text{kl, KL}}(H^{\ast})$ can take anywhere between 0.066 seconds to 35.99 seconds depending on the model parameters -- $H, S_m, \delta, \lbrace l_i \rbrace_{i = 1}^{H}$. We can obtain a very close approximation to $Q^{FP}_{\text{kl, KL}}(H^{\ast})$ by considering a full support posterior obtained by minimizing $B_{\text{kl, KL}}(Q)$ in the interior of the $H$-dimensional probability simplex.}
\label{tab:subsetvsfullsupport.klKL}
\end{sidewaystable}


\begin{sidewaystable}
\setlength{\tabcolsep}{0.24em}
{ \footnotesize
\begin{tabular}{|c|c|c|c|c|c|c|c|c|c|c|}
\hline
\makecell{Dataset \\ (Validation \\ set size, $v$)} & \makecell{\# Classifiers \\ $H$} & \makecell{Optimal \\ Support \\ Size, $H^{\ast}$} & $B^{solver}_{\text{P, KL}}(H)$ & $B^{FP}_{\text{P, KL}}(H^{\ast})$ & $T^{solver}_{\text{P, KL}}(H)$ & $T^{FP}_{\text{P, KL}}(H^{\ast})$ & $\Vert Q^{solver}_{\text{P, KL}}(H) - Q^{FP}_{\text{P, KL}}(H^{\ast})\Vert_2$ & Time $(H)$ & Time $(H^{\ast})$ \\
\hline
\makecell{Spambase \\ $(v = 1840)$}& 500 & 21 & 0.17065 & 0.17065 & 0.15416 & 0.15416 & 7.594 e-05 & 2.389 s & 0.101 s \\
\hline
\makecell{Bupa \\ $(v = 138)$} & 1000 & 100 & 0.28683 & 0.28683 & 0.13714 & 0.13714 & 7.896 e -05 & 15.441 s & 0.316 s \\
\hline
\makecell{Mammographic \\ $(v = 332)$ } & 1000 & 954 & 0.28207 & 0.28207 & 0.20728 & 0.20728 & 0.000668 & 13.269 s & 0.390 s\\
\hline
\makecell{Wdbc \\ $(v = 227)$ } & 1990 & 1860 & 0.20236 & 0.20236  & 0.06650 & 0.06650 & 0.000249 & 114.228 s & 0.759 s \\
\hline
\makecell{Banknote \\ $(v = 549)$} & 200 & 200 & 0.08909 & 0.08909 & 0.00148 & 0.00148 & 7.181 e-05 & 0.203 s & 0.025 s \\
\hline
\makecell{Mushroom \\ $(v = 2257)$} & 1990 & 1990 & 0.04702 & 0.04702 & 0.00057 & 0.00057 & 0.001312 & 88.313 s & 0.641 s \\
\hline
\makecell{Ionosphere \\ $(v = 140)$} & 200 & 200 & 0.20473 & 0.20473 & 0.04406 & 0.04406 & 0.000186 & 0.230 s & 0.028 s\\
\hline
\makecell{Waveform \\ $(v = 1323)$} & 1000 & 1000 & 0.10161 & 0.10161 & 0.05289 & 0.05289 & 0.000912 & 11.146 s & 0.323 s\\
\hline
\makecell{Haberman \\ $(v = 122)$} & 500 & 500 & 0.38421 & 0.38421 & 0.29159 & 0.29159 & 0.000190 & 2.184 s & 0.103 s \\
\hline
\end{tabular}
}
\caption[Comparing minimizers of $B_{\text{P,KL}}(Q)$ on full support versus optimal subset support for uniform prior]{For uniform prior on classifier set $\mathcal{H}$, we compare the (local) minimizer, $Q^{FP}_{\text{P, KL}}(H^{\ast})$, of the bound function, $B_{\text{P,KL}}(Q)$ on the whole of $H$-dimensional simplex (allowing for subset support), with the (local) minimizer, $Q^{solver}_{\text{P, KL}}(H)$, computed on the interior of the $H$-dimensional simplex (full support). $H$ denotes the size of the classifier set considered and $H^{\ast}$ denotes the size of the support set for $Q^{FP}_{\text{P,KL}}$, the local minimum for (11) in the main paper. We call $H^{\ast}$ as the `optimal support size'. $Q^{FP}_{\text{P, KL}}(H^{\ast})$ was determined via a linear search among the optimal posteriors with support on the increasing ordered subsets of $\mathcal{H}$. $B^{solver}_{\text{P,KL}}(H)$ and $B^{FP}_{\text{P,KL}}(H^{\ast})$ denote the bound values of the two posteriors; and $T^{solver}_{\text{P,KL}}(H)$ and $T^{FP}_{\text{P,KL}}(H^{\ast})$ denote their average test error rates. All the computations were done with $\delta = 0.01$. We notice that, in many datasets (for example, Haberman, Ionosphere and Banknote), $H^{\ast} = H$, indicating that the local minimizer indeed has full support. These datasets have low to moderate variance in the empirical risk values. Whereas the ones with notably high variation in empirical risk values (namely, Spambase, Bupa, Mammographic and Wdbc) have optimizers on a much smaller support size ($H^{\ast} \ll H$). Yet the difference in the bound values and test error rates of the full support posterior, $Q^{solver}_{\text{P, KL}}(H)$ and the local minimizer $Q^{FP}_{\text{P, KL}}(H^{\ast})$ is negligible ($O(10^{-6})$). If we compare the posteriors themselves, the $\ell_2$-norm of the difference between them is very small ($O(10^{-4})$). This suggests that the posteriors lie within a small neighbourhood. Considering the computation time taken, we observe that $Q^{FP}_{\text{P, KL}}(H)$ with full support can be computed within a fraction of a second, even with the linear search, whereas computation for $Q^{solver}_{\text{P, KL}}(H^{\ast})$ can take anywhere between 0.203 seconds to 114.228 seconds depending on the model parameters -- $H, S_m, \delta, \lbrace l_i \rbrace_{i = 1}^{H}$. We can obtain a very close approximation to $Q^{FP}_{\text{P, KL}}(H^{\ast})$ by considering a full support posterior obtained by minimizing $B_{\text{P, KL}}(Q)$ in the interior of the $H$-dimensional probability simplex.}
\label{tab:subsetvsfullsupport.2sqKL}
\end{sidewaystable}


\begin{sidewaystable}
\setlength{\tabcolsep}{0.24em}
{ \footnotesize
\begin{tabular}{|c|c|c|c|c|c|c|c|c|c|c|}
\hline
\makecell{Dataset \\ (Validation \\ set size, $v$)} & \makecell{\# Classifiers \\ $H$} & \makecell{Optimal \\ Support \\ Size, $H^{\ast}$} & $B^{solver}_{\text{CH, KL}}(H)$ & $B^{FP}_{\text{CH, KL}}(H^{\ast})$ & $T^{solver}_{\text{CH, KL}}(H)$ & $T^{FP}_{\text{CH, KL}}(H^{\ast})$ & $\Vert Q^{solver}_{\text{CH, KL}}(H) - Q^{FP}_{\text{CH, KL}}(H^{\ast})\Vert_2$ & Time $(H)$ & Time $(H^{\ast})$ \\
\hline
\makecell{Spambase \\ $(v = 1840)$}& 500 & 18 & 0.17688 & 0.17688 & 0.15428 & 0.15428 & 7.461 e-05 & 2.643 s & 0.171 s \\
\hline
\makecell{Bupa \\ $(v = 138)$} & 1000 & 89 & 0.30002 & 0.30002 & 0.13462 & 0.13461 & 7.857 e -05 & 19.56 s & 0.593 s \\
\hline
\makecell{Mammographic \\ $(v = 332)$ } & 1000 & 892 & 0.29317 & 0.29317 & 0.20647 & 0.20646 & 0.000674 & 14.663 s & 0.706 s\\
\hline
\makecell{Wdbc \\ $(v = 227)$ } & 1990 & 1856 & 0.21646 & 0.21646  & 0.06584 & 0.06584 & 0.000305 & 143.57 s & 1.371 s \\
\hline
\makecell{Banknote \\ $(v = 549)$} & 200 & 200 & 0.10063 & 0.10063 & 0.00141 & 0.00141 & 7.381 e-05& 0.189 s & 0.041 s \\
\hline
\makecell{Mushroom \\ $(v = 2257)$} & 1990 & 1990 & 0.05398 & 0.05398 & 0.00053 & 0.00053 & 0.001625 & 71.778 s & 0.972 s \\
\hline
\makecell{Ionosphere \\ $(v = 140)$} & 200 & 200 & 0.22104 & 0.22104 & 0.04410 & 0.04410 & 0.000139 & 0.223 s & 0.055 s\\
\hline
\makecell{Waveform \\ $(v = 1323)$} & 1000 & 1000 & 0.10940 & 0.10940 & 0.05350 & 0.05350 & 0.000384 & 13.745 s & 0.597 s\\
\hline
\makecell{Haberman \\ $(v = 122)$} & 500 & 500 & 0.39950 & 0.39950 & 0.21655 & 0.21655 & 0.000177 & 2.499 s & 0.181 s \\
\hline
\end{tabular}
}
\caption[Comparing minimizers of $B_{\text{CH,KL}}(Q)$ on full support versus optimal subset support for uniform prior]{For uniform prior on classifier set $\mathcal{H}$, we compare the (local) minimizer, $Q^{FP}_{\text{CH, KL}}(H^{\ast})$, of the bound function, $B_{\text{CH,KL}}(Q)$ on the whole of $H$-dimensional simplex (allowing for subset support), with the (local) minimizer, $Q^{solver}_{\text{CH, KL}}(H)$, computed on the interior of the $H$-dimensional simplex (full support). $H$ denotes the size of the classifier set considered and $H^{\ast}$ denotes the size of the support set for $Q^{FP}_{\text{CH,KL}}$, the local minizer for the bound $B_{\text{CH, KL}}$ in (16) in the main paper. We call $H^{\ast}$ as the `optimal support size'. $Q^{FP}_{\text{CH, KL}}(H^{\ast})$ was determined via a linear search among the optimal posteriors with support on the increasing ordered subsets of $\mathcal{H}$. $B^{solver}_{\text{CH,KL}}(H)$ and $B^{FP}_{\text{CH,KL}}(H^{\ast})$ denote the bound values of the two posteriors; and $T^{solver}_{\text{CH,KL}}(H)$ and $T^{FP}_{\text{CH,KL}}(H^{\ast})$ denote their average test error rates. All the computations were done with $\delta = 0.01$. We notice that, in many datasets (for example, Mammographic, Ionosphere and Banknote), $H^{\ast} = H$, indicating that the local minimizer indeed has full support. These datasets have low to moderate variance in the empirical risk values. Whereas the ones with notably high variation in empirical risk values (namely, Spambase, Bupa, Mammographic and Wdbc) have optimizers on a much smaller support size ($H^{\ast} \ll H$). Yet the difference in the bound values and test error rates of the full support posterior, $Q^{solver}_{\text{CH, KL}}(H)$ and the local minimizer $Q^{FP}_{\text{CH, KL}}(H^{\ast})$ is negligible ($O(10^{-6})$). If we compare the posteriors themselves, the $\ell_2$-norm of the difference between them is very small ($O(10^{-4})$). This suggests that the posteriors lie within a small neighbourhood. Considering the computation time taken, we observe that $Q^{FP}_{\text{CH, KL}}(H)$ with full support can be computed within a fraction of a second, even with the linear search, whereas computation for $Q^{solver}_{\text{CH, KL}}(H^{\ast})$ can take anywhere between 0.223 seconds to 71.778 seconds depending on the model parameters -- $H, S_m, \delta, \lbrace l_i \rbrace_{i = 1}^{H}$. We can obtain a very close approximation to $Q^{FP}_{\text{CH, KL}}(H^{\ast})$ by considering a full support posterior obtained by minimizing $B_{\text{CH, KL}}(Q)$ in the interior of the $H$-dimensional probability simplex.}
\label{tab:subsetvsfullsupport.CHKL}
\end{sidewaystable}


\end{document}